\documentclass{article}

\PassOptionsToPackage{numbers,compress}{natbib}
\usepackage[preprint]{neurips_2026}

\usepackage[utf8]{inputenc} % allow utf-8 input
\usepackage[T1]{fontenc}    % use 8-bit T1 fonts
\usepackage{url}            % simple URL typesetting
\usepackage{booktabs}       % professional-quality tables
\usepackage{amsfonts}       % blackboard math symbols
\usepackage{amsmath}
\usepackage{amsthm}         % theorem/proof environments (load after amsmath)
\usepackage{nicefrac}       % compact symbols for 1/2, etc.
\usepackage{microtype}      % microtypography
\usepackage{xcolor}         % colors
\usepackage{graphicx}
\usepackage{soul}
\usepackage{hyperref}       % hyperlinks (load near the end)
\usepackage{enumitem}
\usepackage{cleveref}       % smart references (must come after hyperref)
\usepackage{wrapfig}
\usepackage{booktabs}   % per \toprule, \midrule, bottomrule  

\usepackage{multirow}
\usepackage{placeins}

\newtheoremstyle{break}
  {3pt}   % space above
  {3pt}   % space below
  {\itshape} % body font
  {}      % indent amount
  {\bfseries} % theorem head font
  {.}     % punctuation after theorem head
  {\newline} % space after theorem head
  {}      % theorem head spec
\theoremstyle{break}
\newtheorem{theorem}{Theorem}

\theoremstyle{definition}
\newtheorem{lemma}[theorem]{Lemma}
\newtheorem{observation}[theorem]{Observation}
\newtheorem{corollary}[theorem]{Corollary}
\newtheorem{assumption}[theorem]{Assumption}
\newtheorem{proposition}[theorem]{Proposition}

\newtheorem{definition}[theorem]{Definition}

\theoremstyle{remark}
\newtheorem*{remark}{Remark}

\title{How Neural Losses Shape VAE Latents}

\author{%
Giorgio Strano$^{1}$\thanks{Equal contribution. Correspondence to: \texttt{strano@di.uniroma1.it}.}
\And
Luca Cerovaz$^{1,2,\ast}$
\And
Michele Mancusi$^{3,\ast}$
\AND
Tommaso Mencattini$^{4}$
\qquad
Emanuele Rodolà$^{1,2}$
\\[3.5ex]
$^{1}$Sapienza University of Rome
\qquad
$^{2}$Paradigma, Inc.
\qquad
$^{3}$Moises Systems, Inc.
\qquad
$^{4}$EPFL
} 

\begin{document}

\maketitle

\begin{abstract}
Modern VAEs are rarely trained with the pointwise likelihood implied by the standard $\beta$-VAE objective. In practice, pointwise reconstruction is often combined with perceptual and adversarial losses, despite a lack of understanding of how this changes the latent dynamics of the model. 
We show that the choice of reconstruction loss reshapes the rate-distortion problem itself, altering both the information content and the geometry of the learned latent space in ways that may be invisible from reconstructions alone. First, we prove and verify empirically that augmenting pointwise reconstruction with neural terms, such as perceptual and adversarial objectives, reduces the amount of information stored in the latent representations. 
Second, we show that neural reconstruction losses systematically change the geometry of the latent space: they make representations more isotropic and distribute uncertainty more evenly across latent dimensions, producing different posterior variance profiles.
These findings highlight how the rate-distortion tradeoff is not a comprehensive lens to understand the behavior of VAEs, and we propose a more mechanistic approach to investigate how the choice of a distortion metric reshapes the optimization problem.
\end{abstract}

\section{Introduction}

Training a VAE implies navigating a tradeoff between reconstruction quality and closeness of the latent space to a simple prior distribution. This is typically done by maximizing the evidence lower bound (ELBO), a two-term objective that balances reconstruction quality against a regularizer (the KL loss), which constrains the latent representation to remain close to a standard Gaussian. This objective admits a natural \textit{rate-distortion} (RD) interpretation \cite{DBLP:conf/icml/AlemiPFDS018}: the reconstruction term acts as a {\em distortion} measure, while the KL divergence to a unit Gaussian acts as an information budget ({\em rate}) that limits how much instance-specific information can be stored in the latent variables. 

By adjusting the weight $\beta$ of the KL term \cite{higgins2017betavae}, one can navigate the RD tradeoff. This process maps out a Pareto-optimal frontier: a boundary where any further reduction in distortion requires a higher rate, and any reduction in rate inevitably increases distortion. From this perspective, $\beta$ serves as a control for data compression, allowing to tune the balance between reconstruction fidelity and latent compactness. 

The standard $\beta$-VAE formulation for a diagonal-Gaussian posterior and Gaussian prior requires a squared error distortion metric. However, in practice, this is almost never the case: most real-world VAEs are trained with a reconstruction loss that is augmented with a complex mixture of perceptual and discriminative neural losses. Because of this, training VAEs remains more an art than a science. 

In this paper, we propose to re-examine from the ground up the role of distortion metrics. Rather than treating them as static implementation choices, we frame the distortion function as a {\em primary} variable in the rate-distortion tradeoff. In particular, we investigate how shifting from pointwise to perceptual and adversarial losses reshapes the Pareto frontier of attainable solutions and the internal geometry of the resulting latent distributions. 

Our main contributions are as follows:

\textbf{Neural losses lower the amount of information in the latents.} We show that common perceptual and adversarial objectives are weaker distortion measures than pixel-wise squared error: after appropriate normalization, their ideal rate-distortion problems require lower or equal rate than the squared error baseline. We then confirm the same qualitative effect in trained VAEs: across losses such as LPIPS \cite{zhang2018unreasonableeffectivenessdeepfeatures}, DINOv2 features \cite{oquab2023dinov2}, and PatchGAN-based adversarial objectives \cite{isola2018imagetoimagetranslationconditionaladversarial}, increasing the neural component of the reconstruction loss consistently reduces the KL reached at convergence, meaning that the learned latent representations store less information.
% Common perceptual and adversarial objectives are weaker distortion measures than pixel-wise squared error in the rate-distortion sense, and therefore admit a lower optimal rate. We prove this formally and verify it empirically: losses such as LPIPS \cite{zhang2018unreasonableeffectivenessdeepfeatures}, DINOv2 features \cite{oquab2023dinov2}, PatchGAN hinge loss \cite{isola2018imagetoimagetranslationconditionaladversarial} reduce the KL divergence reached by the model at convergence, storing less information in the latent representations. 

% We formally prove that neural losses lower the amount of information stored in the latent representations. In practice, this means that all commonly used perceptual and adversarial losses (LPIPS \citep{zhang2018unreasonableeffectivenessdeepfeatures}, DINO features \citep{oquab2023dinov2}, PatchGAN Hinge Loss \citep{isola2018imagetoimagetranslationconditionaladversarial}) lower the optimal rate (KL divergence) reached by the model at convergence.

\textbf{Distortion shapes latent uncertainty.} We show that rate constrains the amount of information but not its allocation: we focus on the individual latent representations, and demonstrate that a fixed information rate does not uniquely determine how uncertainty is distributed across latent dimensions. Instead, the choice of distortion function governs the {\em uncertainty profile} of the posterior (Figure~\ref{fig:anisotropy}): while pointwise squared error concentrates precision into a few highly anisotropic dimensions, we observe that, in practice, perceptual and adversarial losses encourage a more uniform, isotropic distribution of variance across the posterior representation.

\begin{figure}[t]
  \centering
  \includegraphics[width=0.8\linewidth]{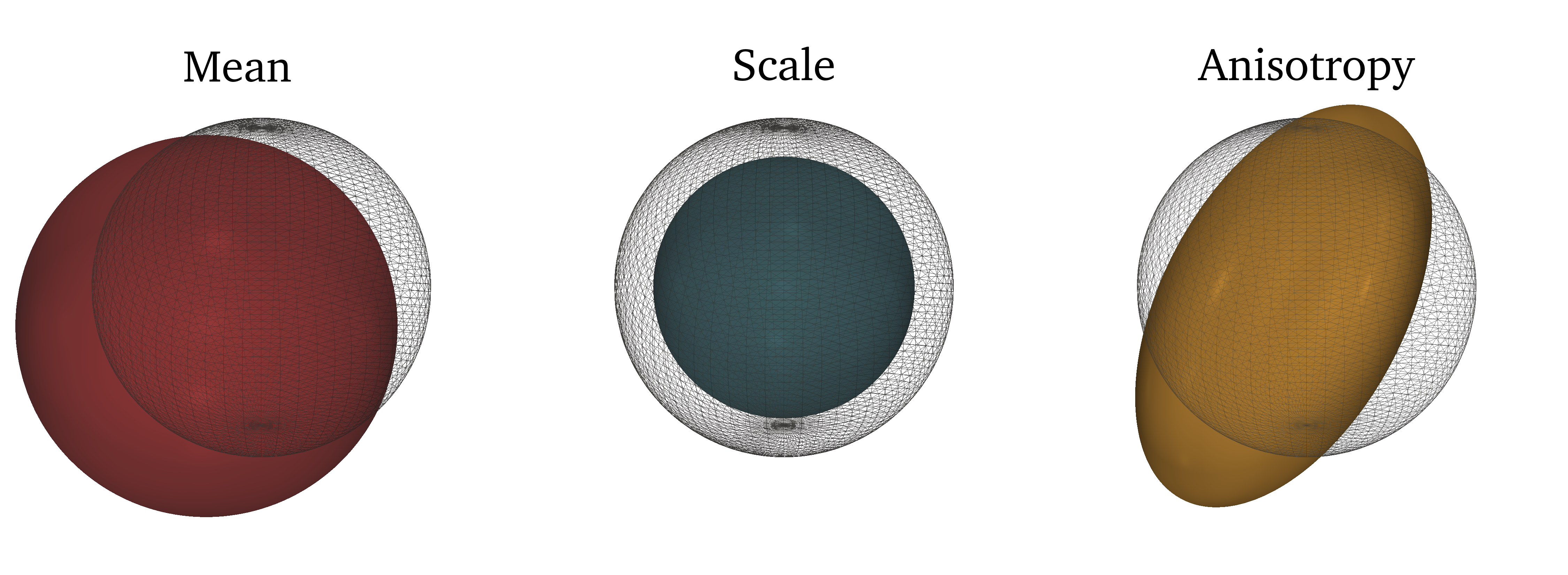}
  \caption{Geometric decomposition of the VAE KL term. The gray wireframe is the prior $\mathcal{N}(0, I)$; the colored surface is an iso-density contour of the posterior $\mathcal{N}(\mu, \Sigma)$.}

  % \caption{\textbf{TODO: add ref, edit caption} \textbf{Geometric decomposition of the VAE KL term.} The gray wireframe denotes the prior $p(z)=\mathcal{N}(0,I)$, while the colored surface shows an iso-density contour of the posterior $q(z|x)=\mathcal{N}(\mu,\Sigma)$. \emph{Mean:} pure translation of the posterior center. \emph{Scale:} isotropic expansion/contraction. \emph{Anisotropy:} shape change at fixed volume.}
  \label{fig:anisotropy}
\end{figure}

\section{Background}
\label{sec:background}

This section briefly reviews the VAE objective and the rate-distortion lens that motivates our research.

\paragraph{Variational autoencoders.}
\label{sec:vae}

VAEs~\cite{DBLP:journals/corr/KingmaW13} are latent-variable generative models that pair a decoder $p_\theta(x \mid z)$ with an amortized variational posterior $q_\phi(z \mid x)$. We use the standard isotropic Gaussian prior and diagonal-Gaussian encoder:
\[
p(z) = \mathcal{N}(0, I_D), \qquad q_\phi(z \mid x) = \mathcal{N}\big(\mu_\phi(x), \mathrm{diag}(\sigma^2_\phi(x))\big).
\]

\paragraph{ELBO and the $\beta$-VAE Objective.}
\label{sec:elbo}

Since the marginal likelihood $p_\theta(x)$ is intractable for neural decoders, VAEs optimize the evidence lower bound. We work with the standard $\beta$-VAE variant~\cite{higgins2017betavae}, written in minimization form as
\begin{equation}
\mathcal{L}_\beta := \underbrace{\mathbb{E}_{x \sim p_\mathcal{D}} \mathbb{E}_{z \sim q_\phi(z \mid x)} \big[-\log p_\theta(x \mid z)\big]}_{D \text{ (distortion)}} + \beta \underbrace{\mathbb{E}_{x \sim p_\mathcal{D}} \big[\mathrm{KL}(q_\phi(z \mid x) \,\|\, p(z))\big]}_{R \text{ (rate)}}.
\label{eq:beta-vae}
\end{equation}
The first term encourages accurate reconstructions; the second limits the information stored in $z$ by pushing the posterior toward the prior. Together, they create a tradeoff bounded by the rate-distortion curve; $\beta$ controls how the model navigates this frontier.  Figure~\ref{fig:rd-curve} shows the exact RD curve obtained by training the same model at different $\beta$ values. 

\paragraph{Rate-distortion interpretation.}
\label{sec:rd}

Shannon rate-distortion theory~\cite{6773024} characterizes the minimum mutual information required to reconstruct a source under a prescribed expected distortion budget. Its Lagrangian relaxation mirrors the $\beta$-VAE objective in Eq.~\eqref{eq:beta-vae}, motivating the interpretation of $\beta$ as a compression knob~\cite{DBLP:conf/icml/AlemiPFDS018} that traces an approximate, parametric RD frontier.

\begin{wrapfigure}[8]{r}{0.45\textwidth}
  \vspace{-0.5em}
  \centering
  \includegraphics[width=0.4\textwidth]{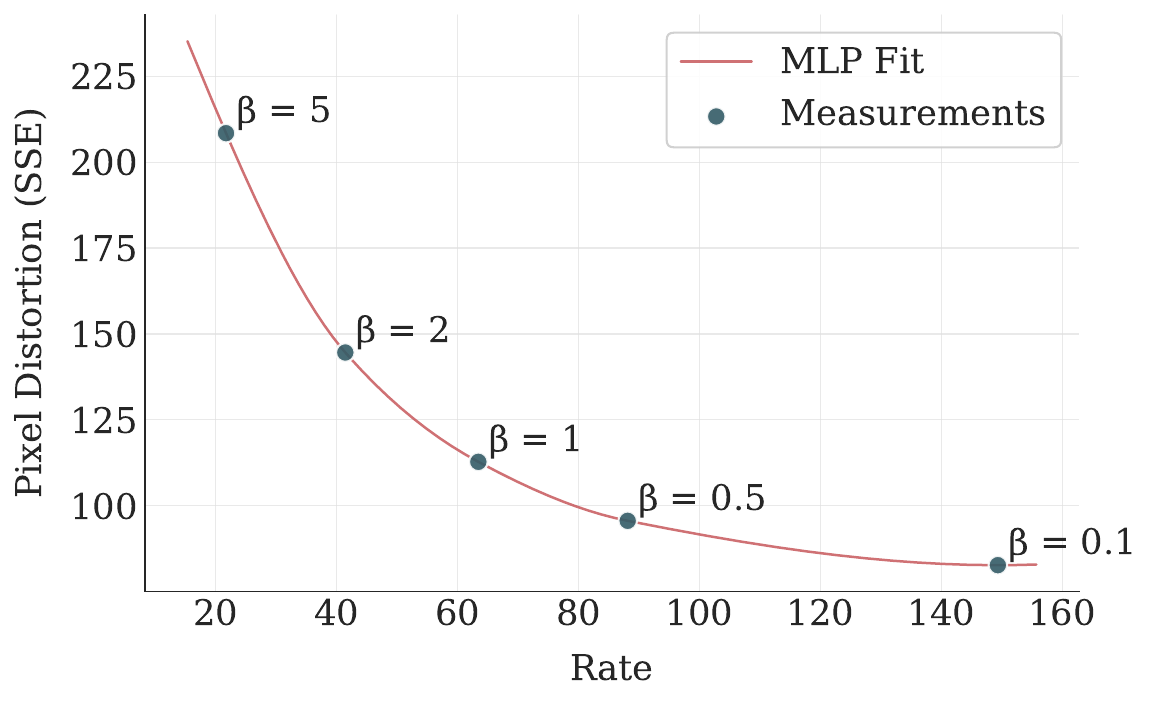}
  \caption{Rate-Distortion curve traced by $\beta$.}
  % \caption{The rate-distortion curve obtained by training the same model with the ELBO objective at different values of $\beta$. Higher $\beta$ implies lower rate at the cost of higher distortion; lower $\beta$ allows lower distortion at the cost of more information stored in the latents.}
  \label{fig:rd-curve}
  % \vspace{-1em}
\end{wrapfigure}

If the decoder likelihood is Gaussian with fixed variance, the distortion term reduces to scaled squared error $\|x - p(x | z)\|_2^2$. More broadly, a chosen likelihood family induces a particular reconstruction loss. In modern practice, this term is often augmented or replaced with perceptual or adversarial losses, breaking the standard pixel-squared-error assumption that Section~\ref{sec:claim1} re-examines.
\\ 

% \vspace{20pt}

\section{Neural losses lower the optimal rate}
\label{sec:claim1}

Modern VAEs rarely rely on pure pointwise reconstruction losses. Instead, they often augment or replace pixel losses with perceptual or adversarial objectives. 
In this section we study, both theoretically and empirically, how these neural objectives influence the optimal \emph{rate} of the VAE: the amount of information stored in the latent representations.
% In this section we separate two related claims. 
% First, at the Shannon rate-distortion level, common neural reconstruction terms are weaker than pixel SSE: after accounting for scale and additive constants, their induced RD curves are no larger than the pixel-SSE RD curve. Second, we verify empirically that the same effect appears in finite VAE training: when loss magnitudes are normalized and \(\beta\) is held fixed, increasing the neural component of the reconstruction loss systematically lowers the KL rate achieved at convergence.

\subsection{Rate-distortion analysis}\label{sec:claim1-theory}
% We focus our proof on two specific losses: MSE between features extracted from a pretrained VGG (as a perceptual example) and PatchGAN Hinge Loss (as a discriminative one). We explain in appendix \textbf{TODO: CITE} how these proofs easily extend to most other neural losses like LPIPS, DINO features, and similar.

% We proceed by:
%     \begin{enumerate}[label=(\roman*), leftmargin=20pt, topsep=0pt]
%         \item observing how a weaker distortion (under point-wise affine domination) lowers the RD curve;
%         \item proving that a perceptual loss based on VGG features is weaker than squared error;
%         \item proving that a discriminative loss based on a GAN network is weaker than squared error.
%     \end{enumerate}
% We prove the Shannon-level statement in three steps. We first introduce a simple sufficient condition under which one distortion has an RD curve no larger than another. We then show that feature-space squared error satisfies this condition under a Lipschitz assumption on the feature map. Finally, we apply the same idea to discriminator-based generator losses by converting the generator score into a shifted nonnegative pairwise distortion. The empirical subsection then tests whether this ideal RD ordering is reflected in trained VAEs.
In this section we prove formally that two of the most common neural losses used in VAE training are \emph{weaker} than pixel-squared error. Therefore, they induce \emph{lower RD curves}, which means they favor storing \emph{less information} in the latents. 

We focus on the MSE between VGG features as a perceptual objective, and on hinge loss with PatchGAN \cite{isola2018imagetoimagetranslationconditionaladversarial} as an adversarial one. We explain in Appendix~\ref{app:generalization} how these results easily extend to most neural losses under standard assumptions.

\paragraph{Weakness of distortions and RD ordering.}
We compare reconstruction losses through their induced Shannon RD functions.

Let
\[
R_d(\Delta)
:=
\inf_{P_{\hat X|X}:\,\mathbb{E}[d(X,\hat X)]\le \Delta}
I(X;\hat X)
\]
for a distortion function \(d\) and distortion budget \(\Delta\ge 0\), where the infimum is over
all reconstruction kernels \(P_{\hat X|X}\).

% Let 
% \begin{equation}
% R_d(\Delta):=\inf_{r:\,\mathbb{E}[d(X,\hat X)]\le \Delta} I(X;\hat X) 
% \end{equation}
% for a distortion function $d$ and distortion budget $\Delta\ge 0$.

% This is the ideal information-distortion tradeoff associated with the loss \(d\). In the experiments below we measure the VAE KL term, which is the standard variational proxy for rate; the Shannon RD function is used here because it gives a clean model-independent way to compare distortion functions.

We use pointwise affine domination to formalize the idea that one distortion function is less restrictive than another: if \(d_2\) is always bounded by a scaled and shifted version of \(d_1\), then every reconstruction that is good under \(d_1\) is automatically good under \(d_2\), after changing the budget accordingly.

% \begin{definition}[Weakness as affine domination]
% For $c>0$, $b\ge0$, we say $d_2$ is \emph{weaker} than $d_1$ if it is dominated by $d_1$ up to scale and shift, and write 
% %
% \begin{equation}
% d_2 \preceq_c^b d_1  \quad \textrm{if} \quad d_2(x,\hat x)\le c\,d_1(x,\hat x) + b
% \end{equation}
% %
% for all $(x,\hat x)$.
% \end{definition}

\begin{definition}[Weakness as affine domination]\label{def:weak}
For \(c>0\) and \(b\ge 0\), we say that \(d_2\) is weaker than \(d_1\), and write \(d_2 \preceq_{c,b} d_1\), if
\begin{equation}
d_2(x,\hat{x}) \le c\,d_1(x,\hat{x}) + b
\end{equation}
for all \((x,\hat{x})\). When \(b=0\), we simply write \(d_2\preceq_c d_1\).
\end{definition}

% This definition is intentionally only a sufficient condition. It does not claim that \(d_2\) is pointwise smaller than \(d_1\) on the same numerical scale; rather, it says that \(d_2\) cannot distinguish reconstructions more finely than \(d_1\) once scale and additive slack are accounted for. This pointwise relation will translate directly into an ordering of the corresponding RD functions.

Now comparing distortion functions becomes a simple feasible-set argument. If satisfying a budget under \(d_1\) automatically satisfies the corresponding budget under \(d_2\), then the infimum of mutual information under \(d_2\) cannot be larger.

% This pointwise domination implies that any channel satisfying a budget on $d_1$ automatically satisfies a corresponding budget on $d_2$. Consequently, the information rate required for the weaker metric is upper-bounded by the rate of the stronger:

\begin{observation}[RD ordering under weakness]
% If $d_2 \preceq_c^b d_1$, then for all $\Delta\ge b$,
If \(d_2 \preceq_{c,b} d_1\), then for all \(\Delta \ge b\),
\begin{equation}
% R_{d_2}(\Delta)\le R_{d_1}\left(\frac{\Delta - b}{c}\right).
R_{d_2}(\Delta)
\le
R_{d_1}\!\left(\frac{\Delta-b}{c}\right).
\end{equation}
% Equivalently, defining the normalized distortion $\tilde d_2:=\frac{d_2 - b}{c}$, we have
Equivalently, for every \(\Delta\ge 0\),
\begin{equation}
% R_{\tilde d_2}(\Delta)\le R_{d_1}(\Delta)\qquad \forall \Delta\ge 0.
R_{d_2}(c\Delta+b) \le R_{d_1}(\Delta).
\end{equation}
In the special case \(b=0\), this can also be written as
\begin{equation}
    R_{d_2/c}(\Delta) \le R_{d_1}(\Delta).
\end{equation}

\end{observation}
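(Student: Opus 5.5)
The argument is a direct inclusion of feasible sets. Fix \(\Delta\ge b\) and set \(\Delta' := (\Delta-b)/c \ge 0\). Take any reconstruction kernel \(P_{\hat X|X}\) that is feasible for the problem defining \(R_{d_1}(\Delta')\), i.e.\ \(\mathbb{E}[d_1(X,\hat X)]\le \Delta'\). By Definition~\ref{def:weak}, \(d_2(x,\hat x)\le c\,d_1(x,\hat x)+b\) holds pointwise. Taking expectations under the joint law of \((X,\hat X)\) induced by this same kernel, and using monotonicity and linearity of expectation (with \(c>0\)), gives
\[
\mathbb{E}[d_2(X,\hat X)] \le c\,\mathbb{E}[d_1(X,\hat X)] + b \le c\Delta' + b = \Delta .
\]
Hence the kernel is also feasible for the problem defining \(R_{d_2}(\Delta)\). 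The objective \(I(X;\hat X)\) depends only on the kernel and the source, not on the distortion, so the feasible set for \(R_{d_1}(\Delta')\) is contained in the feasible set for \(R_{d_2}(\Delta)\). An infimum over a larger set can only be smaller, which yields \(R_{d_2}(\Delta)\le R_{d_1}((\Delta-b)/c)\). If the \(d_1\)-feasible set is empty, the right-hand side is \(+\infty\) and the inequality is trivial.

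The second form follows by substituting \(\Delta \mapsto c\Delta+b\) for arbitrary \(\Delta\ge 0\). The condition \(c\Delta+b\ge b\) then holds automatically, and \((c\Delta+b-b)/c=\Delta\). For the case \(b=0\), note that \(\mathbb{E}[(d_2/c)(X,\hat X)]\le\Delta\) holds exactly when \(\mathbb{E}[d_2(X,\hat X)]\le c\Delta\). The two problems therefore have identical feasible sets, so \(R_{d_2/c}(\Delta)=R_{d_2}(c\Delta)\le R_{d_1}(\Delta)\).

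The only step needing care is measure-theoretic. We should assume \(d_1\) and \(d_2\) are measurable and bounded below, as nonnegative distortions are, so that the expectations are well defined, possibly equal to \(+\infty\). Monotonicity of expectation then applies without integrability issues, because the bound \(\mathbb{E}[d_1]\le\Delta'<\infty\) makes the right-hand side finite. No other obstacle is expected.
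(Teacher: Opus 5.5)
Your proof is correct and is essentially the paper's argument. The pointwise bound gives $\mathcal D(r;d_2)\le c\,\mathcal D(r;d_1)+b$ for every kernel, so the $d_1$-feasible set at budget $(\Delta-b)/c$ is contained in the $d_2$-feasible set at budget $\Delta$, and taking infima of $I(X;\hat X)$ yields the ordering (Theorems~\ref{thm:rd-domination} and~\ref{thm:rd-affine-domination}). Your treatment of the reparametrized form, the $b=0$ form, and the empty-feasible-set case supplies details the paper leaves implicit.
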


% \emph{Proof sketch.}
% This is a straightforward conclusion from rate-distortion thoery. The feasible set for $(d_1,\Delta/c)$ is contained in that for $(d_2,\Delta)$, so the infimum of $I(X;\hat X)$ cannot increase.
% Let \(r\) be any reconstruction kernel with \(D(r;d_1)\le(\Delta-b)/c\). Since \(d_2(x,\hat{x})\le c\,d_1(x,\hat{x})+b\) pointwise, we have \(D(r;d_2)\le \Delta\). Thus the feasible set for \((d_1,(\Delta-b)/c)\) is contained in the feasible set for \((d_2,\Delta)\). Taking the infimum of \(I(X;\hat X)\) over the larger feasible set cannot increase the value.
% (Full proof, and discussion on when this bound can be strict or tight in \cref{app:claim1:core}.)

Thus weakening the distortion cannot increase the Shannon-optimal rate. 
% The inequality is non-strict: domination alone guarantees that the weaker RD curve lies below or on the stronger one, but strict separation requires the weaker loss to ignore variation that the stronger loss must encode. The next results show that common neural losses fall into this weaker-distortion class.

% This ordering implies that for a fixed latent capacity (or fixed $\beta$), a weaker distortion demands significantly less information from the latent bottleneck. The model is therefore prone to under-utilizing the available dimensions, a phenomenon we link to the emergence of anisotropy.

Now we apply the ordering result to perceptual feature losses. The intuition is that a Lipschitz feature extractor cannot amplify an input perturbation by more than a constant factor. Therefore, pixel-level closeness implies feature-level closeness, whereas feature-level closeness may still allow many pixel-level differences.

\begin{theorem}[VGG feature squared error is dominated by pixel squared error] 

Let pixel squared error \(d_{\rm pix}\) and a perceptual distortion \(d_\phi\) be
\[
d_{\rm pix}(x,\hat{x}) := \|x-\hat{x}\|_2^2,
\qquad
d_{\phi}(x,\hat{x}) := \|\phi(x)-\phi(\hat{x})\|_2^2 .
\]
Assume that the feature map \(\phi\) is \(L\)-Lipschitz on the admissible image domain \(\mathcal X\). Then
\[
d_{\phi}\preceq_{L^2} d_{\rm pix}.
\]
% FORMAL WRITING OF LIPSCHITZNESS - CAN REMOVE
% \[
% \|\phi(x)-\phi(\hat{x})\|_2 \le L\|x-\hat{x}\|_2
% \qquad
% \forall x,\hat{x}\in\mathcal X .
% \]
Consequently, for all \(\Delta\ge 0\),
\[
R_{d_{\phi}/L^2}(\Delta)
\le
R_{d_{\rm pix}}(\Delta).
\]

\end{theorem}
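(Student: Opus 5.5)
The argument has two parts: a pointwise inequality, and then the RD ordering result (the Observation) applied with $b=0$.

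\textbf{Pointwise domination.} Fix $x,\hat x\in\mathcal X$. The $L$-Lipschitz assumption gives $\|\phi(x)-\phi(\hat x)\|_2\le L\|x-\hat x\|_2$. Both sides are nonnegative, so squaring preserves the inequality:
\[
d_\phi(x,\hat x)\le L^2\, d_{\rm pix}(x,\hat x).
\]
By Definition~\ref{def:weak} with $c=L^2$ and $b=0$, this is exactly $d_\phi\preceq_{L^2} d_{\rm pix}$. If $L=0$, then $d_\phi\equiv 0$ and the claim is trivial; otherwise $c=L^2>0$ as the definition requires.

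\textbf{RD ordering.} Apply the Observation with $d_2=d_\phi$, $d_1=d_{\rm pix}$, $c=L^2$, $b=0$. For every $\Delta\ge 0$ this gives $R_{d_\phi}(L^2\Delta)\le R_{d_{\rm pix}}(\Delta)$, which rescales to $R_{d_\phi/L^2}(\Delta)\le R_{d_{\rm pix}}(\Delta)$. To make this self-contained, I would re-derive it as a feasible-set inclusion. Take any kernel $P_{\hat X|X}$ with $\mathbb E[d_{\rm pix}(X,\hat X)]\le\Delta$. Integrate the pointwise bound to get $\mathbb E[d_\phi(X,\hat X)/L^2]\le\Delta$. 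So the kernel is feasible for the $d_\phi/L^2$ problem at budget $\Delta$, and the infimum of $I(X;\hat X)$ over that larger set is no larger.

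\textbf{Main obstacle: the domain.} The Lipschitz bound only holds on $\mathcal X$, so the pointwise inequality is guaranteed only for pairs with both $x,\hat x\in\mathcal X$. The definition asks for domination for all $(x,\hat x)$. I would handle this by restricting the infimum in $R_d$ to reconstruction kernels supported in $\mathcal X$, for example images clipped to $[0,1]^n$. This is the natural reading of ``admissible image domain'': once the reconstruction alphabet is $\mathcal X$, the pointwise bound holds everywhere it is needed. Integrability of $d_\phi$ is automatic, since it is nonnegative and dominated by the integrable $L^2 d_{\rm pix}$ on feasible kernels.
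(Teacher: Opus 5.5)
Your proof is correct and matches the paper's argument: square the Lipschitz bound to get $d_\phi \le L^2 d_{\rm pix}$ pointwise, then use the feasible-set inclusion (RD ordering under domination with $c=L^2$, $b=0$) and rescale the budget to obtain $R_{d_\phi/L^2}(\Delta)\le R_{d_{\rm pix}}(\Delta)$. Your restriction of reconstructions to $\mathcal X$ only makes explicit what the paper assumes implicitly, since it defines $d_{\rm pix}$ and $d_\phi$ only on $\mathcal X\times\mathcal X$.
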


% OLD PROOF SKETCH, TOO UNREADABLE
% \emph{Proof sketch.}
% The VGG feature map $\phi$ is Lipschitz on the data domain since it is a composition of Lipschitz layers (convolutions/affine maps, 1-Lipschitz activations, and nonexpansive pooling), hence $\text{Lip}(\phi)$ is bounded by a product of layerwise operator-norm bounds \cite{DBLP:journals/ml/GoukFPC21, DBLP:conf/icml/DelattreBAA23, DBLP:journals/simods/CombettesP20}.
% Square the Lipschitz bound to get $\|\phi(x)-\phi(\hat x)\|_2^2\le L^2\|x-\hat x\|_2^2$ pointwise, i.e.
% $d_\phi \preceq_{L^2} d_{\mathrm{pix}}$, then apply the RD ordering lemma.
% (Full proof in \cref{app:claim1:vgg}.)

\emph{Proof sketch.}
A VGG feature map is Lipschitz on the image domain because it is a composition
of Lipschitz layers. Therefore, pixel-close reconstructions are also close in
feature space, and squaring this Lipschitz bound gives pointwise domination of
feature squared error by pixel squared error. (Full proof in Section~\ref{app:claim1:vgg}).

In the following theorem, we prove that an adversarial loss based on the score of a Lipschitz discriminator network, such as PatchGAN, can be rewritten as a valid distortion metric that is also weaker than pixel squared error.

% \begin{theorem}[PatchGAN hinge loss is dominated by pixel squared error]
\begin{theorem}[PatchGAN generator loss is dominated by pixel squared error]
Let \(\mathcal X\) be bounded and let \(s_\psi:\mathcal X\to\mathbb R\) be a
fixed Lipschitz PatchGAN score. Then the corresponding generator-side score loss
admits an equivalent nonnegative pairwise RD distortion \(d_\psi\) that
is affinely dominated by pixel SSE:
\[
d_\psi \preceq_{\alpha,\beta} d_{\rm pix}
\]
for every \(\alpha>0\) and some finite \(\beta\). Hence
\[
R_{d_\psi}(\Delta)
\le
R_{d_{\rm pix}}\!\left(\frac{\Delta-\beta}{\alpha}\right),
\qquad
\Delta\ge\beta .
\]
\end{theorem}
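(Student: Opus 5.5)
The plan is to first turn the generator-side loss into an honest pairwise distortion, then bound it by a constant, and finally invoke the RD ordering Observation.

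\emph{Building the distortion.} For the hinge loss, the generator minimizes $\ell_G(\hat x) = -s_\psi(\hat x)$, which depends only on the reconstruction. Because $\mathcal X$ is bounded and $s_\psi$ is Lipschitz, $s_\psi$ is bounded on $\mathcal X$. To see this, fix any $x_0\in\mathcal X$; then $|s_\psi(x)| \le |s_\psi(x_0)| + L_\psi \operatorname{diam}(\mathcal X)$. Set $M := \sup_{\hat x\in\mathcal X} s_\psi(\hat x) < \infty$ and define
\[
d_\psi(x,\hat x) := M - s_\psi(\hat x) \;\ge\; 0 .
\]
This differs from the generator loss only by the additive constant $M$. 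Adding a constant to every kernel's expected loss does not change which reconstruction kernels are preferred; it only shifts the budget. So $d_\psi$ is an \emph{equivalent} nonnegative pairwise distortion, in the sense that the RD problem for the score loss at budget $\Delta - M$ is exactly the RD problem for $d_\psi$ at budget $\Delta$.

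If one wants $d_\psi$ to depend genuinely on $x$, for instance $d_\psi(x,\hat x) := |s_\psi(x) - s_\psi(\hat x)|$ or $(s_\psi(\hat x)-s_\psi(x))_+$, the same argument works. In that case I would note that Lipschitzness even gives $d_\psi \le L_\psi \|x-\hat x\|_2$, and the bound below still applies. The shifted-score version is the one that matches the hinge generator loss, so I would commit to it.

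\emph{Uniform bound.} Let $m := \inf_{\mathcal X} s_\psi$. Then $0 \le d_\psi \le M - m =: B$, with
\[
B \le 2 L_\psi \operatorname{diam}(\mathcal X) < \infty .
\]
Since $d_{\rm pix}\ge 0$, for every $\alpha>0$ we get
\[
d_\psi(x,\hat x) \le B \le \alpha\, d_{\rm pix}(x,\hat x) + B .
\]
This is exactly $d_\psi \preceq_{\alpha,\beta} d_{\rm pix}$ with $\beta := B$, independent of $\alpha$, which proves the domination claim "for every $\alpha>0$ and some finite $\beta$". The rate inequality then follows directly from the RD ordering Observation applied with $c=\alpha$ and $b=\beta$, for all $\Delta\ge\beta$.

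\emph{Main obstacle.} The expected difficulty is not the inequality itself but justifying that the generator-side loss "admits an equivalent" distortion. Three points need to be made precise:
\begin{itemize}
\item the loss is unary in $\hat x$, so it is lifted to a pairwise function constant in $x$;
\item the discriminator is held fixed, so there are no min-max dynamics;
\item the additive shift by $M$ is exactly absorbed into the budget, preserving the feasible sets of kernels up to relabeling $\Delta$.
\end{itemize}
I would also remark that the bound is deliberately loose. Because $\beta$ can absorb all of $d_\psi$, one even gets $R_{d_\psi}(\Delta)=0$ for $\Delta\ge B$, since a constant reconstruction is then feasible. This is consistent with, and stronger than, the stated ordering.
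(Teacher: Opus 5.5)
Your proof is correct, but it takes a different route from the paper's.

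\textbf{Choice of distortion.} The paper keeps a genuinely pairwise distortion: the signed score difference $s_\psi(x)-s_\psi(\hat x)$, shifted by the score range $\Delta_\psi$. Its expectation differs from $\mathbb{E}[\ell_G(\hat X)]$ only by the kernel-independent constant $\mathbb{E}[s_\psi(X)]$. You instead lift the unary loss to $M-s_\psi(\hat x)$, which is equally valid under the paper's definitions of distortion and equivalence.

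\textbf{Domination argument.} The paper uses Lipschitzness, $\bar d_{\rm hinge}\le\Delta_\psi+L_\psi\|x-\hat x\|_2$, then Young's inequality. This gives the single explicit pair $c=L_\psi^2/2$, $b=\Delta_\psi+1/2$. You use only uniform boundedness, which gives $d_\psi\le\alpha\,d_{\rm pix}+B$ for every $\alpha>0$, with $B$ independent of $\alpha$.

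Your route is shorter. It also justifies why the score range is finite, which the paper simply asserts. And it matches the main-text claim ``for every $\alpha>0$'' more directly than the appendix lemma, which only yields slope $L_\psi^2/2$. Smaller slopes would need either a weighted Young inequality $L_\psi t\le\alpha t^2+L_\psi^2/(4\alpha)$ or your boundedness observation. The latter applies equally to the paper's distortion, since $\bar d_{\rm hinge}\le 2\Delta_\psi$. What the paper's route buys is an intercept $\Delta_\psi+1/2$ that can be smaller than $2\Delta_\psi$, and a bound in which pixel distance actually appears.

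Your closing remark is the sharper insight, and it holds more broadly than you state. For either distortion, the expected value depends on the kernel only through $\mathbb{E}[s_\psi(\hat X)]$. So any feasible kernel can be replaced by a constant reconstruction $\hat x_0$ with $s_\psi(\hat x_0)\ge\mathbb{E}[s_\psi(\hat X)]$, which is also feasible and carries zero mutual information. Hence both RD functions are $0$ wherever they are finite, and the ordering is essentially vacuous.

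Two minor points:
\begin{itemize}
\item Your alternatives $|s_\psi(x)-s_\psi(\hat x)|$ and $(s_\psi(\hat x)-s_\psi(x))_+$ satisfy the domination bound, but they are not equivalent to the generator loss in expectation. Only the signed difference is.
\item In fact $B\le L_\psi\operatorname{diam}(\mathcal X)$, so your factor $2$ is unnecessary, though harmless.
\end{itemize}
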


\emph{Proof sketch.}
For a fixed PatchGAN discriminator, the generator loss can be rewritten, up to constants, as a nonnegative pairwise score difference between \(x\) and \(\hat x\). Lipschitzness then implies that this score difference is dominated by pixel distance, and therefore by pixel SSE up to an additive constant. (Full proof in Appendix~\ref{app:claim1:disc}).

\begin{corollary}[Mixtures with pixel SSE remain dominated by pixel SSE]
Let \(d_N\preceq_{c,b} d_{\rm pix}\) be any neural distortion satisfying affine domination by pixel SSE, and define
\[
d_\lambda(x,\hat{x})
:=
(1-\lambda)d_{\rm pix}(x,\hat{x})
+
\lambda d_N(x,\hat{x}),
\qquad
\lambda\in[0,1].
\]
Then
\[
d_\lambda \preceq_{(1-\lambda)+\lambda c,\,\lambda b} d_{\rm pix}.
\]
% Consequently, for all \(\Delta\ge \lambda b\),
% \[
% R_{d_\lambda}(\Delta)
% \le
% R_{d_{\rm pix}}\!\left(
% \frac{\Delta-\lambda b}{(1-\lambda)+\lambda c}
% \right).
% \]
\end{corollary}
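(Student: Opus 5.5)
The plan is a direct pointwise computation using only Definition~\ref{def:weak}; no rate-distortion machinery is needed, since the claim concerns the distortions themselves. Fix an arbitrary pair \((x,\hat{x})\). The hypothesis \(d_N\preceq_{c,b} d_{\rm pix}\) gives \(d_N(x,\hat{x})\le c\,d_{\rm pix}(x,\hat{x})+b\). Multiplying by \(\lambda\ge 0\) preserves the inequality, so \(\lambda d_N(x,\hat{x})\le \lambda c\,d_{\rm pix}(x,\hat{x})+\lambda b\).

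Next, add \((1-\lambda)d_{\rm pix}(x,\hat{x})\) to both sides. This yields
\[
d_\lambda(x,\hat{x})\le \bigl((1-\lambda)+\lambda c\bigr)d_{\rm pix}(x,\hat{x})+\lambda b .
\]
Since the pair was arbitrary, this is exactly the pointwise bound required by Definition~\ref{def:weak}.

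It remains to check that the constants are admissible, i.e.\ \(c'>0\) and \(b'\ge 0\) with \(c'=(1-\lambda)+\lambda c\) and \(b'=\lambda b\). For \(b'\), this follows from \(\lambda\ge0\) and \(b\ge0\). For \(c'\), note that it is a convex combination of \(1\) and \(c\), both of which are positive, so \(c'\ge\min(1,c)>0\). I expect no genuine obstacle here. The only point needing care is this positivity check, which covers the endpoint cases \(\lambda=0\) (where \(c'=1\), \(b'=0\)) and \(\lambda=1\) (which recovers the hypothesis).

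Finally, I would remark that combining this bound with the RD ordering Observation immediately gives \(R_{d_\lambda}(\Delta)\le R_{d_{\rm pix}}\bigl((\Delta-\lambda b)/((1-\lambda)+\lambda c)\bigr)\) for \(\Delta\ge\lambda b\). Instantiating \(c=L^2,\ b=0\) (VGG theorem) or \(c=\alpha,\ b=\beta\) (PatchGAN theorem) then covers the mixtures used in practice.
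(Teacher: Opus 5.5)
Your proof is correct and takes the same route as the paper. The paper gets this by closure of affine domination under nonnegative mixtures (Appendix~\ref{app:generalization}), viewing \(d_{\rm pix}\preceq_{1,0} d_{\rm pix}\) and summing the weighted bounds, which is exactly your pointwise computation; your check that \((1-\lambda)+\lambda c\ge\min(1,c)>0\) and \(\lambda b\ge 0\) supplies a detail the paper leaves implicit.
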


This shows that replacing part of pixel SSE by a weaker neural term always results in a distortion function that is weaker than pure pixel squared error. The result does not by itself prove monotonicity of the trained VAE rate as a function of \(\lambda\); it only gives the ideal RD ordering. The monotone decrease in achieved KL is therefore an empirical question, which we test next.

\subsection{Empirical evidence}
\label{subsec:claim1-empirical}

\begin{figure}[t]
  \centering
  % \includegraphics[width=0.49\columnwidth]{img/celeba_pythae/lpips/rd_curve.pdf}
  % \hfill
  % \includegraphics[width=0.49\columnwidth]{img/celeba_pythae/lpips/rate_vs_lambda.pdf}
  \includegraphics[width=\columnwidth]{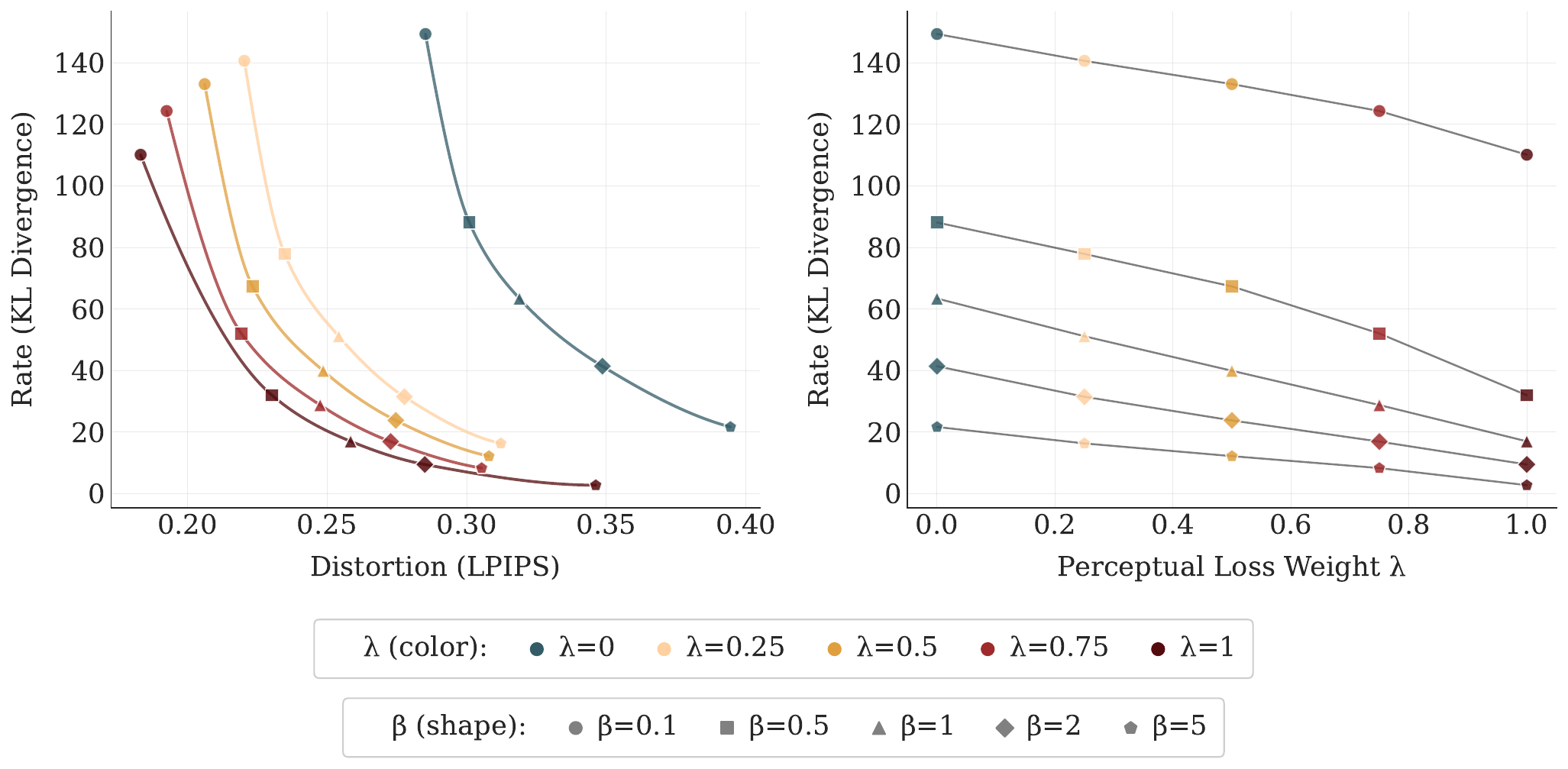}
  \caption{Experimental results supporting the claim in Section~\ref{sec:claim1} for the \texttt{pythae} VAEs trained on CelebA. On the left (RD curves at varying $\lambda$), more perceptual distortion metrics (higher $\lambda$) shift the Pareto-optimality frontier of the rate-distortion tradeoff. On the right, for each fixed $\beta$, the optimal rate shrinks monotonically as the loss becomes more perceptual.
  \label{fig:rd_curves_perceptual}
  }
\end{figure}

The previous section shows that the ideal RD frontier of a distortion that includes perceptual terms lies below that of pixel SSE, but this does not by itself imply that a finite amortized VAE, trained with SGD on a non-convex ELBO, will achieve a lower KL at fixed \(\beta\) as \(\lambda\) grows.

We therefore check this empirically: after normalizing the scale of the reconstruction loss, does increasing the neural component lower the KL rate reached at convergence?

% The preceding arguments compare ideal Shannon RD functions. They do not characterize the nonconvex optimization dynamics of a finite amortized VAE, nor do they guarantee that the achieved KL at a fixed \(\beta\) must decrease monotonically with \(\lambda\). They do, however, predict that neural reconstruction terms make lower-rate solutions feasible by weakening the information requirements of the distortion. We therefore test the operational version of this prediction directly: after normalizing the scale of the reconstruction loss, does increasing the neural component lower the KL rate reached at convergence?

We verify this by training VAEs on the unified training objective:
\begin{equation}
\label{eq:lossfunction}
\mathcal{L} = B(\lambda D + (1 - \lambda) \mathrm{SSE}) + \beta\,\mathrm{KL},
\end{equation}
%
% where $D$ is one of three neural losses: LPIPS \cite{zhang2018unreasonableeffectivenessdeepfeatures} as a perceptual loss, DINOv2-feature MSE \cite{oquab2023dinov2} as a perceptual loss based on a self-supervised backbone, and the PatchGAN hinge loss combined with feature matching \cite{isola2018imagetoimagetranslationconditionaladversarial} as an adversarial loss. 
where $D$ is a neural distortion and 
\(B\) is a loss balancer \cite{defossez2022high} which normalizes the magnitude of the reconstruction term so that $\lambda$ controls the \emph{composition} of the distortion rather than its scale. We sweep $\lambda$ across five $\beta$ values spanning \{0.1, 0.5, 1, 2, 5\} and for each model we record the final KL after training to convergence. The full experimental setup is described in Appendix~\ref{sec:experimental-setup}.

\begin{wrapfigure}[16]{r}{0.45\textwidth}
  \vspace{-0.1em}
  \centering
  \centering
  \includegraphics[width=0.43\columnwidth]{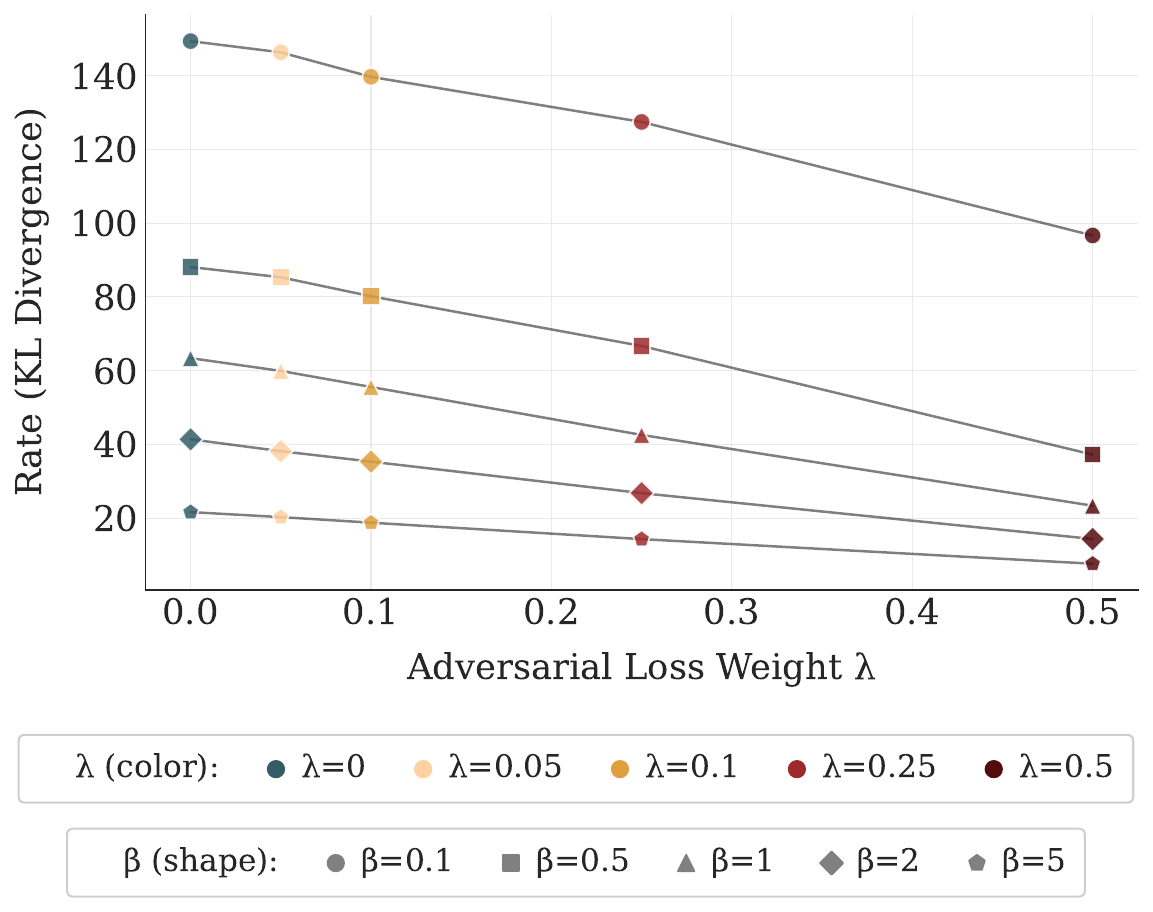}
  \caption{Rate reached at convergence as a function of $\lambda$ for the pythae VAE trained on CelebA with adversarial loss.}
  \label{fig:claim1-adversarial}
\end{wrapfigure}

To test robustness, we vary model architecture (a traditional VAE from \texttt{pythae} \cite{chadebec2022pythae} and AutoencoderKL from \texttt{diffusers} \cite{von-platen-etal-2022-diffusers}), dataset (CelebA \cite{liu2015faceattributes} and Tiny-ImageNet \cite{deng2009imagenet}), and the family of neural distortions (LPIPS \cite{zhang2018unreasonableeffectivenessdeepfeatures} and  DINOv2 features \cite{oquab2023dinov2} as perceptual losses, and a PatchGAN hinge loss with feature matching \cite{isola2018imagetoimagetranslationconditionaladversarial} as an adversarial loss).

The same monotone effect holds across every configuration we tested: as $\lambda$ grows, the Pareto frontier shifts down and models converge to lower rates for the same $\beta$ value. Figure~\ref{fig:rd_curves_perceptual} shows this for the perceptual loss with the pythae VAE on CelebA, and Figure~\ref{fig:claim1-adversarial} for the same setup with the adversarial loss. 
Figure~\ref{fig:autoencoder_kl_claim1} shows results for the AutoencoderKL model on both datasets. 
Appendix~\ref{app:plots} also reports results for DINOv2 (Figure~\ref{fig:autoencoder_kl_claim1-dino}) and the adversarial hinge loss (Figure~\ref{fig:autoencoder_kl_claim1-k}) on Tiny-ImageNet with AutoencoderKL.

% We test our claim on two different model architectures: a traditional VAE architecture as implemented by \texttt{pythae} \cite{chadebec2022pythae} (25.2M params) and \href{https://huggingface.co/docs/diffusers/main/en/api/models/autoencoderkl#diffusers.AutoencoderKL}{AutoencoderKL} from the Diffusers library (14.4M params). 
% We then repeat all our experiments across two datasets: CelebA dataset \cite{liu2015faceattributes} and Tiny-Imagenet \cite{deng2009imagenet}.
% The same monotone effect holds across every configuration we tested: 
% Figure~\ref{fig:rd_curves_perceptual} and Figure~\ref{fig:autoencoder_kl_claim1} report the result for LPIPS. It shows how, as $\lambda$ grows, the Pareto frontier shifts down and models converge to lower rates. 
% The same happens with the adversarial loss as shown in Figure~\ref{fig:pythae_ksweep}.

% Across these settings, increasing the neural component of the reconstruction loss consistently lowers the KL rate reached at convergence. This supports the theoretical prediction that neural distortions induce a lower optimal rate. Section~\ref{sec:claim1} therefore establishes a capacity effect: changing the distortion changes how much information the model stores. The next question is whether distortion still matters after this capacity effect is removed. In Section~\ref{sec:claim2}, we fix the rate and ask whether the choice of distortion changes how the same amount of information is distributed across latent coordinates.

\section{Distortion shapes posterior geometry}
\label{sec:claim2}
Section~\ref{sec:claim1} showed that the choice of distortion function changes the \emph{amount} of information stored in the latent representations: at the same value of $\beta$, neural losses converge to a lower rate than pixel SSE. A natural question now becomes: if we instead \emph{fix the rate as a constant}, does the choice of distortion function alter \emph{how this information is allocated} in the latent representations?

Here we show that the answer is yes. At matched rate, the choice of distortion function changes the \emph{shape of the per-sample posterior}: specifically, how variance is distributed across latent coordinates. Empirically, we observe that perceptual and discriminative losses produce more isotropic posteriors than pixel SSE: information is spread more evenly across coordinates rather than concentrated in a few active dimensions.

\begin{figure}[t]
  \centering
  \includegraphics[width=\columnwidth]{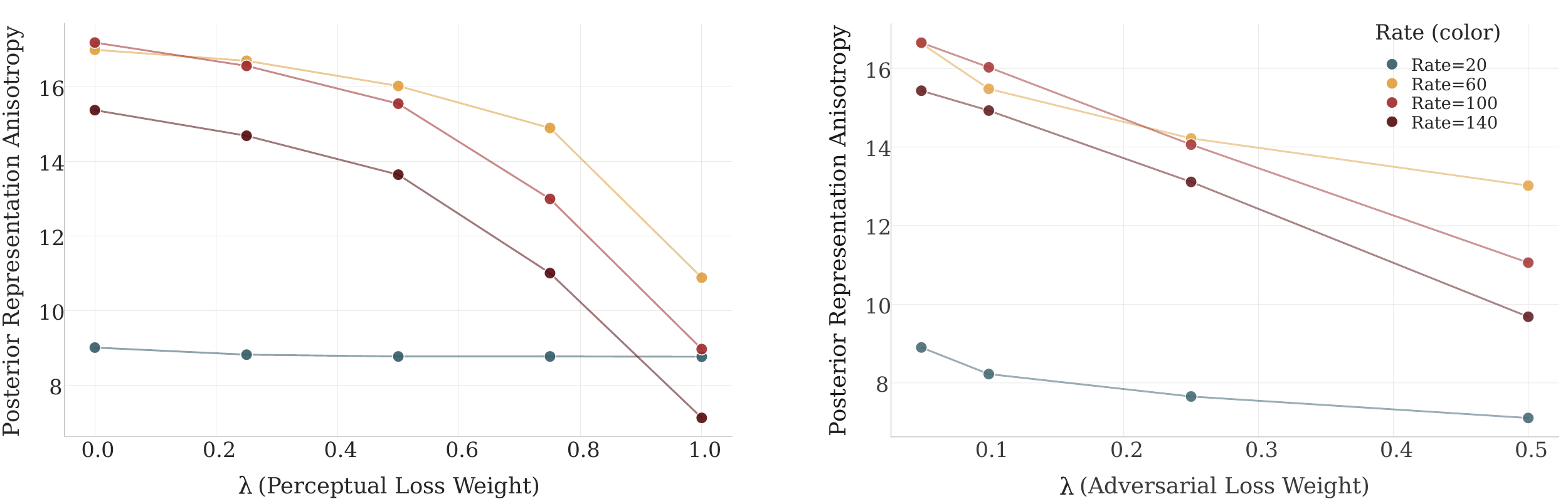}
  \caption{Rate-matched training of the pythae VAE on CelebA, using the perceptual loss LPIPS (left) and adversarial hinge loss with PatchGAN (right). Shows how, for each fixed rate (color), the average per-sample posterior representations become less anisotropic as the neural contribution to the reconstruction loss grows. In both cases, when the rate is already extremely low (blue line at the bottom), the representations are already extremely isotropic, and the effect of the neural loss shrinks. 
  % The target rates were picked from a range that would be naturally reachable with commonly used $\beta$ coefficients.
  \label{fig:claim2-ratematched}
  }
\end{figure}

\subsection{Theoretical setup}
We begin by defining the geometric quantity we measure and then show how analyzing the rate alone does not determine it. The empirical study, intuition, and a toy-model confirmation follow in the subsequent subsections.

\paragraph{Per-sample posterior anisotropy.}
\label{sec:apost}

For a diagonal-Gaussian posterior ${q_\phi(z \mid x)} = \mathcal{N}(\mu_\phi(x), \mathrm{diag}(\sigma^2_\phi(x)))$, we summarize the per-sample variance allocation by the variance of the log-variances predicted by the encoder across latent coordinates:
\begin{equation}
A_{\mathrm{post}}(x) := \mathrm{Var}_{i \in \{1, \ldots, D\}} \big[\, \log \sigma^2_{\phi,i}(x) \,\big].
\label{eq:apost}
\end{equation}
High $A_{\mathrm{post}}(x)$ means a few coordinates carry most of the certainty while the rest remain noisy: the encoder has effectively concentrated information into a low-dimensional subspace, with the remaining coordinates acting as inactive units. Low $A_{\mathrm{post}}(x)$ means uncertainty is spread evenly across coordinates. We average $A_{\mathrm{post}}(x)$ over the dataset to obtain a per-model anisotropy score.

\paragraph{What determines this anisotropy.}
\label{sec:waterfilling}

The variance contribution to the rate is a scalar, and it is not informative about how variance is distributed across the coordinates of the latent representation. What determines this allocation? In the analytically tractable case of a Gaussian source under a quadratic distortion, classical results give a clean answer: the per-coordinate variances follow a water-filling profile:
\begin{equation}
\sigma^2_i \;\propto\; \frac{\beta}{\beta + 2 c_i}.
\label{eq:waterfilling}
\end{equation}
Here \(c_i\) is the distortion weight assigned to coordinate \(i\): large
\(c_i\) means that errors along that direction are expensive, while small
\(c_i\) means they are relatively cheap. Thus, directions that matter more under
the distortion receive lower posterior variance, i.e. more information. The
shape of the variance profile is therefore determined by the directional
structure of the distortion. For neural losses combined with deep encoders,
these effective weights are not analytically tractable, so how such losses
alter posterior anisotropy is an empirical question.

\subsection{Neural losses induce per-sample posterior isotropy}
\label{sec:observation}

To understand how neural losses influence the allocation of information in the latent space, we compare models trained with different $\lambda$ at \emph{matched rate}. We use the GECO algorithm \cite{DBLP:journals/corr/abs-1810-00597} to dynamically adjust $\beta$ during training so that the achieved rate converges to a specified target $\hat{R}$, and we sweep $\lambda$ across the same values used in Section~\ref{subsec:claim1-empirical}. The full setup is presented in Appendix~\ref{sec:experimental-setup}.

% \begin{figure}[t]
\begin{wrapfigure}{r}{0.45\textwidth}
  \vspace{-0.8em}  
  \centering
  \includegraphics[width=0.45\columnwidth]{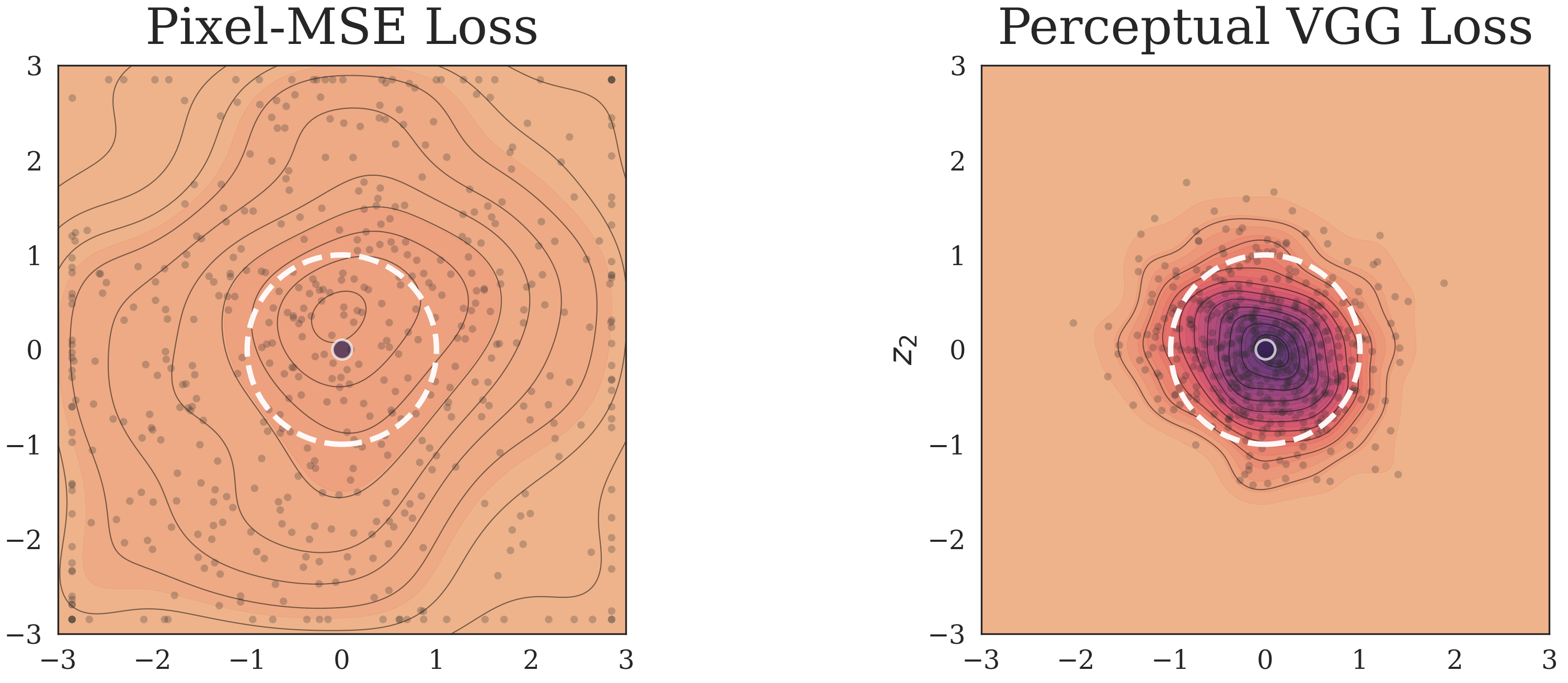}
  \caption{Equivalence classes under pixel vs.\ perceptual distortion. Dashed curves enclose equivalent perturbations under each loss. 
  \label{fig:equivalence-classes}
  }
% \end{figure}
  \vspace{-1em}
\end{wrapfigure}

We repeat the experiment across two dataset-model combinations, two neural losses (perceptual and discriminative), and 4 fixed target rate values. For each combination, we sweep $\lambda$ and measure the average per-sample posterior anisotropy $A_{\mathrm{post}}$.
Figure~\ref{fig:claim2-ratematched} reports the result for the pythae VAE trained on CelebA.

At every target rate, $A_{\mathrm{post}}$ decreases monotonically as $\lambda$ grows. The effect is small at the lowest target rate ($\hat{R} = 20$, blue line), where all posteriors are close to the prior and there is little anisotropy to redistribute, and large at higher rates ($\hat{R} \in \{60, 100, 140\}$). 

Figure~\ref{fig:ratematched-tinyimagenet} in Appendix~\ref{app:plots} shows that the same monotone decrease in $A_{\mathrm{post}}$ with $\lambda$ at matched rate holds for the AutoencoderKL trained on Tiny-ImageNet, indicating that the effect generalizes across architecture and dataset.

The relationship between ``isotropy of the loss in pixel space'' and ``isotropy of the resulting posterior'' runs opposite to the naive guess. Pixel SSE is the most isotropic loss in pixel space (every pixel is weighted equally) yet it produces the most anisotropic posteriors. Perceptual and discriminative losses are anisotropic in pixel space (they weight feature directions and patches non-uniformly) yet they produce the most isotropic posteriors.

\subsection{Intuition}
\label{sec:intuition}

\begin{wrapfigure}{r}{0.38\textwidth}
  \vspace{-1.5em}
% \begin{figure}
  \centering
  \includegraphics[width=0.36\textwidth]{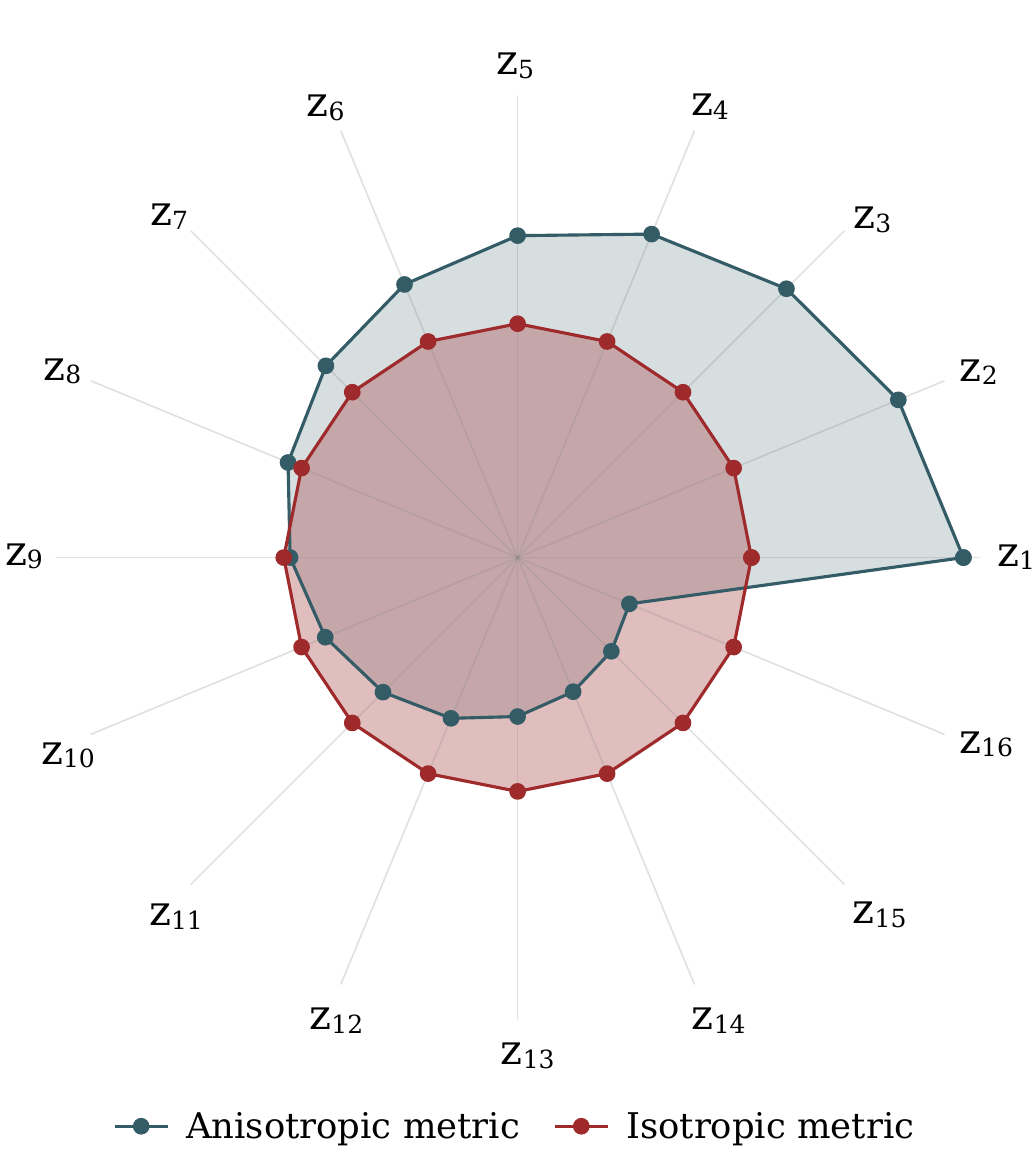}
  \caption{Toy model at matched rate: posterior standard deviation per latent dimension.}
  \label{fig:toy-model}
% \end{figure}
  \vspace{-1em}
\end{wrapfigure}

The mechanism behind this counter-intuitive direction can be read off the water-filling formula~\eqref{eq:waterfilling} combined with how neural losses act on pixel space.

Natural-image datasets are highly anisotropic in pixel space: a small number of principal components capture most of the dataset's variance. Since pixel SSE penalizes every pixel equally, the encoder is forced to spend rate preserving precisely those few high-variance directions. The other, lower-variance directions are necessarily left near the prior to respect the information budget.

Neural losses operate in a feature space that has already collapsed many pixel-level directions into perceptual equivalence classes (as shown in Figure~\ref{fig:equivalence-classes}). This removes the pressure to memorize specific patterns, so water-filling allocates rate more evenly across coordinates, producing a more isotropic variance profile.

This predicts the direction of the effect, but not its magnitude, which depends on the interaction between the encoder and the loss.
Each latent coordinate inherits an effective distortion weight that reflects both the directions the encoder uses to embed information in image space and how strongly the loss penalizes errors along those directions. A more complete mechanistic characterization would require quantifying these effective per-coordinate weights for trained deep encoders.
% A more complete mechanistic characterization would require studying $(W^\top M^\top M W)_{ii}$ for trained deep encoders, which we leave to future work.

\subsection{Toy model of anisotropy}
\label{sec:toy}

To confirm and visualize that the water-filling mechanism operates as described, we instantiate it in a closed-form linear-Gaussian setting.

We fix dimension $D = 16$, a linear decoder $W = I_D$, and evaluate reconstructions under a quadratic metric $\|M(x - Wz)\|^2$. We compare two choices of $M$: an isotropic $M_{\mathrm{iso}} = I_D$, and an anisotropic one, $M_{\mathrm{aniso}}$ with exponentially ramping weights. For any target rate, we can pick $\beta$ separately for each metric so that both models converge to the same variance-KL. Full derivations are in Appendix~\ref{app:toymodel}.

Figure~\ref{fig:toy-model} shows the per-dimension posterior standard deviation at a matched rate. The isotropic metric traces a near-perfect circle: every coordinate carries the same uncertainty. The anisotropic metric is visibly lopsided, with certainty concentrated in a subset of coordinates. Both profiles correspond to the same scalar rate, showing how distortion geometry alone can fully reshape the variance profile.

\section{Related Work}
\label{sec:related-work}

\paragraph{Rate-distortion.}

\cite{cover} presents rate-distortion theory as the framework that, given a source distribution and a distortion measure, characterizes the minimum coding rate needed to achieve a specified expected distortion. VAEs \cite{DBLP:journals/corr/KingmaW13} are widely studied as approximate solutions to rate-distortion problems. The $\beta$-VAE framework \cite{higgins2017betavae} first demonstrated that scaling the KL regularizer manages the tradeoff between reconstruction quality and latent capacity, where higher $\beta$ values promote factorization at the expense of fine detail. 
This concept was later refined through capacity control, which uses targeted KL levels and gradual capacity increases to mitigate posterior collapse \cite{burgess2018understandingdisentanglingbetavae}. 

A more formal RD viewpoint characterizes the VAE objective as a Lagrangian relaxation, revealing that models with powerful decoders frequently under-utilize their latent space \cite{DBLP:conf/icml/AlemiPFDS018}. While multi-rate variants allow a single model to span an entire RD frontier \cite{bae2023multiratevaetrainonce}, existing research typically focuses on navigating these curves by adjusting $\beta$. These works generally treat reconstruction loss as a fixed parameter, leaving the impact of different distortion choices on the attainable RD region largely unexamined.

\paragraph{Perceptual and adversarial tradeoffs.}
Changing the distortion functional has been extensively explored in practice, but usually without an explicit RD analysis. Perceptual VAEs \cite{johnson2016perceptuallossesrealtimestyle} replace pixel-wise reconstruction terms with feature-space distances, for instance penalizing discrepancies in VGG activations or discriminator features~\cite{DBLP:conf/icml/LarsenSLW16,7926714}, which leads to sharper and more semantically meaningful reconstructions \cite{7926714}. In learned image compression, AE-based codecs combine MSE, perceptual distances like LPIPS \cite{zhang2018unreasonableeffectivenessdeepfeatures}, and adversarial losses to improve subjective quality at a given bitrate \cite{mentzer2020highfidelitygenerativeimagecompression}. 

From a theoretical standpoint, the inevitable tradeoff between distortion and perception has been rigorously characterized \cite{8578750}, and extended to a rate-distortion-perception framework \cite{blau2019rethinkinglossycompressionratedistortionperception}. 
These works show that enforcing perceptual quality constraints changes the achievable RD frontier: improved perceptual quality generally requires sacrificing either distortion or rate. However, this theory does not directly address the learned-rate setting considered here, where the bitrate is selected implicitly by optimizing a KL-regularized objective. 
% We revisit this distinction in Section~\ref{sec:your_section_label}, and show that changing the reconstruction loss can alter the learned KL rate itself.
% These results suggest that changing the distortion metric or adding perceptual constraints moves the RD curve, but existing VAE-based models do not quantify how this change manifests in terms of KL rate, or latent uncertainty.

\paragraph{Latent geometry.}
Research on latent space geometry explores how information constraints influence representation structure. Disentanglement methods like $\beta$-TCVAE \cite{chen2019isolatingsourcesdisentanglementvariational} and DIP-VAE \cite{kumar2018variational} modify the KL regularizer to penalize correlation, encouraging statistically independent dimensions. Theoretical analyses suggest $\beta$-VAE objectives are rotationally invariant, primarily controlling posterior overlap rather than axis alignment with generative factors \cite{mathieu2019disentanglingdisentanglementvariationalautoencoders}. Empirically, varying $\beta$ leads to "active units" and concentrated variance in specific dimensions. 

Closest to our work, recent studies \cite{10.1371/journal.pcbi.1012952} have connected RD tradeoffs to geometric effects (prototypization, specialization, orthogonalization) in synthetic datasets, though these typically employ fixed pixel-level distortions and do not study how different distortion metrics affect the optimal rate or the anisotropy of the learned posterior distributions. Also related, \cite{chen2017variationallossyautoencoder} combines VAEs with autoregressive decoders to bias the latent code toward global structure while leaving fine texture to the decoder.

Finally, VAE-style models now serve as backbones in large-scale generative systems, such as latent diffusion models \cite{rombach2022highresolutionimagesynthesislatent}. These systems utilize mixtures of pointwise, perceptual, and adversarial losses to compress data into low-dimensional spaces. While they implicitly explore rate-distortion-perception tradeoffs in high-dimensional, realistic domains, the choice of loss functions is often treated as an engineering design rather than a subject of theoretical study.

% TODO(tkol): Write this section.
% Suggested structure:
% - Rate--distortion / information bottleneck views of VAEs
% - Perceptual + adversarial reconstruction losses for VAEs
% - Posterior collapse / KL annealing / free bits / beta scheduling
% - Latent geometry, anisotropy, and downstream probing for VAEs
% - Spectral structure and spatial latents (if you keep that claim)

\section{Discussion}
\label{sec:discussion}

\subsection{Latent shape design}

The latent space geometry of VAEs has immense consequences on downstream tasks like diffusion modeling. However, current practice for training VAEs is mostly empirical: most modern VAEs are trained with a combination of pixel SSE, perceptual, and adversarial losses, despite their complexity and lack of theoretical grounding. 

Improvements in reconstruction quality alone are not a sufficient justification for the effectiveness of a loss in creating latents that are suitable for diffusion modeling: in fact, reconstruction FID and generation FID are often at odds with each other \cite{yao2025reconstruction}. This combination of neural training losses must therefore be doing something useful in \emph{latent} space, not only in data space.

Dieleman \cite{dieleman2025latents} decomposes latent-space design into three roughly orthogonal axes: \emph{capacity} (how many bits the latents carry), \emph{curation} (which bits from the input are retained), and \emph{shape} (how this information is arranged across coordinates). Curation has received recent attention through representation-alignment methods \cite{yao2025reconstruction, chen2025masked, yu2024representation}; capacity and shape have so far been controlled mainly through $\beta$ and through auxiliary regularizers \cite{kouzelis2025eqvaeequivarianceregularizedlatent, skorokhodov2025improvingdiffusabilityautoencoders, leng2025repa}. 

In this paper we show that the commonly used perceptual and adversarial losses are themselves direct controllers of both capacity and shape. Section~\ref{sec:claim1} shows that such neural objectives lower the achieved rate, making low capacity affordable without reconstruction collapse. Section~\ref{sec:claim2} shows that they redistribute the available information uniformly across coordinates rather than concentrating it in a few active dimensions, directly shaping the latent geometry.

This addresses a critical gap in current literature: to train optimal VAEs we must first understand what our choice of objectives implies for the latent representations. 

\subsection{Extending the rate-distortion-perception tradeoff}
\label{subsec:discussion-rd}
Blau and Michaeli \cite{blau2019rethinkinglossycompressionratedistortionperception} study the rate–distortion–perception tradeoff by asking how the classical RD frontier changes when perceptual quality is constrained. In their experiments, rate is fixed externally through the latent dimensionality and quantization levels, and models are trained at each prescribed rate with different distortion/perception weights; they show that enforcing an external constraint on perceptual quality elevates the RD curve, requiring higher pixel distortion and/or higher rate. 

Our work considers the practical learned-rate regime, where the rate is not fixed but emerges from optimizing a Lagrangian objective containing a KL divergence and a reconstruction loss. We show that, when the reconstruction loss is changed from pure pixel SSE to a  mixture of SSE and perceptual/GAN terms, the optimum shifts to lower rates, revealing an effect that is not captured by fixed-rate RDP analysis.

\section{Limitations and future work}
\label{sec:limits}
Our theory is about ideal Shannon rate-distortion curves, but our experiments measure the KL that finite VAEs reach when trained with SGD. The VAE uses an amortized encoder instead of the optimal per-input kernel, its KL is an upper bound on mutual information rather than mutual information itself, and ELBO training is non-convex, so the achieved rate can also depend on optimization dynamics that Shannon theory does not capture. On the geometry side, the water-filling argument behind our anisotropy analysis is exact only for Gaussian sources with quadratic distortion and linear decoders. For deep encoders trained with neural losses, it predicts the direction of the effect but not its magnitude, which is left to experimental evidence.

\paragraph{Future work.} 
A natural next step is to study in more depth the mechanisms by which neural losses shape the geometry of the latents, and to design losses that explicitly induce desirable properties in the latent space. We believe that this would be an extremely impactful extension of this work. A second direction is to study how the geometric property we identify affects the ability to train diffusion models on top of these latents. This would clarify whether and when the shape changes induced by neural losses are actually beneficial for downstream generation.

\bibliographystyle{plainnat}
\bibliography{bibliography}

%%%%%%%%%%%%%%%%%%%%%%%%%%%%%%%%%%%%%%%%%%%%%%%%%%%%%%%%%%%%

\appendix

\crefname{subsection}{Appendix}{Appendices}
\Crefname{subsection}{Appendix}{Appendices}

\newpage
\raggedbottom

\section{Additional plots}
\label{app:plots}

\begin{figure}[h]
  \centering
  \includegraphics[width=0.49\columnwidth]{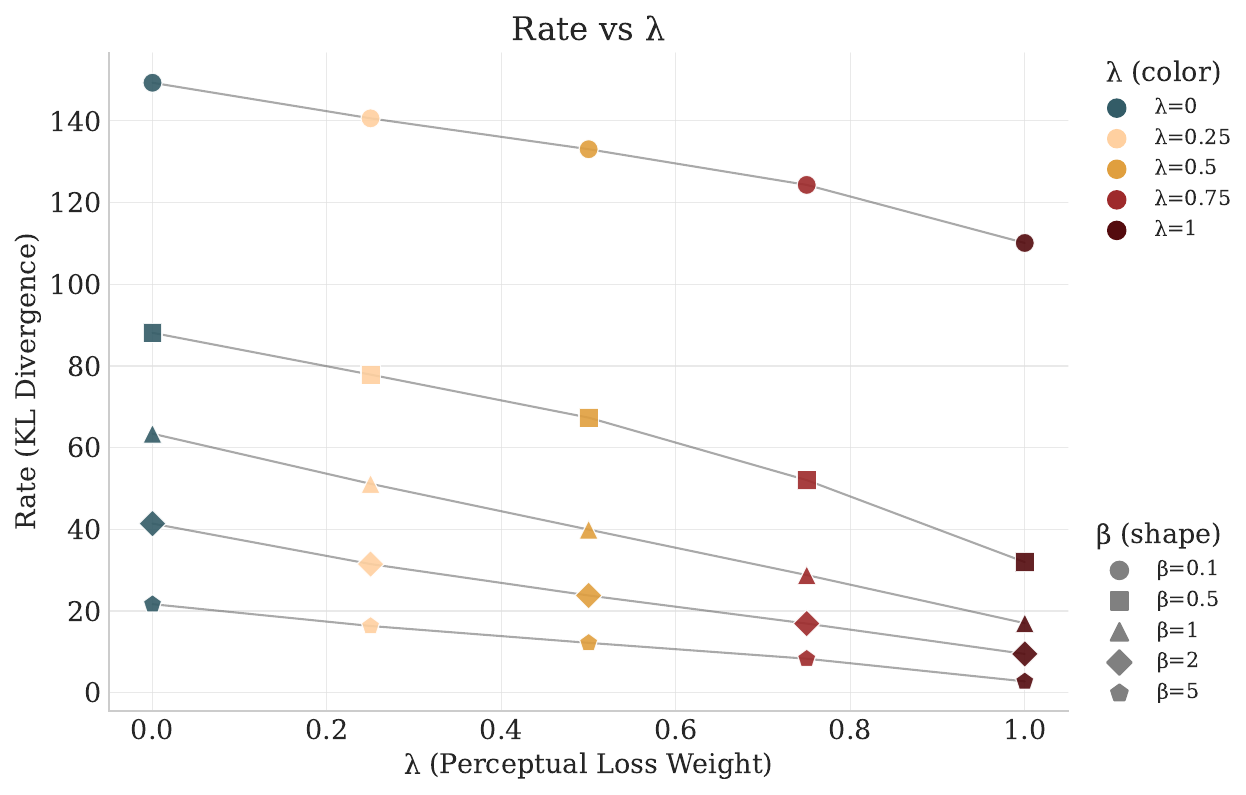}
  \hfill
  \includegraphics[width=0.49\columnwidth]{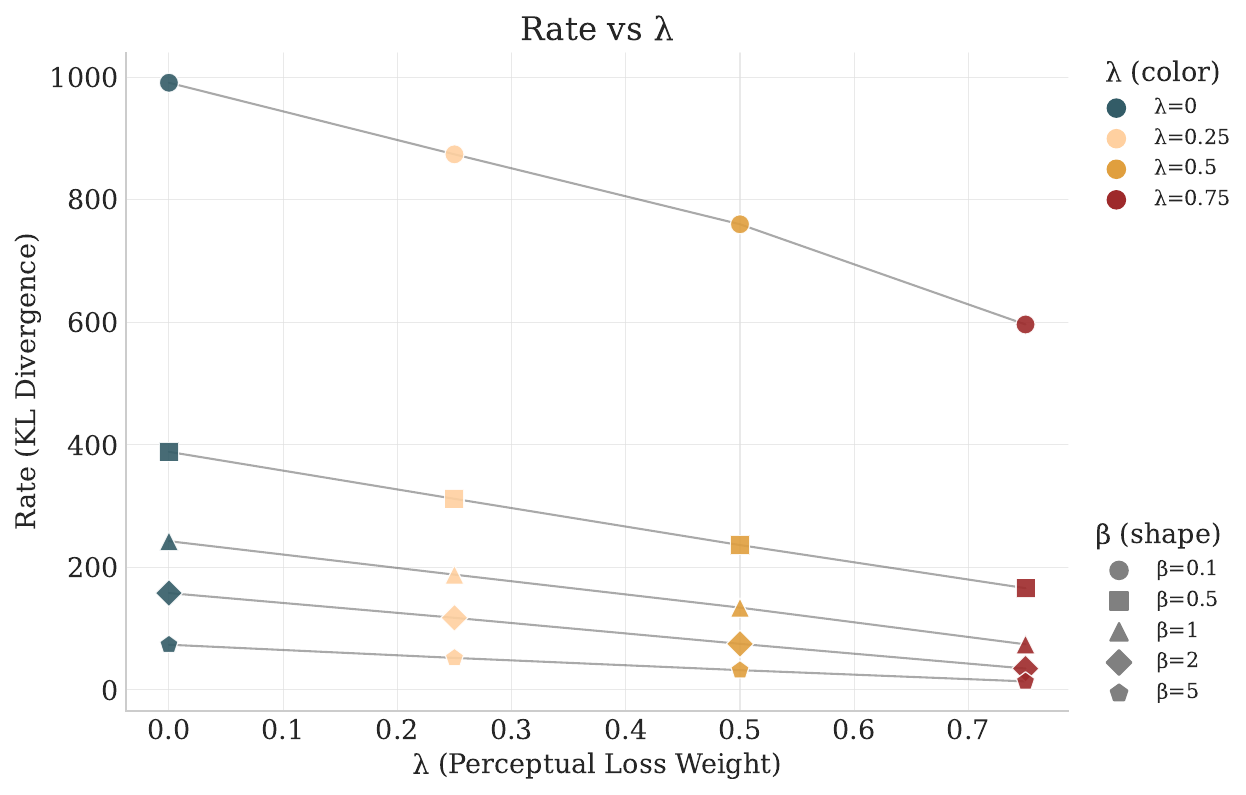}
  \caption{Experimental results supporting the claim in Section~\ref{subsec:claim1-empirical} for the AutoencoderKL architecture, trained on CelebA (left) and Tiny-ImageNet (right). It shows (for each $\beta$ value) a monotonically decreasing relation between $\lambda$, the weight of the perceptual loss, and the rate of the model at convergence.}
  \label{fig:autoencoder_kl_claim1}
\end{figure}

\begin{figure}[h]
  \centering
  \includegraphics[width=0.49\columnwidth]{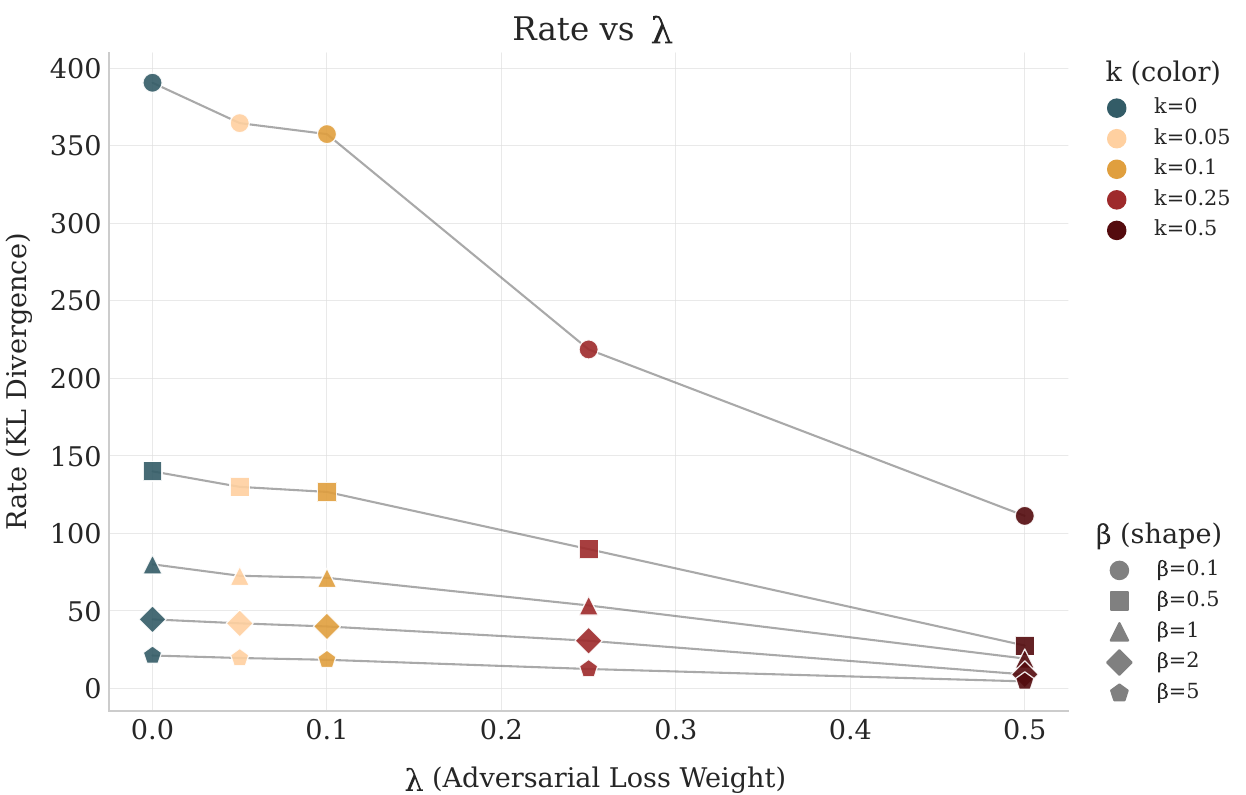}
  \caption{Experimental results supporting the claim in Section~\ref{sec:claim1} for the AutoencoderKL architecture trained on Tiny-ImageNet. It shows (for each $\beta$ value) a monotonically decreasing relation between $\lambda$, the weight of the adversarial loss, and the rate of the model at convergence.}
  \label{fig:autoencoder_kl_claim1-k}
\end{figure}

\begin{figure}[h]
  \centering
  \includegraphics[width=0.49\columnwidth]{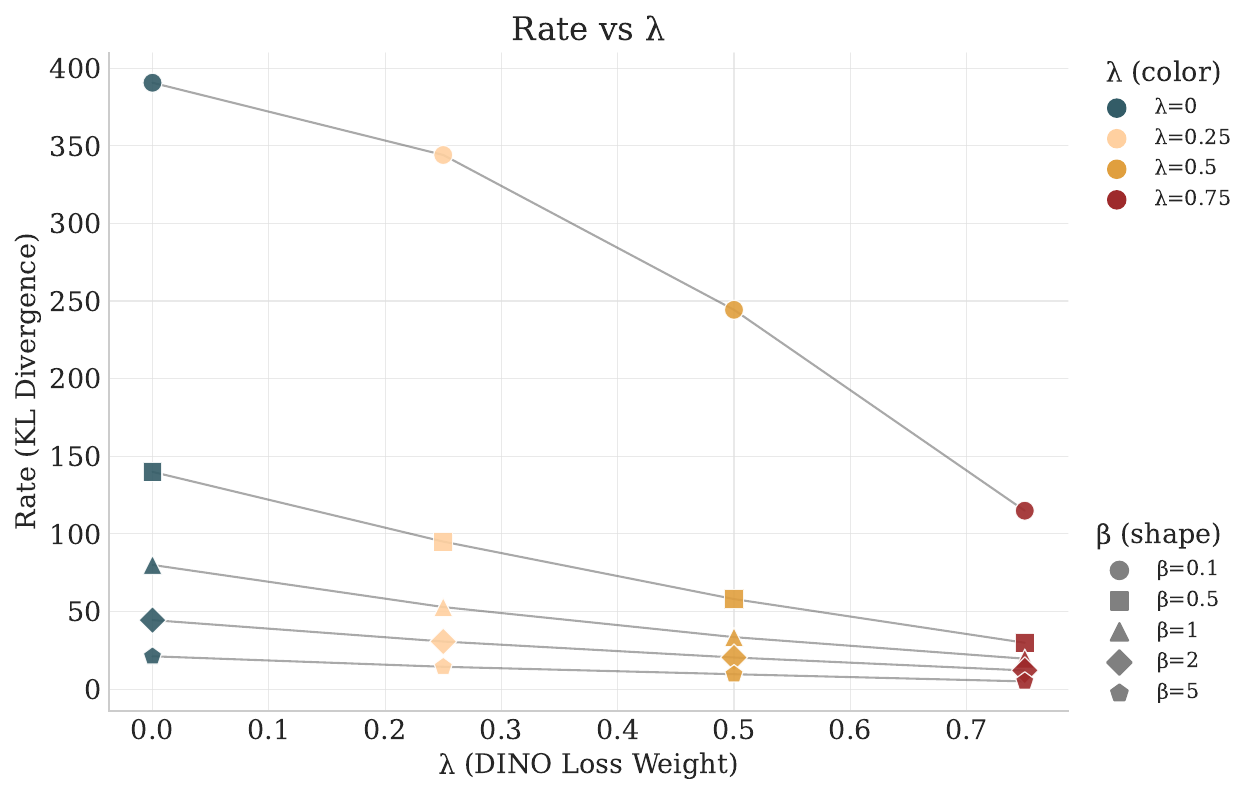}
  \caption{Experimental results supporting the claim in Section~\ref{sec:claim1} for the AutoencoderKL architecture trained on Tiny-ImageNet. It shows (for each $\beta$ value) a monotonically decreasing relation between $\lambda$, the weight of the DINO loss, and the rate of the model at convergence.}
  \label{fig:autoencoder_kl_claim1-dino}
\end{figure}

\begin{figure}[h]
  \centering
  \includegraphics[width=0.49\columnwidth]{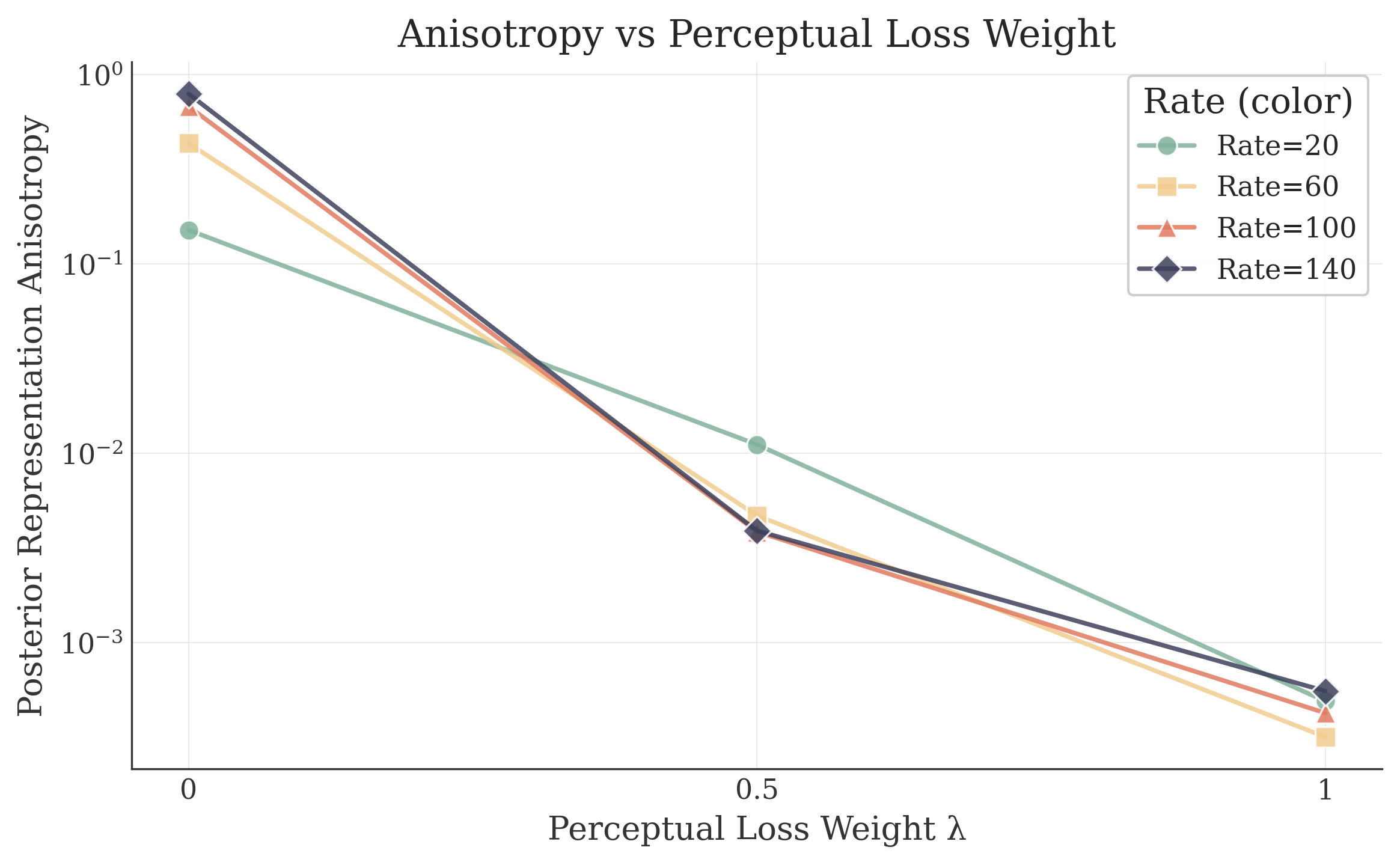}
  \caption{Rate-matched training of the AutoencoderKL architecture on Tiny-ImageNet using the perceptual loss LPIPS. Shows how, for each fixed rate (color), the average per-sample posterior representations become less anisotropic as the neural contribution to the reconstruction loss grows.
  \label{fig:ratematched-tinyimagenet}
  }
\end{figure}

\begin{figure}[h]
  \centering
  \includegraphics[width=0.7\columnwidth]{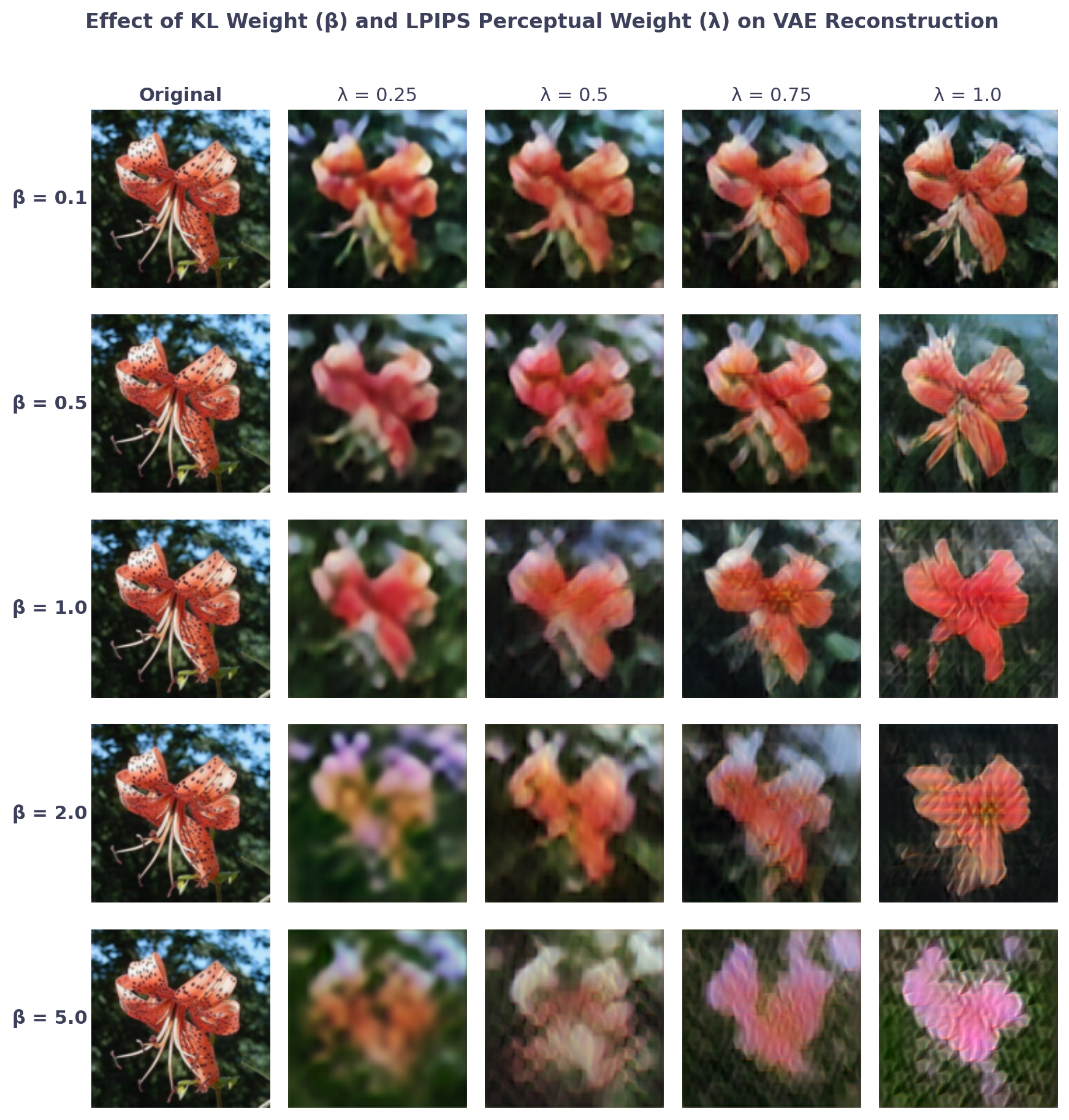}
  \caption{Reconstructions of the same input as the weight $\lambda$ of the perceptual loss and the coefficient $\beta$ vary. From left to right: reconstructions with an increasing perceptual term ($\lambda$). Increasing $\lambda$ progressively sharpens details and better preserves identity compared to the SSE-only baseline, while keeping the underlying latent regularization unchanged.}

  \label{fig:reconstructions_bis}
\end{figure}

\FloatBarrier
\clearpage
\section{Experimental setup}
\label{sec:experimental-setup}

This appendix documents the complete experimental protocol used throughout the
paper. All experiments share the same training stack and optimization schedule; the only quantities that vary across studies are the
reconstruction term $\widehat{D}$, its mixing weight $\lambda$, and either the
KL weight $\beta$ or the rate target $\hat{R}$ (when GECO is enabled).

\paragraph{Architectures and datasets.}
We evaluate two complementary VAE families to ensure that our conclusions are
not artifacts of a particular decoder inductive bias: a \emph{vector-latent}
VAE built with \texttt{pythae}~\cite{chadebec2022pythae} (\texttt{VAE} class,
$\sim$25.2M parameters), representative of the classical formulation, and a
\emph{spatial-latent} VAE based on the \href{https://huggingface.co/docs/diffusers/main/en/api/models/autoencoderkl#diffusers.AutoencoderKL}{AutoencoderKL} module from the
\texttt{diffusers} library ($\sim$14.4M parameters) ,
which preserves spatial latent grids and matches the encoder--decoder
backbone used in modern latent diffusion models. 

We use two image datasets:
\textbf{CelebA}\cite{liu2015faceattributes}, providing aligned, semantically
homogeneous human faces (the canonical benchmark for VAE rate--distortion
analysis), and \textbf{Tiny~ImageNet} \cite{deng2009imagenet}, providing a heterogeneous,
multi-class natural image distribution with strong intra-class variability.
The two datasets cover the low-entropy/structured and the
high-entropy/diverse extremes that bracket the use cases of generative
autoencoders. Images are center-cropped and normalized to $[-1,1]$, with horizontal-flip augmentation at training time.
The architectural configuration of the spatial-latent backbone is reported
in Table~\ref{tab:arch}.

\begin{table}[h]
\centering
\small
\caption{Architecture configuration of the spatial-latent VAE
(\texttt{AutoencoderKL}). The vector-latent backbone uses the default
\texttt{pythae} \texttt{VAE} configuration.}
\label{tab:arch}
\begin{tabular}{ll}
\toprule
\textbf{Field} & \textbf{Value} \\
\midrule
\texttt{latent\_channels}            & $4$ \\
\texttt{block\_out\_channels}        & $[128, 256, 512, 512]$ \\
\texttt{layers\_per\_block}          & $2$ \\
\texttt{act\_fn}                     & SiLU \\
\texttt{norm\_num\_groups}           & $32$ \\
\texttt{scaling\_factor}             & $1.0$ \\
\texttt{force\_upcast}               & true \\
\texttt{use\_quant\_conv}            & true \\
\texttt{use\_post\_quant\_conv}      & true \\
\texttt{mid\_block\_add\_attention}  & true \\
\bottomrule
\end{tabular}
\end{table}

\paragraph{Reconstruction terms $\widehat{D}$ and training objective.}
We evaluate four reconstruction objectives: (i)~pixel-space squared error (SSE);
(ii)~a \emph{perceptual} VGG-feature distance, in the
LPIPS parameterization \citep{zhang2018unreasonableeffectivenessdeepfeatures}; 
(iii)~a \emph{perceptual} distance based on the self-supervised DINOv2 backbone \cite{oquab2023dinov2};
(iv)~a \emph{discriminative} PatchGAN hinge loss with feature
matching\citep{isola2018imagetoimagetranslationconditionaladversarial}. 

The training objective is
\[
\mathcal{L} \;=\; \mathcal{B}\!\left(\lambda\,\widehat{D} + (1-\lambda)\,\text{SSE}\right)
              \;+\; \beta\,\text{KL},
\]
swept over $\lambda\in\{0,0.25,0.5,0.75,1\}$ and $\beta\in\{0.1,0.5,1,2,5\}$.

\paragraph{Loss balancer $\mathcal{B}$.}
Reconstruction terms operate at very different natural scales (pixel SSE is
$O(\!10^{-2}\!)$, VGG/DINO features are $O(\!1\!)$, GAN losses fluctuate by
orders of magnitude during training), so fixed scalar weights are not
comparable across $\widehat{D}$ choices. We adopt the EnCodec
balancer~\cite{defossez2022high}, which decouples the weight of each
loss from its magnitude. Given losses $\ell_i$ and gradients
$g_i = \partial \ell_i / \partial \hat{x}$ at the decoder output, the
balancer rescales each gradient as
$\tilde{g}_i = R\cdot \frac{\lambda_i}{\sum_j \lambda_j} \cdot
\frac{g_i}{\langle \lVert g_i \rVert_2 \rangle}$,
with $\langle \cdot \rangle$ an EMA of the per-loss gradient norm, and
backpropagates $\sum_i \tilde{g}_i$ through the network. With
$\sum_i \lambda_i = 1$ each $\lambda_i$ is literally the fraction of the
gradient supplied by $\ell_i$, making sweeps over $\lambda$ directly
interpretable across all four reconstruction terms. The KL term is added
\emph{outside} the balancer, in line with EnCodec's treatment of the
commitment loss.

\paragraph{Dynamic $\beta$ via GECO.}
For the constant-rate experiments (Section~\ref{sec:claim2}) we replace
the fixed $\beta$ with the dynamic Lagrange multiplier of
GECO \cite{DBLP:journals/corr/abs-1810-00597}, which formulates training as the constrained
problem $\min \widehat{D}$ s.t.\ $\mathbb{E}[\text{KL}] \le \hat{R}$. We
target the rate directly via the relative violation
$\mathcal{C}_t = \big(\text{KL}^{\text{ma}}_t - \hat{R}\big)/\hat{R}$,
where $\text{KL}^{\text{ma}}_t$ is an EMA of the per-step KL. The multiplier
is updated multiplicatively,
$\beta_t = \beta_{t-1}\,(1 + \alpha\,\mathcal{C}_t)$,
with a symmetric clamp keeping $\beta_t$ in a safe range. The multiplicative
form keeps $\beta>0$ without explicit projection, the EMA absorbs minibatch
noise, and to decouple GECO from the warm-up transient the multiplier is
held at $1$ during the warm-up phase, with only the EMA tracked. We use
rate targets $\hat{R}\in\{20,60,100,140\}$.

\paragraph{Optimization and hardware.}
All numerical settings of the training loop, of the balancer, and of GECO
are summarized in Table~\ref{tab:hparams}. Training is run on
$2{\times}$~NVIDIA A100~64\,GB GPUs with PyTorch~Lightning DDP. Generator
and discriminator (when active) are optimized by separate Adam instances:
the generator's learning rate follows a cosine schedule over the full
training horizon, while the discriminator's is held constant at half the
generator's rate, with momentum $(\beta_1,\beta_2)=(0,0.9)$ as is standard
in GAN training. 

\begin{table}[h]
\centering
\small
\caption{Optimization, balancer, and GECO hyperparameters. All values are
fixed across runs; only $\lambda$, $\beta$ (or $\hat{R}$ in rate-matched
experiments), and $\widehat{D}$ vary across the sweeps.}
\label{tab:hparams}
\begin{tabular}{lll}
\toprule
\textbf{Group} & \textbf{Field} & \textbf{Value} \\
\midrule
\multirow{6}{*}{Optimization}
  & Optimizer (generator)        & Adam, $(\beta_1,\beta_2)=(0.9,0.999)$, no weight decay \\
  & Learning rate (generator)    & $10^{-4}$, cosine annealing over \texttt{max\_epochs} \\
  & Optimizer (discriminator)    & Adam, $(\beta_1,\beta_2)=(0,0.9)$ \\
  & Learning rate (discriminator)& $5\!\times\!10^{-5}$, constant \\
  & Precision                    & bf16 mixed \\
  & Batch size / device          & $24$ \\
\midrule
\multirow{4}{*}{Schedule}
  & Image resolution             & $128\!\times\!128$ \\
  & Max epochs                   & $25$ \\
  & \texttt{val\_check\_interval}& $100$ steps \\
  & \texttt{best\_checkpoint\_k} & $3$ \\
\midrule
\multirow{2}{*}{KL warm-up}
  & Schedule                     & sine, $0\!\to\!1$ \\
  & Steps                        & $1000$ \\
\midrule
\multirow{2}{*}{Balancer}
  & Reference norm $R$           & $1$ \\
  & EMA decay                    & $0.999$ \\
\midrule
\multirow{4}{*}{GECO}
  & Step size $\alpha$           & $5\!\times\!10^{-4}$ \\
  & EMA decay (KL)               & $0.99$ \\
  & Initial $\beta_0$            & $1$ \\
  & Clamp                        & $[10^{-4},\,10^{4}]$ \\
\midrule
Hardware
  & GPUs                         & $2 \times$ A100 64\,GB, DDP \\
\bottomrule
\end{tabular}
\end{table}

% We verify empirically all of our claims by training VAE models on the CelebA dataset \cite{liu2015faceattributes} and Tiny-Imagenet \cite{deng2009imagenet}. We repeat our experiments throughout two different model architectures:

% \begin{enumerate}[label=(\roman*), leftmargin=20pt, topsep=0pt]
%     \item the traditional VAE architecture as implemented by \texttt{pythae} \cite{chadebec2022pythae} (25.2M params);
%     \item a spatial-feature VAE as proposed in \href{https://huggingface.co/docs/diffusers/main/en/api/models/autoencoderkl#diffusers.AutoencoderKL}{AutoencoderKL} from the Diffusers library (14.4M params).
% \end{enumerate} 
\FloatBarrier
%\input{sections/appendix0}
%\FloatBarrier
\clearpage
\section{Weaker distortions induce no larger optimal rate}
\label{app:claim1-complete-proofs}

Here we provide formal proof of our first claim stated in Section~\ref{sec:claim1}. The proof will be divided in the following sections:

\begin{itemize}
    \item \textit{Preliminaries and Definitions} We formalize a notion of \emph{weakness} between distortion functions via pointwise domination (Definitions~\ref{def:domination} and~\ref{def:affine-domination})
    \item \textit{Weaker distortion implies no larger optimal rate} We show that a weaker distortion has an RD curve that is nowhere larger (\Cref{thm:rd-domination,thm:rd-affine-domination}).
    The key idea is set inclusion: at a fixed distortion budget, the feasible set of reconstruction kernels for a weaker distortion
    contains the feasible set of a stronger one; hence the infimum mutual information cannot increase.
    \item \textit{When the Gap is Strict}
    Because set inclusion can be tight, the RD ordering need not be     strict. We give a simple sufficient condition
    for a strict gap: if the weaker distortion ignores a component of the   data that the stronger distortion must reconstruct below
    its zero-rate baseline, then any feasible reconstruction for the    stronger distortion must transmit \emph{additional} information.
    \item \textit{Application: Proof of Weakness for Common Distortions}
    Finally, we show that common perceptual and discriminator-based objectives used in VAE training fit the framework:
    (i) feature-space MSEs are dominated by pixel MSE under a Lipschitz assumption on the feature map; and
    (ii) both feature matching and (shifted) hinge generator losses admit domination (or affine domination) bounds by pixel MSE
    under standard regularity assumptions.
\end{itemize}

\subsection{Preliminaries and Definitions}
\label{app:framework:rd}
We briefly recall standard notions from Shannon rate-distortion theory \cite{cover} that we will use to state and compare optimal information-distortion tradeoffs. In particular, we formalize (i) reconstruction rules as conditional kernels, (ii) distortion budgets via an expected loss, and (iii) the resulting Shannon RD function $R_d(\Delta)$.

\begin{definition}[Reconstruction kernel]
  \label{def:recon-kernel}
  Let $X\sim p_X$ take values in a measurable space $\mathcal X$,
  and let $\hat X$ take values in a measurable space $\hat{\mathcal X}$.
  A \emph{reconstruction kernel} is a conditional distribution $r(\hat x\mid x)$
  inducing the joint $p_X(x)\,r(\hat x\mid x)$ and hence the pair $(X,\hat X)$.
\end{definition}

\begin{definition}[Distortion, achieved distortion, and rate]
  \label{def:distortion-rate}
  A \emph{distortion} is a measurable function $d:\mathcal X\times\hat{\mathcal X}\to[0,\infty)$.
  For a reconstruction kernel $r(\hat x\mid x)$, define the achieved distortion and rate by
  \begin{equation}
    \label{eq:rd-achieved}
    \mathcal D(r;d) := \mathbb E[d(X,\hat X)],
    \qquad
    \mathcal R(r) := I(X;\hat X),
  \end{equation}
  where the expectation and mutual information are computed under the joint distribution
  $p_X(x)\,r(\hat x\mid x)$.
\end{definition}

\begin{definition}[Shannon rate-distortion function]
  \label{def:shannon-rd}
  For a distortion budget $\Delta\ge 0$, the \emph{Shannon rate-distortion function} is
  \begin{equation}
    \label{eq:shannon-rd}
    R_d(\Delta)
    :=
    \inf_{r:\; \mathcal D(r;d)\le \Delta}\; \mathcal R(r).
  \end{equation}
\end{definition}

Thereafter, we introduce explicitly two important remarks to contextualize these definitions:

\begin{remark}[Interpretation of $R_d(\Delta)$]
  The quantity $R_d(\Delta)$ is the smallest mutual information $I(X;\hat X)$
  achievable by any (possibly stochastic) reconstruction rule $\hat X\sim r(\cdot\mid X)$
  while keeping the expected distortion $\mathbb E[d(X,\hat X)]$ at most~$\Delta$.
\end{remark}

\begin{remark}[Connection to encoder-decoder models]
  In a VAE (or any stochastic autoencoder), sampling $Z\sim q_\phi(z\mid x)$ and then
  sampling $\hat X\sim p_\theta(x\mid Z)$ induces a reconstruction kernel
  \begin{equation}
    \label{eq:vae-induced-kernel}
    r_{\phi,\theta}(\hat x\mid x)
    :=
    \int p_\theta(\hat x\mid z)\,q_\phi(z\mid x)\,dz.
  \end{equation}
  Our claim reasons about how changing the distortion $d$ reshapes the RD curve \eqref{eq:shannon-rd},
  and we use this Shannon-level ordering as a clean theoretical proxy for what is observed under
  ELBO training.
\end{remark}

To formalize comparisons between distortion functions, we next introduce simple pointwise domination relations.
These relations provide a convenient sufficient condition for one distortion constraint to imply another, and will
translate directly into monotonicity statements for the corresponding Shannon RD functions.

\begin{definition}[Domination up to scale]
  \label{def:domination}
  For $c>0$, write $d_2 \preceq_c d_1$ if
  \begin{equation}
    \label{eq:domination}
    d_2(x,\hat x)\le c\,d_1(x,\hat x)\qquad \forall (x,\hat x)\in\mathcal{X}\times\hat{\mathcal{X}}.
  \end{equation}
\end{definition}

\begin{definition}[Affine domination]
  \label{def:affine-domination}
  For $c>0$ and $b\ge 0$, write $d_2 \preceq_{c,b} d_1$ if
  \begin{equation}
    \label{eq:affine-domination}
    d_2(x,\hat x)\le c\,d_1(x,\hat x)+b\qquad \forall (x,\hat x)\in\mathcal{X}\times\hat{\mathcal{X}}.
  \end{equation}
\end{definition}

\subsection{Weaker Distortion Implies No Larger Optimal Rate}
\label{app:claim1:core}

The following results are the basic monotonicity principles used throughout the paper.

% \begin{theorem}[RD ordering under affine domination]
%   \label{thm:rd-affine-domination}
%   If $d_2 \preceq_{c,b} d_1$ for some $c>0$ and $b\ge 0$, then for every $\Delta\ge b$,
%   \begin{equation}
%     R_{d_2}(\Delta)
%     \le
%     R_{d_1}\!\left(\frac{\Delta-b}{c}\right).
%   \end{equation}
% \end{theorem}

% \begin{proof}
% For any reconstruction kernel $r$,
% \[
%     \mathcal D(r;d_2)
%     \le c\,\mathcal D(r;d_1)+b .
% \]
% Therefore,
% \[
%     \{r:\mathcal D(r;d_1)\le(\Delta-b)/c\}
%     \subseteq
%     \{r:\mathcal D(r;d_2)\le\Delta\}.
% \]
% Taking the infimum of $I(X;\hat X)$ over the larger feasible set cannot increase the value, yielding the claim.
% \end{proof}

% \begin{corollary}[RD ordering under domination]
%   \label{thm:rd-domination}
%   If $d_2 \preceq_c d_1$, then for every $\Delta\ge 0$,
%   \begin{equation}
%     R_{d_2}(\Delta)
%     \le
%     R_{d_1}(\Delta/c).
%   \end{equation}
% \end{corollary}

% \begin{proof}
% Apply \cref{thm:rd-affine-domination} with $b=0$.
% \end{proof}

\begin{theorem}[RD ordering under domination]
  \label{thm:rd-domination}
  If $d_2 \preceq_c d_1$ for some $c>0$, then for every $\Delta\ge 0$,
  \begin{equation}
    \label{eq:rd-order}
    R_{d_2}(\Delta) \;\le\; R_{d_1}(\Delta/c).
  \end{equation}
\end{theorem}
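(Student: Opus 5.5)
The plan is a feasible-set inclusion argument; no information-theoretic machinery beyond the definition of $R_d$ in Definition~\ref{def:shannon-rd} is needed.

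\textbf{Step 1: from pointwise to expected domination.} Fix $\Delta\ge 0$ and any reconstruction kernel $r$. The hypothesis $d_2\preceq_c d_1$ (Definition~\ref{def:domination}) gives $d_2(x,\hat x)\le c\,d_1(x,\hat x)$ for every pair. Integrate this against the joint $p_X(x)\,r(\hat x\mid x)$. Both distortions are nonnegative and measurable, so their expectations are well defined in $[0,\infty]$ and monotonicity of the integral applies. This yields $\mathcal D(r;d_2)\le c\,\mathcal D(r;d_1)$.

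\textbf{Step 2: inclusion of feasible sets.} Suppose $\mathcal D(r;d_1)\le \Delta/c$. By Step~1, $\mathcal D(r;d_2)\le \Delta$. Hence
\[
\{r:\mathcal D(r;d_1)\le \Delta/c\}\subseteq\{r:\mathcal D(r;d_2)\le \Delta\}.
\]
The objective $\mathcal R(r)=I(X;\hat X)$ does not depend on the distortion, so both infima are taken of the same functional. An infimum over a larger set is no larger, which gives $R_{d_2}(\Delta)\le R_{d_1}(\Delta/c)$.

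\textbf{Step 3: the degenerate case.} If the left feasible set is empty, the convention $\inf\emptyset=+\infty$ makes the inequality trivial. This convention should be stated explicitly, since Definition~\ref{def:shannon-rd} leaves it implicit.

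The only place needing care, and hence the main (mild) obstacle, is Step~1 when expectations may be infinite. Because everything is nonnegative, the integral inequality holds in the extended reals. A kernel with $\mathcal D(r;d_1)=\infty$ is simply never feasible, so nothing breaks.
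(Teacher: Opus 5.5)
Your proposal is correct and follows the paper's proof: integrate the pointwise bound to get $\mathcal D(r;d_2)\le c\,\mathcal D(r;d_1)$, deduce $\{r:\mathcal D(r;d_1)\le \Delta/c\}\subseteq\{r:\mathcal D(r;d_2)\le \Delta\}$, and take infima of $I(X;\hat X)$. Your remarks on extended-real expectations and the convention $\inf\emptyset=+\infty$ are harmless refinements that the paper leaves implicit.
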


\begin{proof}
  Fix $\Delta\ge 0$ and define feasible sets of reconstruction kernels
  \[
    \mathcal F_2(\Delta) := \{r:\; \mathcal D(r;d_2)\le \Delta\},
    \qquad
    \mathcal F_1(\Delta/c) := \{r:\; \mathcal D(r;d_1)\le \Delta/c\}.
  \]
  Take any $r\in\mathcal F_1(\Delta/c)$. Since $d_2(x,\hat x)\le c\,d_1(x,\hat x)$ pointwise,
  \[
    \mathcal D(r;d_2)=\mathbb E[d_2(X,\hat X)]
    \le c\,\mathbb E[d_1(X,\hat X)]
    = c\,\mathcal D(r;d_1)
    \le c\cdot(\Delta/c)=\Delta.
  \]
  Hence $r\in\mathcal F_2(\Delta)$ and $\mathcal F_1(\Delta/c)\subseteq \mathcal F_2(\Delta)$.
  Taking infima of $I(X;\hat X)$ over the two feasible sets yields \eqref{eq:rd-order}.
\end{proof}

\begin{theorem}[RD ordering under affine domination]
  \label{thm:rd-affine-domination}
  If $d_2 \preceq_{c,b} d_1$ for some $c>0$ and $b\ge 0$, then for every $\Delta\ge b$,
  \begin{equation}
    \label{eq:rd-order-affine}
    R_{d_2}(\Delta) \;\le\; R_{d_1}\!\left(\frac{\Delta-b}{c}\right).
  \end{equation}
\end{theorem}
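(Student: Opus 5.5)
The plan is to repeat the feasible-set inclusion argument from the proof of \Cref{thm:rd-domination}, now keeping track of the additive slack $b$. Fix $\Delta\ge b$ and set $\Delta' := (\Delta-b)/c$. Since $c>0$ and $\Delta\ge b$, this budget is nonnegative and therefore admissible in \Cref{def:shannon-rd}. This is the only place the hypothesis $\Delta\ge b$ enters. We compare the feasible sets
\[
\mathcal F_1(\Delta') := \{r:\ \mathcal D(r;d_1)\le \Delta'\},
\qquad
\mathcal F_2(\Delta) := \{r:\ \mathcal D(r;d_2)\le \Delta\}.
\]

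Take any reconstruction kernel $r\in\mathcal F_1(\Delta')$. We integrate the pointwise bound $d_2\le c\,d_1+b$ against the joint law $p_X(x)\,r(\hat x\mid x)$. Linearity and monotonicity of expectation then give
\[
\mathcal D(r;d_2)\le c\,\mathcal D(r;d_1)+b\le c\,\Delta'+b=\Delta .
\]
Here we use that the joint is a probability measure, so the constant $b$ integrates to exactly $b$. Hence $\mathcal F_1(\Delta')\subseteq\mathcal F_2(\Delta)$.

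The objective $I(X;\hat X)$ depends only on $r$, since the source law $p_X$ is fixed. An infimum over a larger set is no larger, so $R_{d_2}(\Delta)\le R_{d_1}(\Delta')$, which is \eqref{eq:rd-order-affine}. If $\mathcal F_1(\Delta')$ is empty, its infimum is $+\infty$ and the inequality holds trivially.

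The equivalent forms in the main-text Observation follow by substituting $\Delta = c\Delta'+b$. The case $b=0$ recovers \Cref{thm:rd-domination}. For $R_{d_2/c}$, note that $\mathcal D(r;d_2/c)\le\Delta$ holds if and only if $\mathcal D(r;d_2)\le c\Delta$.

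The only mildly delicate step is the expectation inequality. The distortions are nonnegative and measurable, so expectations are well defined in $[0,\infty]$ and monotonicity holds without any integrability assumption. If $\mathcal D(r;d_1)=\infty$, then $r$ lies in no finite feasible set anyway. No other obstacle is expected.
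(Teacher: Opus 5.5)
Your proposal is correct and follows the paper's proof essentially verbatim: integrate $d_2\le c\,d_1+b$ to get $\mathcal D(r;d_2)\le c\,\mathcal D(r;d_1)+b$, deduce $\{r:\mathcal D(r;d_1)\le(\Delta-b)/c\}\subseteq\{r:\mathcal D(r;d_2)\le\Delta\}$, and take infima of $I(X;\hat X)$. Your extra remarks (nonnegativity of the budget $(\Delta-b)/c$, the empty-set convention, and expectations valued in $[0,\infty]$) are valid refinements that the paper leaves implicit.
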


\begin{proof}
  If $d_2(x,\hat x)\le c\,d_1(x,\hat x)+b$ pointwise, then for any reconstruction kernel $r$,
  \[
    \mathcal D(r;d_2)=\mathbb E[d_2(X,\hat X)]
    \le c\,\mathbb E[d_1(X,\hat X)] + b
    = c\,\mathcal D(r;d_1)+b.
  \]
  Thus any $r$ satisfying $\mathcal D(r;d_1)\le (\Delta-b)/c$ also satisfies $\mathcal D(r;d_2)\le \Delta$.
  Equivalently, $\{r:\mathcal D(r;d_1)\le(\Delta-b)/c\}\subseteq \{r:\mathcal D(r;d_2)\le \Delta\}$.
  Taking infima of $I(X;\hat X)$ over these sets yields \eqref{eq:rd-order-affine}.
\end{proof}

In words, \Cref{thm:rd-domination,thm:rd-affine-domination} formalize the intuition that a pointwise weaker distortion constraint enlarges the feasible set of reconstruction kernels, so the optimal mutual information (rate) cannot increase, up to the corresponding rescaling (and shift) of the distortion budget.

\subsection{When can the RD inequality be strict?}
\label{app:claim1:strict}

Section~\ref{app:claim1:core} establishes only a non-strict RD ordering. We now identify conditions under which this ordering becomes strict, which is the regime relevant to our analysis: a weaker distortion should not merely enlarge the feasible set, but yield a strictly smaller optimal rate at the same numerical distortion budget. The bounds in \Cref{thm:rd-domination,thm:rd-affine-domination} can be tight when the additional components penalized by the stronger distortion can remain at their \emph{zero-rate} optimum without violating the distortion constraint, so an RD-optimal reconstruction need not convey extra information. In contrast, a strict gap arises when the stronger distortion requires improving some component beyond its zero-rate baseline, which is impossible without transmitting information about that component. The next lemma formalizes this mechanism in a simple independence model.

We assume $X=(U,V)$ where $U\in\mathbb{R}^{n_U}$ and $V\in\mathbb{R}^{n_V}$ are random vectors with $U\perp V$.
Consider the weak (feature-like) and strong (pixel-like) distortions
\begin{align}
  d_{\mathrm{weak}}((u,v),(\hat u,\hat v)) &:= \|u-\hat u\|_2^2,\\
  d_{\mathrm{strong}}((u,v),(\hat u,\hat v)) &:= \|u-\hat u\|_2^2 + \|v-\hat v\|_2^2.
\end{align}

Intuitively, $d_{\mathrm{weak}}$ depends only on the $U$-coordinates, so we may choose a reconstruction that ignores $V$ entirely (e.g., by setting $\hat V$ to a constant) without affecting feasibility. The next lemma formalizes this reduction by showing that optimizing the RD tradeoff under $d_{\mathrm{weak}}$ is equivalent to the standard RD problem for the marginal source $U$.

\begin{lemma}[Weak RD equals the RD of $U$]
  \label{lem:weak-equals-u}
  For all $\Delta\ge 0$,
  \begin{equation}
    \label{eq:weak-equals-u}
    R_{d_{\mathrm{weak}}}(\Delta) \;=\; R_{U}(\Delta),
  \end{equation}
  where $R_U(\Delta)$ denotes the Shannon RD function of the source $U$ under squared error $\|u-\hat u\|_2^2$.
\end{lemma}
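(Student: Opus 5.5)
We establish the two inequalities $R_{d_{\mathrm{weak}}}(\Delta)\le R_U(\Delta)$ and $R_{d_{\mathrm{weak}}}(\Delta)\ge R_U(\Delta)$ separately, each by mapping feasible kernels of one problem to feasible kernels of the other without increasing mutual information.

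\emph{Upper bound.} Take any kernel $r_U(\hat u\mid u)$ for the source $U$ with $\mathbb E\|U-\hat U\|_2^2\le\Delta$. Lift it to a kernel on $X=(U,V)$ by setting $\hat U\sim r_U(\cdot\mid U)$ and $\hat V:=v_0$ for a fixed constant $v_0\in\mathbb R^{n_V}$.

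Since $d_{\mathrm{weak}}$ depends only on $(u,\hat u)$, the achieved distortion is unchanged, so the lifted kernel is feasible for $d_{\mathrm{weak}}$ at budget $\Delta$. Because $\hat V$ is deterministic, $I(X;\hat X)=I(U,V;\hat U)$. Moreover $\hat U$ depends on $X$ only through $U$, so $V\to U\to\hat U$ is a Markov chain. Together with $U\perp V$, the chain rule gives
\[
I(U,V;\hat U)=I(U;\hat U)+I(V;\hat U\mid U)=I(U;\hat U).
\]
Taking the infimum over all such $r_U$ yields $R_{d_{\mathrm{weak}}}(\Delta)\le R_U(\Delta)$.

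\emph{Lower bound.} Take any kernel $r(\hat u,\hat v\mid u,v)$ feasible for $d_{\mathrm{weak}}$ at budget $\Delta$. Its $(U,\hat U)$-marginal defines a kernel $\hat U\mid U$ for the source $U$, obtained by averaging over $V$ and marginalizing out $\hat V$. This marginal kernel has squared-error distortion $\mathbb E\|U-\hat U\|_2^2=\mathcal D(r;d_{\mathrm{weak}})\le\Delta$, so it is feasible for $R_U(\Delta)$.

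For the rate, monotonicity of mutual information (chain rule together with nonnegativity) gives
\[
I(X;\hat X)=I(U,V;\hat U,\hat V)\ge I(U;\hat U,\hat V)\ge I(U;\hat U)\ge R_U(\Delta).
\]
Taking the infimum over feasible $r$ gives $R_{d_{\mathrm{weak}}}(\Delta)\ge R_U(\Delta)$.

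\emph{Main obstacle.} The only delicate step is measure-theoretic rigor in the general (non-discrete) setting. We need the marginal kernel $\hat U\mid U$ to be well defined, which follows from the disintegration theorem on standard Borel spaces ($\mathbb R^n$ here). We also need the chain rule and data-processing inequalities to hold with possibly infinite mutual information; this is handled by working with the general Kolmogorov/Dobrushin definition of mutual information, for which these identities remain valid with the convention $\infty\ge\infty$.

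Independence $U\perp V$ enters only in the upper bound, where it guarantees that ignoring $V$ costs nothing. Strictly speaking, the Markov structure $V\to U\to\hat U$ alone already makes $I(V;\hat U\mid U)=0$ there, so independence is not essential for this lemma; we note it to keep the statement aligned with the subsequent strictness argument.
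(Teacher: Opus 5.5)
Your proposal is correct and follows the paper's proof essentially step for step. The upper bound lifts a $U$-kernel by fixing $\hat V$ to a constant, and the lower bound restricts a feasible kernel to its $(U,\hat U)$-marginal and uses monotonicity of mutual information. The one difference is that you justify $I(U,V;\hat U)=I(U;\hat U)$ through the Markov chain $V\to U\to\hat U$, whereas the paper appeals to $U\perp V$; you also correctly observe that independence is not actually needed for this lemma.
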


\begin{proof}
  We prove equality by matching an upper bound and a lower bound.

  \emph{Upper bound.}
  Fix any kernel $r_U(\hat u\mid u)$ such that $\mathbb E\|U-\hat U\|_2^2\le \Delta$.
  Define a kernel for $(\hat U,\hat V)$ given $(U,V)$ by sampling $\hat U\sim r_U(\cdot\mid U)$ and setting $\hat V\equiv \hat v_0$
  to a constant independent of $(U,V)$.
  Then the achieved distortion under $d_{\mathrm{weak}}$ is exactly $\mathbb E\|U-\hat U\|_2^2\le \Delta$,
  so the constructed kernel is feasible for $R_{d_{\mathrm{weak}}}(\Delta)$.
  Moreover, since $\hat V$ is constant and carries no information, and since $U\perp V$,
  \[
    I(U,V;\hat U,\hat V)=I(U,V;\hat U)=I(U;\hat U).
  \]
  Taking the infimum over all such $r_U$ yields $R_{d_{\mathrm{weak}}}(\Delta)\le R_U(\Delta)$.

  \emph{Lower bound.}
  Take any feasible kernel $r(\hat u,\hat v\mid u,v)$ for $R_{d_{\mathrm{weak}}}(\Delta)$.
  Feasibility implies $\mathbb E\|U-\hat U\|_2^2\le \Delta$.
  By the definition of $R_U(\Delta)$, this implies $I(U;\hat U)\ge R_U(\Delta)$.
  Finally, adding variables cannot decrease mutual information, hence
  \[
    I(U,V;\hat U,\hat V)\ge I(U;\hat U)\ge R_U(\Delta).
  \]
  Taking the infimum over all feasible $r$ yields $R_{d_{\mathrm{weak}}}(\Delta)\ge R_U(\Delta)$.

  Combining the two bounds proves \eqref{eq:weak-equals-u}.
\end{proof}

Having reduced the weak distortion to an RD problem on $U$ alone, we now isolate the role of $V$ under the strong distortion. The key quantity is the best achievable squared-error distortion on $V$ when the reconstruction carries \emph{no information} about $V$, i.e., the zero-rate baseline. 

\begin{definition}[Zero-rate distortion level under squared error]
  \label{def:zero-rate-distortion}
  Let $V$ be an $\mathbb{R}^{n_V}$-valued random vector. The \emph{zero-rate distortion level} for $V$ under squared error is
  \begin{equation}
    \label{eq:dv0}
    D_V^{(0)} \;:=\; \inf_{\hat v\in\mathbb{R}^{n_V}} \mathbb{E}\big[\|V-\hat v\|_2^2\big].
  \end{equation}
\end{definition}

The zero-rate level $D_V^{(0)}$ marks the smallest expected error achievable by a constant reconstruction of $V$. If the strong distortion budget forces $\hat V$ to beat this baseline, then $\hat V$ cannot be independent of $V$ and must necessarily encode information about it. The next lemma makes this implication precise.

\begin{lemma}[Beating the zero-rate baseline forces information about $V$]
  \label{lem:positive-info}
  If $\mathbb{E}\|V-\hat V\|_2^2 < D_V^{(0)}$, then $I(V;\hat V)>0$.
\end{lemma}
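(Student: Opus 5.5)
We argue by contraposition: assuming $I(V;\hat V)=0$, we show $\mathbb{E}\|V-\hat V\|_2^2 \ge D_V^{(0)}$. Mutual information vanishes exactly when the joint law of $(V,\hat V)$ factorizes, so $I(V;\hat V)=0$ means $V\perp \hat V$. The plan is to reduce the expected squared error to an average of constant-reconstruction errors, each of which is at least $D_V^{(0)}$ by Definition~\ref{def:zero-rate-distortion}.

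\textbf{Step 1 (conditioning).} By independence, the conditional law of $V$ given $\hat V=\hat v$ equals the marginal law of $V$ for $P_{\hat V}$-almost every $\hat v$. Fubini/Tonelli then gives
\[
\mathbb{E}\|V-\hat V\|_2^2=\int \mathbb{E}\big[\|V-\hat v\|_2^2\big]\,dP_{\hat V}(\hat v).
\]
The integrand is nonnegative, so Tonelli applies without any integrability assumptions.

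\textbf{Step 2 (pointwise bound).} For each fixed $\hat v$, the integrand is at least $\inf_{\hat v'}\mathbb{E}\|V-\hat v'\|_2^2=D_V^{(0)}$. Integrating against the probability measure $P_{\hat V}$ therefore yields $\mathbb{E}\|V-\hat V\|_2^2\ge D_V^{(0)}$. This contradicts the hypothesis, so $I(V;\hat V)>0$.

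\textbf{Edge cases.} If $\mathbb{E}\|V\|_2^2=\infty$, then $D_V^{(0)}=\infty$. The argument still goes through, since the hypothesis then only says the error is finite, and Step 2 gives an infinite lower bound. If $D_V^{(0)}<\infty$, it equals $\mathrm{tr}\,\mathrm{Cov}(V)$, attained at $\hat v=\mathbb{E}V$, though we do not need this explicit form.

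\textbf{Main obstacle.} The only delicate point is the characterization "$I(V;\hat V)=0 \iff V\perp\hat V$" in general measurable spaces. We will invoke it through the definition of mutual information as $\mathrm{KL}(P_{V\hat V}\,\|\,P_V\otimes P_{\hat V})$, which is zero if and only if the two measures coincide. Everything else is routine.
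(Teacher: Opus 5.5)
Your proof is correct and follows the paper's argument: you argue by contrapositive, $I(V;\hat V)=0$ gives independence, and averaging the constant-reconstruction error $\mathbb{E}\|V-\hat v\|_2^2$ over $P_{\hat V}$ bounds the expected error below by $D_V^{(0)}$. Your remarks on Tonelli, on the case $\mathbb{E}\|V\|_2^2=\infty$ (where $D_V^{(0)}=\infty$), and on the KL characterization of $I(V;\hat V)=0$ are sound details that the paper leaves implicit.
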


\begin{proof}
  We prove the contrapositive. If $I(V;\hat V)=0$, then $V$ and $\hat V$ are independent, and
  \[
    \mathbb{E}\|V-\hat V\|_2^2
    = \mathbb{E}_{\hat V}\big[\mathbb{E}\|V-\hat v\|_2^2\big]
    \ge \inf_{\hat v}\mathbb{E}\|V-\hat v\|_2^2
    = D_V^{(0)}.
  \]
\end{proof}

To compare against the weak objective, it is convenient to express this information requirement in a form that interacts with the chain rule over $(U,V)$. Under independence of $U$ and $V$, any dependence between $V$ and $\hat V$ also persists after conditioning on $U$, which we record next.

\begin{lemma}[Unconditional information implies conditional information under independence]
  \label{lem:cond-mi}
  Assume $U\perp V$. If $I(V;\hat V)>0$, then $I(V;\hat V\mid U)>0$.
\end{lemma}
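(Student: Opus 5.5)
Under $U\perp V$, the cleanest route is to prove the contrapositive using the chain rule for mutual information. Suppose $I(V;\hat V\mid U)=0$. We want to conclude $I(V;\hat V)=0$. Expand $I(V;U,\hat V)$ in two ways:
\[
I(V;U,\hat V) = I(V;U) + I(V;\hat V\mid U) = I(V;\hat V) + I(V;U\mid \hat V).
\]
Independence gives $I(V;U)=0$, and the hypothesis gives $I(V;\hat V\mid U)=0$, so the left-hand side vanishes. The right-hand side is a sum of two nonnegative terms, hence each is zero; in particular $I(V;\hat V)=0$. This is the contrapositive of the claim.

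Equivalently, one can phrase it as a direct inequality. The display shows
\[
I(V;\hat V\mid U) = I(V;\hat V) + I(V;U\mid\hat V) \ge I(V;\hat V),
\]
so any positive unconditional information yields positive conditional information. I would present this inequality as the main statement, since it is slightly stronger than what the lemma asks and is exactly the form later combined with the chain rule over $(U,V)$.

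The only point needing care is measure-theoretic: for general (possibly continuous or mixed) random vectors, the chain rule and nonnegativity of conditional mutual information must be valid. I would invoke the general definition of mutual information via Radon--Nikodym derivatives, or Dobrushin's theorem through finite partitions, where the chain rule holds whenever the relevant quantities are finite. If $I(V;\hat V)=\infty$, the inequality $I(V;\hat V\mid U)\ge I(V;\hat V)$ still holds in the extended sense, so no case split is needed beyond this remark. This technical point is the expected obstacle; the algebra itself is routine.
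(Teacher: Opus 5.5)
Your proof is correct, and it takes a different route from the paper.

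\textbf{The paper's route.} The paper also argues by contraposition, but at the level of distributions. From $I(V;\hat V\mid U)=0$ it gets $V\perp\hat V\mid U$, i.e.\ $p(v,\hat v\mid u)=p(v\mid u)\,p(\hat v\mid u)=p(v)\,p(\hat v\mid u)$ using $U\perp V$. Marginalizing over $u$ then gives $V\perp\hat V$; this is the contraction property of conditional independence. That argument is purely qualitative and relies on characterizing zero (conditional) mutual information as (conditional) independence.

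\textbf{What your route buys.} You encode the same fact in the chain rule by expanding $I(V;U,\hat V)$ in two orders. This avoids the independence characterization. More importantly, it yields the quantitative statement $I(V;\hat V\mid U)=I(V;U,\hat V)\ge I(V;\hat V)$ whenever $U\perp V$, which is more useful than the lemma itself. For instance, it gives Lemma~\ref{lem:uniform-v-info} in one line:
\[
I(V;\hat U,\hat V\mid U)=I(V;U,\hat U,\hat V)\ge I(V;\hat V)\ge R_V(\delta).
\]
The last step holds because $P_{\hat V\mid V}$ is a feasible kernel for the source $V$ at budget $\delta$. This bypasses the paper's per-$u$ decomposition and its appeal to convexity of $R_V$.

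\textbf{On the measure-theoretic point.} Your caution is more than needed. The Kolmogorov--Dobrushin chain rule $I(X;Y,Z)=I(X;Y)+I(X;Z\mid Y)$ holds for arbitrary random variables with values in $[0,\infty]$. In the contrapositive, the hypothesis already forces both terms of the first expansion to vanish, so no finiteness assumption enters.
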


\begin{proof}
  We prove the contrapositive. If $I(V;\hat V\mid U)=0$, then $V\perp \hat V\mid U$.
  Using $U\perp V$ and marginalizing over $U$ implies $V\perp \hat V$, hence $I(V;\hat V)=0$.
\end{proof}

We can now combine these ingredients. When $\Delta < D_V^{(0)}$, feasibility under $d_{\mathrm{strong}}$ forces $\hat V$ to carry positive information about $V$ (even given $U$), while $d_{\mathrm{weak}}$ imposes no such requirement. This yields a strict separation in optimal rates at the same distortion budget.

\begin{definition}[Rate-distortion function of the nuisance variable]
  \label{def:rv}
  Let $R_V(\delta)$ denote the Shannon rate-distortion function of the source $V$
  under squared error $\|v-\hat v\|_2^2$.
\end{definition}

\begin{lemma}[Uniform information cost below the zero-rate level]
  \label{lem:uniform-v-info}
  Assume $U\perp V$. For any reconstruction kernel producing $(\hat U,\hat V)$ from $(U,V)$,
  if
  \[
    \mathbb E\|V-\hat V\|_2^2 \le \delta,
  \]
  then
  \[
    I(V;\hat U,\hat V\mid U) \ge R_V(\delta).
  \]
\end{lemma}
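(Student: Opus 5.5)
The plan is to reduce the conditional quantity to an ordinary rate-distortion problem for $V$ by averaging over $U$. The main tool is the fact that, because $U\perp V$, the conditional law of $V$ given $U=u$ equals the marginal law $p_V$ for every $u$.

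First, I will drop $\hat U$. By the chain rule,
\[
I(V;\hat U,\hat V\mid U)=I(V;\hat V\mid U)+I(V;\hat U\mid U,\hat V)\ge I(V;\hat V\mid U).
\]
It therefore suffices to show $I(V;\hat V\mid U)\ge R_V(\delta)$.

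Second, I will disintegrate over $U$. For each $u$, let $r_u(\hat v\mid v)$ be the conditional law of $\hat V$ given $(U,V)=(u,v)$, obtained by marginalizing $\hat U$ out of the reconstruction kernel. Since $V\mid U=u\sim p_V$, the pair $(V,\hat V)$ given $U=u$ is exactly the joint induced by the source $p_V$ and the kernel $r_u$. Define the per-$u$ distortion
\[
\delta(u):=\mathbb E[\|V-\hat V\|_2^2\mid U=u].
\]
Then $I(V;\hat V\mid U=u)=\mathcal R(r_u)\ge R_V(\delta(u))$ by the definition of $R_V$ applied to the source $V$. Averaging over $U$ gives
\[
I(V;\hat V\mid U)=\mathbb E_U\big[I(V;\hat V\mid U=u)\big]\ge \mathbb E_U\big[R_V(\delta(U))\big].
\]

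Third, I will invoke convexity. The function $R_V$ is nonincreasing, and it is convex, by the standard time-sharing argument: mixing two kernels mixes the distortions linearly, while mutual information is convex in the kernel for a fixed source. Jensen's inequality then gives
\[
\mathbb E_U\big[R_V(\delta(U))\big]\ge R_V\big(\mathbb E_U[\delta(U)]\big).
\]
By hypothesis $\mathbb E_U[\delta(U)]=\mathbb E\|V-\hat V\|_2^2\le\delta$, so monotonicity yields $R_V(\mathbb E_U[\delta(U)])\ge R_V(\delta)$, which closes the chain.

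The main obstacle is the convexity step, together with the measure-theoretic details of conditioning. One must check that $r_u$ is a well-defined regular conditional kernel and that $R_V$ is convex on all of $[0,\infty)$, including where it equals $+\infty$ or where $\delta(u)$ is infinite. If this becomes delicate, I would replace Jensen with a direct mixing construction. Define the averaged kernel $\bar r(\hat v\mid v):=\int r_u(\hat v\mid v)\,p_U(du)$, which is the law of $\hat V$ given $V$ with $U$ marginalized out. It achieves distortion $\mathbb E\|V-\hat V\|_2^2\le\delta$, so $\mathcal R(\bar r)\ge R_V(\delta)$. Convexity of mutual information in the channel for a fixed input then gives $\mathbb E_U[\mathcal R(r_U)]\ge\mathcal R(\bar r)$, which is precisely the inequality $I(V;\hat V\mid U)\ge I(V;\hat V)$ valid when $U\perp V$. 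This route avoids needing convexity of $R_V$ as a separate fact.
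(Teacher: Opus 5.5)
Your proposal is correct and follows the paper's proof: condition on $U=u$, use $U\perp V$ to view each slice as a channel for the source $V$ with distortion $\delta(u)$, bound each slice by $R_V(\delta(u))$, and finish with Jensen (convexity of $R_V$) and monotonicity; your explicit chain-rule step that discards $\hat U$ makes precise what the paper leaves implicit. Your fallback route, $I(V;\hat U,\hat V\mid U)\ge I(V;\hat V\mid U)\ge I(V;\hat V)\ge R_V(\delta)$ using $I(V;U)=0$, is also valid and does not need convexity of $R_V$ at all.
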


\begin{proof}
  For each value of $U=u$, the conditional law of $(\hat U,\hat V)$ given $V$ defines a reconstruction channel for the source $V$,
  since $U\perp V$. Let
  \[
    \delta(u) := \mathbb E[\|V-\hat V\|_2^2\mid U=u].
  \]
  Then, by the definition of $R_V$,
  \[
    I(V;\hat U,\hat V\mid U=u) \ge R_V(\delta(u)).
  \]
  Averaging over $u$ and using convexity of the Shannon RD function gives
  \[
    I(V;\hat U,\hat V\mid U)
    \ge
    \mathbb E_U[R_V(\delta(U))]
    \ge
    R_V(\mathbb E_U[\delta(U)])
    \ge
    R_V(\delta),
  \]
  where the final inequality uses monotonicity of $R_V$.
\end{proof}

\begin{theorem}[Strict RD gap under nuisance variability]
  \label{thm:strict-gap}
  Assume $U\perp V$ and use $d_{\mathrm{weak}},d_{\mathrm{strong}}$ as above.
  % If $\Delta < D_V^{(0)}$, then
  If $\Delta < D_V^{(0)}$ and $R_V(\Delta)>0$, then
  \begin{equation}
    \label{eq:strict-gap}
    R_{d_{\mathrm{strong}}}(\Delta) \;>\; R_{d_{\mathrm{weak}}}(\Delta).
  \end{equation}
\end{theorem}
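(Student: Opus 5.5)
The plan is to prove a quantitative version of \eqref{eq:strict-gap}:
\[
R_{d_{\mathrm{strong}}}(\Delta) \;\ge\; R_{d_{\mathrm{weak}}}(\Delta) + R_V(\Delta).
\]
Strictness then follows from the hypothesis $R_V(\Delta)>0$. The argument works kernel by kernel and takes the infimum at the end.

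Fix any reconstruction kernel $r$ that is feasible for $R_{d_{\mathrm{strong}}}(\Delta)$. Write $a:=\mathbb E\|U-\hat U\|_2^2$ and $b:=\mathbb E\|V-\hat V\|_2^2$, so that $a+b\le\Delta$. Since both terms are nonnegative, $a\le\Delta$ and $b\le\Delta$.

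By the chain rule for mutual information,
\[
I(U,V;\hat U,\hat V) \;=\; I(U;\hat U,\hat V) + I(V;\hat U,\hat V\mid U) \;\ge\; I(U;\hat U) + I(V;\hat U,\hat V\mid U),
\]
where dropping $\hat V$ from the first term cannot increase it. The two terms are bounded separately.

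\emph{First term.} The kernel $U\mapsto\hat U$ achieves squared error $a\le\Delta$ on the source $U$. Hence $I(U;\hat U)\ge R_U(a)\ge R_U(\Delta)$, using that $R_U$ is nonincreasing. By \Cref{lem:weak-equals-u}, $R_U(\Delta)=R_{d_{\mathrm{weak}}}(\Delta)$.

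\emph{Second term.} Apply \Cref{lem:uniform-v-info} with $\delta=\Delta$; this is allowed since $b\le\Delta$ and $U\perp V$. It gives $I(V;\hat U,\hat V\mid U)\ge R_V(\Delta)$.

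Combining the two bounds, every feasible $r$ satisfies $I(X;\hat X)\ge R_{d_{\mathrm{weak}}}(\Delta)+R_V(\Delta)$. Taking the infimum over feasible $r$ yields the displayed inequality. If the strong feasible set is empty, then $R_{d_{\mathrm{strong}}}(\Delta)=+\infty$ and the claim holds trivially, provided the right-hand side is finite.

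The condition $\Delta<D_V^{(0)}$ is what makes $R_V(\Delta)>0$ plausible, via \Cref{lem:positive-info,lem:cond-mi}. Since $R_V(\Delta)>0$ is assumed explicitly, however, that condition is not needed beyond this role.

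The main obstacle is turning ``$\ge$ plus a positive constant'' into a strict inequality. This is valid only when $R_{d_{\mathrm{weak}}}(\Delta)=R_U(\Delta)<\infty$. For continuous $U$ at $\Delta=0$ both sides could be $+\infty$, and strictness would fail. I would therefore state explicitly, or check from standard assumptions, that $R_U(\Delta)$ is finite. This holds, for example, when $\Delta>0$ and $U$ has finite second moment, using a Gaussian test channel bound. A secondary point is to confirm that \Cref{lem:uniform-v-info} applies with the constraint $b\le\Delta$ rather than $b\le\delta$ for a separately chosen $\delta$. Monotonicity of $R_V$ handles this, and the lemma's own proof already invokes it.
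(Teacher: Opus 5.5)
Your proposal is correct and follows the paper's proof essentially step for step. Like the paper, you apply the chain rule, drop $\hat V$ to get $I(U;\hat U)\ge R_U(\Delta)=R_{d_{\mathrm{weak}}}(\Delta)$, bound the conditional term with Lemma~\ref{lem:uniform-v-info} at $\delta=\Delta$, and get strictness from $R_V(\Delta)>0$. Your two side remarks are also accurate points that the paper's proof passes over: the final strict step needs $R_U(\Delta)<\infty$, which fails for example at $\Delta=0$ with continuous $U$, and the hypothesis $\Delta<D_V^{(0)}$ is never actually used.
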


\begin{proof}
  Fix $\Delta < D_V^{(0)}$ and take any kernel feasible for $R_{d_{\mathrm{strong}}}(\Delta)$:
  \[
    \mathbb E\|U-\hat U\|_2^2 + \mathbb E\|V-\hat V\|_2^2 \le \Delta.
  \]
  In particular,
  \[
    \mathbb E\|U-\hat U\|_2^2 \le \Delta,
    \qquad
    \mathbb E\|V-\hat V\|_2^2 \le \Delta.
  \]
  By the chain rule and monotonicity of mutual information,
  \[
    I(U,V;\hat U,\hat V)
    =
    I(U;\hat U,\hat V)
    +
    I(V;\hat U,\hat V\mid U)
    \ge
    I(U;\hat U)
    +
    I(V;\hat U,\hat V\mid U).
  \]
  Since $\mathbb E\|U-\hat U\|_2^2\le\Delta$,
  \[
    I(U;\hat U)\ge R_U(\Delta).
  \]
  Since $\mathbb E\|V-\hat V\|_2^2\le\Delta$, Lemma~\ref{lem:uniform-v-info} gives
  \[
    I(V;\hat U,\hat V\mid U)\ge R_V(\Delta).
  \]
  Therefore every feasible kernel satisfies
  \[
    I(U,V;\hat U,\hat V)
    \ge
    R_U(\Delta)+R_V(\Delta).
  \]
  Taking the infimum over feasible kernels yields
  \[
    R_{d_{\mathrm{strong}}}(\Delta)
    \ge
    R_U(\Delta)+R_V(\Delta)
    >
    R_U(\Delta)
    =
    R_{d_{\mathrm{weak}}}(\Delta),
  \]
  where the strict inequality uses $R_V(\Delta)>0$ and the final equality follows from
  Lemma~\ref{lem:weak-equals-u}.
\end{proof}
% \begin{proof}
%   Fix $\Delta < D_V^{(0)}$ and take any kernel feasible for $R_{d_{\mathrm{strong}}}(\Delta)$:
%   \[
%     \mathbb{E}\|U-\hat U\|_2^2 + \mathbb{E}\|V-\hat V\|_2^2 \le \Delta.
%   \]
%   Then $\mathbb{E}\|V-\hat V\|_2^2 \le \Delta < D_V^{(0)}$, so \cref{lem:positive-info} gives $I(V;\hat V)>0$,
%   and \cref{lem:cond-mi} gives $I(V;\hat V\mid U)>0$.
%   By the chain rule,
%   \[
%     I(U,V;\hat U,\hat V)
%     = I(U;\hat U,\hat V) + I(V;\hat U,\hat V\mid U)
%     \ge I(U;\hat U) + I(V;\hat V\mid U)
%     > I(U;\hat U).
%   \]
%   Feasibility implies $\mathbb{E}\|U-\hat U\|_2^2 \le \Delta$, hence $I(U;\hat U)\ge R_U(\Delta)$ by definition of $R_U$.
%   Therefore $I(U,V;\hat U,\hat V)>R_U(\Delta)$ for every feasible kernel; taking infima yields
%   \[
%     R_{d_{\mathrm{strong}}}(\Delta)>R_U(\Delta)=R_{d_{\mathrm{weak}}}(\Delta),
%   \]
%   where the last equality follows from \cref{lem:weak-equals-u}.
% \end{proof}

\subsection{Application I: Perceptual (VGG) Feature MSE is Weaker than Pixel MSE}
\label{app:claim1:vgg}

We now instantiate the abstract domination relation Definition~\ref{def:domination} for a standard pair of objectives used in practice: Pixel MSE and Perceptual Feature MSE. The key observation is that if the feature map is Lipschitz on the data domain, then feature-space squared error is pointwise controlled by pixel-space squared error. This immediately translates into an RD ordering via \Cref{thm:rd-domination}.

Let $\mathcal{X}\subset \mathbb{R}^n$ be the data domain and let $\phi:\mathbb{R}^n\to\mathbb{R}^m$ be a deterministic feature map.

\begin{definition}[Pixel distortion]
  \label{def:pix-dist}
  The pixel-space squared-error distortion is
  \begin{equation}
    \label{eq:pix-dist}
    d_{\mathrm{pix}}(x,\hat x) := \|x-\hat x\|_2^2,
    \qquad x,\hat x\in\mathcal{X}.
  \end{equation}
\end{definition}

\begin{definition}[Feature distortion]
  \label{def:feat-dist}
  The feature-space squared-error distortion induced by $\phi$ is
  \begin{equation}
    \label{eq:feat-dist}
    d_{\phi}(x,\hat x) := \|\phi(x)-\phi(\hat x)\|_2^2,
    \qquad x,\hat x\in\mathcal{X}.
  \end{equation}
\end{definition}

To relate these distortions, we require a regularity condition on the feature map that rules out arbitrarily large feature changes induced by small pixel perturbations. We capture this with a Lipschitz bound on $\phi$ over the data domain.

\begin{assumption}[Lipschitz continuity on $\mathcal{X}$]
  \label{ass:lipschitz-vgg}
  There exists $L<\infty$ such that
  \begin{equation}
    \label{eq:lipschitz-vgg}
    \|\phi(x)-\phi(\hat x)\|_2 \le L\|x-\hat x\|_2
    \qquad \forall x,\hat x\in\mathcal{X}.
  \end{equation}
\end{assumption}

\begin{remark}
  Assumption~\ref{ass:lipschitz-vgg} is a standard regularity condition for neural feature maps on a bounded domain:
  for networks with Lipschitz nonlinearities, one can bound a global Lipschitz constant by the product of operator norms
  of the linear layers. In practice, such bounds are often enforced or encouraged by explicit constraints/regularizers
  (e.g., spectral normalization \cite{miyato2018spectralnormalizationgenerativeadversarial} in GAN discriminators).
\end{remark}

Under this assumption, the comparison reduces to a one-line inequality: the feature-space error is at most a constant multiple of the pixel-space error. We state this pointwise domination explicitly, since it is the sufficient condition needed by \Cref{thm:rd-domination}.

\begin{lemma}[Pointwise domination]
  \label{lem:vgg-domination}
  Under Assumption~\ref{ass:lipschitz-vgg},
  \begin{equation}
    \label{eq:vgg-domination}
    d_{\phi}(x,\hat x) \;\le\; L^2\, d_{\mathrm{pix}}(x,\hat x)\qquad \forall x,\hat x\in\mathcal{X}.
  \end{equation}
\end{lemma}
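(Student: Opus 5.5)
The plan is to derive the bound directly from Assumption~\ref{ass:lipschitz-vgg} by squaring both sides of the Lipschitz inequality. No information-theoretic argument is needed here, since the statement is purely pointwise.

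Fix arbitrary $x,\hat x\in\mathcal X$. Assumption~\ref{ass:lipschitz-vgg} gives $\|\phi(x)-\phi(\hat x)\|_2\le L\|x-\hat x\|_2$. Both sides are nonnegative, and $t\mapsto t^2$ is monotone nondecreasing on $[0,\infty)$. Squaring therefore preserves the inequality and yields
\[
\|\phi(x)-\phi(\hat x)\|_2^2 \le L^2\|x-\hat x\|_2^2 .
\]
By Definitions~\ref{def:feat-dist} and~\ref{def:pix-dist}, the left side is $d_\phi(x,\hat x)$ and the right side is $L^2 d_{\mathrm{pix}}(x,\hat x)$. Since $x,\hat x$ were arbitrary, this is exactly \eqref{eq:vgg-domination}. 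In the notation of Definition~\ref{def:domination} it reads $d_\phi\preceq_{L^2} d_{\mathrm{pix}}$, which can then be fed into Theorem~\ref{thm:rd-domination} with $c=L^2$ to recover the RD ordering stated in the main text.

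The only point needing care, rather than a real obstacle, is the domain. Definition~\ref{def:domination} quantifies over $\mathcal X\times\hat{\mathcal X}$, while the assumption holds only on $\mathcal X$. I would therefore take $\hat{\mathcal X}=\mathcal X$, so that reconstructions are also admissible images (for example, clipped to the normalized range). If $L=0$, the bound still holds trivially, but $c=L^2$ must be positive for Definition~\ref{def:domination}. In that degenerate case I would replace $L$ by any $L'>0$ with $L'\ge L$, which keeps the inequality valid.
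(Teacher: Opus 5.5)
Your proof is correct and matches the paper's, which simply squares the Lipschitz inequality of Assumption~\ref{ass:lipschitz-vgg}. Your side remarks are valid refinements the paper leaves implicit: take $\hat{\mathcal X}=\mathcal X$ so the domination is over the right domain, and in the degenerate case $L=0$ replace $L$ by some $L'>0$ so that $c=L^2>0$ as Definition~\ref{def:domination} requires.
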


\begin{proof}
  Square \eqref{eq:lipschitz-vgg}.
\end{proof}

Pointwise domination implies set inclusion of feasible reconstruction kernels at matched distortion budgets, and therefore an ordering of the corresponding Shannon RD functions. The following corollary records the resulting rate bound, including the explicit rescaling of the distortion axis.

\begin{corollary}[RD ordering for VGG feature MSE vs.\ pixel MSE]
  \label{cor:vgg-rd}
  Under Assumption~\ref{ass:lipschitz-vgg}, for all $\Delta\ge 0$,
  \begin{equation}
    \label{eq:vgg-rd}
    R_{d_{\phi}}(\Delta) \;\le\; R_{d_{\mathrm{pix}}}(\Delta/L^2).
  \end{equation}
  Equivalently, with $\tilde d_{\phi}:=d_\phi/L^2$,
  \begin{equation}
    \label{eq:vgg-rd-normalized}
    R_{\tilde d_{\phi}}(\Delta) \;\le\; R_{d_{\mathrm{pix}}}(\Delta).
  \end{equation}
\end{corollary}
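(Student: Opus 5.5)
The plan is to combine Lemma~\ref{lem:vgg-domination} with Theorem~\ref{thm:rd-domination}, and then obtain the normalized form by rescaling the distortion axis.

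\textbf{Step 1.} Under Assumption~\ref{ass:lipschitz-vgg}, Lemma~\ref{lem:vgg-domination} gives $d_\phi(x,\hat x)\le L^2 d_{\mathrm{pix}}(x,\hat x)$ for all $x,\hat x\in\mathcal X$. In the notation of Definition~\ref{def:domination}, this says $d_\phi\preceq_{L^2} d_{\mathrm{pix}}$, provided $L>0$.

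The degenerate case $L=0$ needs a separate argument. Then $\phi$ is constant on $\mathcal X$, so $d_\phi\equiv 0$. Every kernel is feasible, and in particular a constant reconstruction is, so $R_{d_\phi}(\Delta)=0$. Both claimed inequalities then hold trivially, reading $\Delta/L^2$ as $+\infty$ or simply noting that the right-hand side is nonnegative. Alternatively, any $L'>L$ is also a Lipschitz constant, so one may assume $L>0$ from the outset, at the cost of a slightly weaker constant.

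\textbf{Step 2.} Apply Theorem~\ref{thm:rd-domination} with $d_2=d_\phi$, $d_1=d_{\mathrm{pix}}$ and $c=L^2$. This yields $R_{d_\phi}(\Delta)\le R_{d_{\mathrm{pix}}}(\Delta/L^2)$ for every $\Delta\ge 0$, which is \eqref{eq:vgg-rd}.

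\textbf{Step 3.} For \eqref{eq:vgg-rd-normalized}, use the scaling identity $R_{d/c}(\Delta)=R_d(c\Delta)$. It holds because the feasible sets coincide: $\mathbb E[d/c]\le\Delta$ if and only if $\mathbb E[d]\le c\Delta$. With $d=d_\phi$ and $c=L^2$, this gives $R_{\tilde d_\phi}(\Delta)=R_{d_\phi}(L^2\Delta)$. Applying Step~2 at budget $L^2\Delta$ bounds this by $R_{d_{\mathrm{pix}}}(\Delta)$.

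A more direct route is also available: $\tilde d_\phi\preceq_1 d_{\mathrm{pix}}$ holds pointwise, and Theorem~\ref{thm:rd-domination} with $c=1$ gives the same conclusion.

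\textbf{Main obstacle.} There is essentially no technical obstacle. The only points requiring care are the $L=0$ convention and measurability of $d_\phi$, which follows from continuity of $\phi$ (implied by Lipschitzness). The substantive modeling question, whether a real VGG network satisfies Assumption~\ref{ass:lipschitz-vgg}, is taken as a hypothesis and is not part of this corollary.
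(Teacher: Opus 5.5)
Your proposal is correct and matches the paper's proof: you read Lemma~\ref{lem:vgg-domination} as $d_\phi\preceq_{L^2}d_{\mathrm{pix}}$ and apply Theorem~\ref{thm:rd-domination} with $c=L^2$. The scaling identity $R_{d/c}(\Delta)=R_d(c\Delta)$ for the normalized form and the degenerate case $L=0$ (where Definition~\ref{def:domination} needs $c>0$) are details the paper leaves implicit, and you handle both correctly.
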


\begin{proof}
  Lemma~\ref{lem:vgg-domination} is exactly $d_\phi \preceq_{L^2} d_{\mathrm{pix}}$ in the sense of Definition~\ref{def:domination}.
  Apply \Cref{thm:rd-domination} to obtain \eqref{eq:vgg-rd}.
\end{proof}

\subsection{Application II: Discriminator-Based Losses}
\label{app:claim1:disc}

We now extend the same comparison principle to discriminator-based objectives. The main step is to reinterpret commonly used generator-side losses as pairwise functions of $(x,\hat x)$ and then upper bound them (up to constants) by pixel MSE using standard Lipschitz-type regularity. Once such a pointwise bound is in place, the RD ordering follows directly from \Cref{thm:rd-domination,thm:rd-affine-domination}.

\subsubsection{Feature matching: intermediate discriminator features}
\label{app:claim1:disc-fm}
Let's start by defining feature matching loss.

\begin{definition}[Feature matching distortion]
  \label{def:fm-dist}
  Let $F_{\psi,\ell}:\mathcal{X}\to\mathbb{R}^{m_\ell}$ denote the activation vector at discriminator layer $\ell$.
  Given a set of layers $\mathcal{L}$ and weights $\{w_\ell\}_{\ell\in\mathcal{L}}$ with $w_\ell\ge 0$, the
  \emph{feature matching distortion} is defined by
  \begin{equation}
    \label{eq:fm-dist}
    d_{\mathrm{FM}}(x,\hat x)
    := \sum_{\ell\in\mathcal{L}} w_\ell \,\big\|F_{\psi,\ell}(x)-F_{\psi,\ell}(\hat x)\big\|_2^2.
  \end{equation}
\end{definition}

Again, to relate $d_{\mathrm{FM}}$ to pixel MSE, it suffices to control how much intermediate discriminator features can change under a perturbation in input space. We capture this via layer-wise Lipschitz bounds on $F_{\psi,\ell}$ over the data domain.

\begin{assumption}[Layer-wise Lipschitz discriminator features on $\mathcal{X}$]
  \label{ass:lipschitz-fm}
  For each $\ell\in\mathcal{L}$ there exists $L_\ell<\infty$ such that
  \begin{equation}
    \label{eq:lipschitz-fm}
    \big\|F_{\psi,\ell}(x)-F_{\psi,\ell}(\hat x)\big\|_2 \le L_\ell \|x-\hat x\|_2
    \qquad \forall x,\hat x\in\mathcal{X}.
  \end{equation}
\end{assumption}

\begin{remark}
  As in Assumption~\ref{ass:lipschitz-vgg}, Assumption~\ref{ass:lipschitz-fm} holds whenever each layer up to $\ell$
  has bounded operator norm and uses Lipschitz nonlinearities; it is common in practice to explicitly regularize
  discriminator Lipschitz constants (e.g., via spectral normalization).
\end{remark}

Under these bounds, each feature discrepancy is dominated by a constant multiple of the pixel discrepancy. Summing across layers yields a single domination constant for the entire feature matching objective.

\begin{lemma}[Feature matching is weaker than pixel MSE]
  \label{lem:fm-domination}
  Under Assumption~\ref{ass:lipschitz-fm},
  \begin{equation}
    \label{eq:fm-domination}
    d_{\mathrm{FM}}(x,\hat x) \;\le\; \Big(\sum_{\ell\in\mathcal{L}} w_\ell L_\ell^2\Big)\,\|x-\hat x\|_2^2
    \qquad \forall x,\hat x\in\mathcal{X}.
  \end{equation}
  Equivalently, $d_{\mathrm{FM}} \preceq_{c_{\mathrm{FM}}} d_{\mathrm{pix}}$ with
  $c_{\mathrm{FM}} := \sum_{\ell\in\mathcal{L}} w_\ell L_\ell^2$.
\end{lemma}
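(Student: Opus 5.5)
The plan is to reduce the claim to the Lipschitz estimate in Assumption~\ref{ass:lipschitz-fm}, applied layer by layer, and then sum with the nonnegative weights. This is the same template as Lemma~\ref{lem:vgg-domination}, now with a finite family of feature maps in place of a single one.

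\emph{Step 1 (per-layer bound).} Fix $x,\hat x\in\mathcal X$ and a layer $\ell\in\mathcal L$. Squaring \eqref{eq:lipschitz-fm} is legitimate because both sides are nonnegative, and it gives
\[
\big\|F_{\psi,\ell}(x)-F_{\psi,\ell}(\hat x)\big\|_2^2\le L_\ell^2\,\|x-\hat x\|_2^2 .
\]

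\emph{Step 2 (weighted sum).} Multiply each per-layer inequality by $w_\ell\ge 0$; nonnegativity of the weights keeps the direction of the inequality. Then sum over the finite set $\mathcal L$ to get
\[
d_{\mathrm{FM}}(x,\hat x)\le\Big(\sum_{\ell\in\mathcal L}w_\ell L_\ell^2\Big)\|x-\hat x\|_2^2 = c_{\mathrm{FM}}\, d_{\mathrm{pix}}(x,\hat x).
\]
This is \eqref{eq:fm-domination}.

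\emph{Step 3 (matching the definition).} Definition~\ref{def:domination} requires $c>0$. The constant $c_{\mathrm{FM}}$ is finite because $\mathcal L$ is finite and every $L_\ell<\infty$. If some $w_\ell L_\ell>0$, then $c_{\mathrm{FM}}>0$ and the bound above is exactly $d_{\mathrm{FM}}\preceq_{c_{\mathrm{FM}}}d_{\mathrm{pix}}$. If instead $c_{\mathrm{FM}}=0$, the bound forces $d_{\mathrm{FM}}\equiv 0$ on $\mathcal X\times\mathcal X$, so $d_{\mathrm{FM}}\preceq_c d_{\mathrm{pix}}$ holds for every $c>0$. I will note this degenerate case explicitly so that the ``equivalently'' in the statement is valid as written.

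There is no real obstacle: the argument is a one-line consequence of the assumption. The only care needed is in the bookkeeping of Steps 2 and 3: the nonnegativity of the weights, the finiteness of $\mathcal L$, and the positivity convention on the domination constant.
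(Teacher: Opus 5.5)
Your proposal is correct and follows the paper's proof exactly: square the per-layer Lipschitz bound, multiply by $w_\ell\ge 0$, and sum over $\mathcal L$. Your Step~3 remark on the degenerate case $c_{\mathrm{FM}}=0$, where $\preceq_{c}$ with $c>0$ would otherwise not literally apply, is a small bookkeeping detail that the paper omits.
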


\begin{proof}
  For each $\ell$, \eqref{eq:lipschitz-fm} implies
  $\|F_{\psi,\ell}(x)-F_{\psi,\ell}(\hat x)\|_2^2 \le L_\ell^2\|x-\hat x\|_2^2$.
  Multiply by $w_\ell$ and sum over $\ell\in\mathcal{L}$.
\end{proof}

This pointwise domination implies that any reconstruction rule meeting a pixel-MSE budget also meets an appropriately rescaled feature-matching budget. The corresponding RD ordering is an immediate application of \Cref{thm:rd-domination}.

\begin{corollary}[RD ordering for feature matching vs.\ pixel MSE]
  \label{cor:fm-rd}
  Under Assumption~\ref{ass:lipschitz-fm}, for all $\Delta\ge 0$,
  \begin{equation}
    \label{eq:fm-rd}
    R_{d_{\mathrm{FM}}}(\Delta) \;\le\; R_{d_{\mathrm{pix}}}\!\left(\frac{\Delta}{c_{\mathrm{FM}}}\right),
    \qquad c_{\mathrm{FM}}=\sum_{\ell\in\mathcal{L}} w_\ell L_\ell^2.
  \end{equation}
\end{corollary}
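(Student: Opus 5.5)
This corollary follows from two earlier results. Lemma~\ref{lem:fm-domination} gives the pointwise bound $d_{\mathrm{FM}}\preceq_{c_{\mathrm{FM}}} d_{\mathrm{pix}}$. \Cref{thm:rd-domination} then converts that bound into an ordering of the RD functions. The plan is to check that the hypotheses of \Cref{thm:rd-domination} hold and then substitute $d_2=d_{\mathrm{FM}}$, $d_1=d_{\mathrm{pix}}$ and $c=c_{\mathrm{FM}}$.

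First, we check that $d_{\mathrm{FM}}$ is a valid distortion in the sense of Definition~\ref{def:distortion-rate}. It is nonnegative, because it is a sum of squared norms with weights $w_\ell\ge 0$. It is measurable, because each $F_{\psi,\ell}$ is Lipschitz by Assumption~\ref{ass:lipschitz-fm} and therefore continuous.

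Second, we invoke Lemma~\ref{lem:fm-domination}. It gives $d_{\mathrm{FM}}(x,\hat x)\le c_{\mathrm{FM}}\,\|x-\hat x\|_2^2$ for every pair $(x,\hat x)\in\mathcal X\times\mathcal X$. Here the reconstruction space is $\hat{\mathcal X}=\mathcal X$, which is the domain on which the Lipschitz bounds were assumed.

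Third, we apply \Cref{thm:rd-domination}. Its feasible-set argument shows that any kernel with $\mathbb E\|X-\hat X\|_2^2\le\Delta/c_{\mathrm{FM}}$ also satisfies $\mathbb E[d_{\mathrm{FM}}(X,\hat X)]\le\Delta$. The infimum of $I(X;\hat X)$ over the larger feasible set cannot exceed the infimum over the smaller one, and this gives \eqref{eq:fm-rd}.

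The main obstacle is a degenerate case that \Cref{thm:rd-domination} does not cover. That theorem requires $c>0$, but $c_{\mathrm{FM}}=0$ is possible, for instance if all weights vanish or all $L_\ell=0$. In that case $d_{\mathrm{FM}}\equiv 0$. Then $R_{d_{\mathrm{FM}}}(\Delta)=0$, which is attained by a constant reconstruction, and the right-hand side of \eqref{eq:fm-rd} should be read with $\Delta/0=+\infty$, so the inequality holds trivially.

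A cleaner alternative is to replace $c_{\mathrm{FM}}$ with any $c'\ge c_{\mathrm{FM}}$ satisfying $c'>0$. The domination $d_{\mathrm{FM}}\preceq_{c'} d_{\mathrm{pix}}$ still holds for such $c'$. Since $R_{d_{\mathrm{pix}}}$ is nonincreasing, the resulting bound $R_{d_{\mathrm{FM}}}(\Delta)\le R_{d_{\mathrm{pix}}}(\Delta/c')$ still implies the one stated whenever $c_{\mathrm{FM}}>0$. I would state the standing convention $c_{\mathrm{FM}}>0$ and mention the trivial case separately.
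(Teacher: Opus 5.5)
Your proof is correct and is the same as the paper's, which is the one-line application of \Cref{thm:rd-domination} to Lemma~\ref{lem:fm-domination}. Your extra checks (measurability, $\hat{\mathcal X}=\mathcal X$, and the degenerate case $c_{\mathrm{FM}}=0$, which the paper silently excludes) are sound. One slip in your ``cleaner alternative'': for $c'>c_{\mathrm{FM}}>0$, the bound $R_{d_{\mathrm{FM}}}(\Delta)\le R_{d_{\mathrm{pix}}}(\Delta/c')$ is \emph{weaker} than the stated one, not stronger, because $\Delta/c'<\Delta/c_{\mathrm{FM}}$ and $R_{d_{\mathrm{pix}}}$ is nonincreasing. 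It therefore does not imply \eqref{eq:fm-rd}; when $c_{\mathrm{FM}}>0$, just take $c'=c_{\mathrm{FM}}$.
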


\begin{proof}
  Apply \Cref{thm:rd-domination} to Lemma~\ref{lem:fm-domination}.
\end{proof}

\subsubsection{Hinge generator loss: PatchGAN logits}
\label{app:claim1:disc-hinge}
We next consider the hinge generator objective. Unlike feature matching, the hinge loss is naturally defined on single samples; we therefore first convert it into a pairwise distortion by taking score differences, and then enforce nonnegativity by a constant shift. This produces a valid RD distortion while preserving the optimizer of the expected objective up to an additive constant.

%%%%%% LOWER REDO %%%%%%%%
\begin{definition}[PatchGAN scalar score and hinge generator loss]
  \label{def:patch-hinge}
  Let $D_\psi:\mathcal{X}\to\mathbb{R}^{H'\times W'}$ be a PatchGAN discriminator returning a grid of logits.
  Define the associated scalar score by
  \begin{equation}
    \label{eq:patch-score}
    s_\psi(x) := \frac{1}{H'W'}\sum_{i=1}^{H'}\sum_{j=1}^{W'} (D_\psi(x))_{ij}.
  \end{equation}
  The (generator) hinge loss is then
  \begin{equation}
    \label{eq:hinge-gen}
    \ell_{\mathrm{G}}(\hat x) := -\,s_\psi(\hat x).
  \end{equation}
\end{definition}

\begin{definition}[Score-difference Quantity]
    For paired samples $(x,\hat x)$ define the score-difference quantity
\begin{equation}
  \label{eq:hinge-raw}
  d_{\mathrm{hinge}}^{\mathrm{raw}}(x,\hat x)
  := \ell_{\mathrm{G}}(\hat x)-\ell_{\mathrm{G}}(x)
  = s_\psi(x)-s_\psi(\hat x).
\end{equation}
\end{definition}

Taking expectations yields
\begin{equation}
\mathbb{E}\!\left[d_{\mathrm{hinge}}^{\mathrm{raw}}(X,\hat X)\right]
= \mathbb{E}\!\left[\ell_{\mathrm{G}}(\hat X)\right]-\mathbb{E}\!\left[\ell_{\mathrm{G}}(X)\right].
\end{equation}

Since $X\sim p_X$ is fixed, the term $\mathbb{E}[\ell_{\mathrm{G}}(X)]$ is constant with respect to the reconstruction kernel. Consequently, minimizing $\mathbb{E}[\ell_{\mathrm{G}}(\hat X)]$ is equivalent to minimizing $\mathbb{E}[d_{\mathrm{hinge}}^{\mathrm{raw}}(X,\hat X)]$.

This difference form isolates exactly the component of the objective that depends on the reconstruction kernel. We therefore take the pairwise score difference as the fundamental quantity to be bounded.

% Then $\mathbb{E}[d_{\mathrm{hinge}}^{\mathrm{raw}}(X,\hat X)]
% = \mathbb{E}[\ell_{\mathrm{G}}(\hat X)]-\mathbb{E}[\ell_{\mathrm{G}}(X)]$,
% so minimizing $\mathbb{E}[\ell_{\mathrm{G}}(\hat X)]$ is equivalent to minimizing
% $\mathbb{E}[d_{\mathrm{hinge}}^{\mathrm{raw}}(X,\hat X)]$ since the second term is constant
% with respect to the reconstruction kernel.

% The difference form isolates the part of the objective that depends on the reconstruction kernel, since the term involving $x$ is constant under $p_X$. We will therefore work with the pairwise score difference as the basic quantity to be bounded.

%%%%%%% UPPER %%%%%%%%%

\paragraph{Nonnegativity and why shifting is the right abstraction.}
In RD theory, a distortion is required to be nonnegative pointwise.
The raw difference $d_{\mathrm{hinge}}^{\mathrm{raw}}(x,\hat x)$ is not guaranteed to be nonnegative for all $(x,\hat x)$,
especially early in training.
This is not an obstacle for our purposes: adding a constant shift produces a valid nonnegative distortion,
and its expectation differs from the original generator objective only by an additive constant.

\begin{assumption}[Bounded data domain]
  \label{ass:bounded-domain-hinge}
  The data domain is bounded, e.g.\ $\mathcal{X}\subset[-1,1]^n$.
\end{assumption}

Define the score range on $\mathcal{X}$:
\begin{equation}
  \label{eq:score-range}
  \Delta_\psi := \sup_{x\in\mathcal{X}} s_\psi(x) - \inf_{x\in\mathcal{X}} s_\psi(x) < \infty.
\end{equation}
Then the shifted distortion
\begin{equation}
  \label{eq:hinge-shift}
  \bar d_{\mathrm{hinge}}(x,\hat x) := d_{\mathrm{hinge}}^{\mathrm{raw}}(x,\hat x) + \Delta_\psi
\end{equation}
satisfies $\bar d_{\mathrm{hinge}}(x,\hat x)\ge 0$ for all $(x,\hat x)$.

To compare $\bar d_{\mathrm{hinge}}$ to pixel MSE, we require a stability condition on the discriminator score: small changes in the input should not induce arbitrarily large changes in the score. We encode this with a Lipschitz bound on $s_\psi$ over $\mathcal{X}$.

\begin{assumption}[Lipschitz discriminator score on $\mathcal{X}$]
  \label{ass:lipschitz-score-hinge}
  There exists $L_\psi<\infty$ such that
  \begin{equation}
    \label{eq:lipschitz-score}
    |s_\psi(x)-s_\psi(\hat x)| \le L_\psi \,\|x-\hat x\|_2
    \qquad \forall x,\hat x\in\mathcal{X}.
  \end{equation}
\end{assumption}

With bounded score range and Lipschitz continuity, $\bar d_{\mathrm{hinge}}$ admits an affine upper bound in terms of $\|x-\hat x\|_2^2$ by a standard Young's inequality argument. This places the hinge distortion within the affine-domination framework.

\begin{lemma}[Affine domination of shifted hinge distortion by pixel MSE]
  \label{lem:hinge-affine-domination}
  Under Assumptions~\ref{ass:bounded-domain-hinge} and~\ref{ass:lipschitz-score-hinge}, there exist $c>0$ and $b\ge 0$ such that
  \begin{equation}
    \label{eq:hinge-affine-domination}
    \bar d_{\mathrm{hinge}}(x,\hat x) \le c\,\|x-\hat x\|_2^2 + b
    \qquad \forall x,\hat x\in\mathcal{X}.
  \end{equation}
  In particular, one valid choice is $c=L_\psi^2/2$ and $b=\Delta_\psi + 1/2$.
\end{lemma}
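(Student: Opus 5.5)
The plan is to bound the raw score difference by the pixel distance using Assumption~\ref{ass:lipschitz-score-hinge}, then convert the resulting \emph{linear} bound in $\|x-\hat x\|_2$ into a \emph{quadratic} one using Young's inequality. The additive slack absorbs the constant terms.

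Fix $(x,\hat x)\in\mathcal X\times\mathcal X$. By definition,
\[
\bar d_{\mathrm{hinge}}(x,\hat x)=s_\psi(x)-s_\psi(\hat x)+\Delta_\psi \le |s_\psi(x)-s_\psi(\hat x)|+\Delta_\psi \le L_\psi\|x-\hat x\|_2+\Delta_\psi .
\]
The last step is exactly the Lipschitz bound \eqref{eq:lipschitz-score}. Note that $\Delta_\psi<\infty$ needs to be justified. It follows from Assumption~\ref{ass:bounded-domain-hinge} together with Lipschitzness: for any $x,x'\in\mathcal X$ we have $|s_\psi(x)-s_\psi(x')|\le L_\psi\operatorname{diam}(\mathcal X)$, and $\operatorname{diam}(\mathcal X)\le 2\sqrt n$ for $\mathcal X\subset[-1,1]^n$. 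So $\Delta_\psi$ is finite and the shift in \eqref{eq:hinge-shift} is well defined.

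Next I would apply Young's inequality $ab\le \tfrac{a^2}{2}+\tfrac{b^2}{2}$ with $a=L_\psi\|x-\hat x\|_2$ and $b=1$. This gives
\[
L_\psi\|x-\hat x\|_2\le \tfrac{L_\psi^2}{2}\|x-\hat x\|_2^2+\tfrac12 .
\]
Combining the two displays yields \eqref{eq:hinge-affine-domination} with $c=L_\psi^2/2$ and $b=\Delta_\psi+1/2$.

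If $L_\psi=0$, then $c=0$, which violates $c>0$ in Definition~\ref{def:affine-domination}. In that case the score is constant and any $c>0$ works, since the bound only becomes looser.

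More generally, I would note for the main-text statement (``for every $\alpha>0$'') that the weighted form of Young's inequality, $L_\psi t\le \alpha t^2+L_\psi^2/(4\alpha)$, gives $c=\alpha$ with $b=\Delta_\psi+L_\psi^2/(4\alpha)$. The stated choice corresponds to $\alpha=L_\psi^2/2$.

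There is no real obstacle. The only delicate points are (i) making sure $\Delta_\psi$ is finite, which is where boundedness is genuinely used, and (ii) the trade-off between the scale constant $c$ and the shift $b$. Neither ties the result to a specific choice: any $c>0$ is admissible at the cost of a larger $b$.

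The RD consequence then follows immediately from Theorem~\ref{thm:rd-affine-domination}, since $\bar d_{\mathrm{hinge}}\ge 0$ is a valid distortion.
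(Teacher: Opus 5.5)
Your proof is correct and follows the paper's argument essentially line for line: the triangle inequality, then the Lipschitz bound on $s_\psi$, then Young's inequality to replace $L_\psi\|x-\hat x\|_2$ by $\tfrac{L_\psi^2}{2}\|x-\hat x\|_2^2+\tfrac12$, which gives the same constants $c=L_\psi^2/2$ and $b=\Delta_\psi+1/2$. Your side remarks are valid refinements that the paper glosses over: finiteness of $\Delta_\psi$ really uses Lipschitzness together with boundedness, and the stated choice gives $c=0$ when $L_\psi=0$, which is harmless since then $\bar d_{\mathrm{hinge}}\equiv 0$.
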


\begin{proof}
  By \eqref{eq:hinge-shift} and the triangle inequality,
  \[
    \bar d_{\mathrm{hinge}}(x,\hat x)
    = s_\psi(x)-s_\psi(\hat x) + \Delta_\psi
    \le |s_\psi(x)-s_\psi(\hat x)| + \Delta_\psi.
  \]
  By \eqref{eq:lipschitz-score},
  \[
    \bar d_{\mathrm{hinge}}(x,\hat x)\le \Delta_\psi + L_\psi \|x-\hat x\|_2.
  \]
  Apply Young's inequality $a t \le \frac{a^2}{2}t^2 + \frac{1}{2}$ with $a=L_\psi$ and $t=\|x-\hat x\|_2$ to obtain
  \[
    L_\psi \|x-\hat x\|_2 \le \frac{L_\psi^2}{2}\|x-\hat x\|_2^2 + \frac{1}{2}.
  \]
  Combining the last two displays yields \eqref{eq:hinge-affine-domination} with $c=L_\psi^2/2$ and $b=\Delta_\psi+1/2$.
\end{proof}

Affine domination again implies inclusion of feasible sets after an appropriate shift and rescaling of the distortion budget. The resulting RD bound follows by invoking \Cref{thm:rd-affine-domination}.

\begin{corollary}[RD ordering for shifted hinge distortion vs.\ pixel MSE]
  \label{cor:hinge-rd}
  Under the assumptions of Lemma~\ref{lem:hinge-affine-domination}, for all $\Delta\ge b$,
  \begin{equation}
    \label{eq:hinge-rd}
    R_{\bar d_{\mathrm{hinge}}}(\Delta) \;\le\; R_{d_{\mathrm{pix}}}\!\left(\frac{\Delta-b}{c}\right),
    \qquad c=\frac{L_\psi^2}{2},\;\; b=\Delta_\psi+\frac{1}{2}.
  \end{equation}
\end{corollary}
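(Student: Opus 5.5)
This corollary is a direct instance of Theorem~\ref{thm:rd-affine-domination}, applied to the affine bound supplied by Lemma~\ref{lem:hinge-affine-domination}. I would first note that $\bar d_{\mathrm{hinge}}$ is a valid distortion in the sense of Definition~\ref{def:distortion-rate}.

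\emph{Validity.} For all $(x,\hat x)\in\mathcal X\times\mathcal X$ we have $s_\psi(x)-s_\psi(\hat x)\ge -\Delta_\psi$, so by \eqref{eq:hinge-shift} $\bar d_{\mathrm{hinge}}$ is nonnegative. Assumption~\ref{ass:bounded-domain-hinge} together with Lipschitzness makes $\Delta_\psi$ finite, so $\bar d_{\mathrm{hinge}}$ is real-valued. It is measurable because $s_\psi$ is continuous, being Lipschitz. Hence $R_{\bar d_{\mathrm{hinge}}}$ is well defined. I also take $\hat{\mathcal X}=\mathcal X$, so that the domination inequality covers every pair on which the kernels are supported.

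\emph{Domination.} Lemma~\ref{lem:hinge-affine-domination} gives the pointwise bound $\bar d_{\mathrm{hinge}}\le c\,d_{\mathrm{pix}}+b$ with $c=L_\psi^2/2$ and $b=\Delta_\psi+1/2$. This is exactly $\bar d_{\mathrm{hinge}}\preceq_{c,b} d_{\mathrm{pix}}$ in the sense of Definition~\ref{def:affine-domination}.

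\emph{Conclusion.} Theorem~\ref{thm:rd-affine-domination} requires $c>0$. This fails only if $L_\psi=0$. In that case $s_\psi$ is constant, so $\bar d_{\mathrm{hinge}}\equiv\Delta_\psi=0\le b$, and the left-hand side $R_{\bar d_{\mathrm{hinge}}}(\Delta)$ equals $0$ for every $\Delta\ge b$, which makes the claim trivial. For $L_\psi>0$ we have $c>0$ and $b\ge0$, and Theorem~\ref{thm:rd-affine-domination} yields, for all $\Delta\ge b$,
\[
R_{\bar d_{\mathrm{hinge}}}(\Delta)\le R_{d_{\mathrm{pix}}}\!\left(\frac{\Delta-b}{c}\right).
\]

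The only real obstacle is the degenerate case $c=0$ and the well-definedness checks above; the inequality itself follows immediately from the feasible-set inclusion in Theorem~\ref{thm:rd-affine-domination}.
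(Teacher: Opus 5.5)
Your proposal is correct and matches the paper's proof: the paper reads Lemma~\ref{lem:hinge-affine-domination} as $\bar d_{\mathrm{hinge}}\preceq_{c,b} d_{\mathrm{pix}}$ and applies Theorem~\ref{thm:rd-affine-domination}, exactly as you do. Your extra checks are sound refinements that the paper leaves implicit: nonnegativity, finiteness and measurability of $\bar d_{\mathrm{hinge}}$, and the degenerate case $L_\psi=0$, where $c=0$.
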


\begin{proof}
  Lemma~\ref{lem:hinge-affine-domination} states $\bar d_{\mathrm{hinge}} \preceq_{c,b} d_{\mathrm{pix}}$ in the sense of Definition~\ref{def:affine-domination}.
  Apply \Cref{thm:rd-affine-domination}.
\end{proof}

\subsection{Generalization to common neural losses}
\label{app:generalization}

The preceding derivations are not specific to VGG or PatchGAN. They only require the neural distortion to be weaker than pixel SSE in the sense of Definition~\ref{def:affine-domination}. Concretely, let \(d_N\) be any neural reconstruction term. If there exist constants \(c_N>0\) and \(b_N\ge 0\) such that
\[
d_N(x,\hat{x}) \le c_N d_{\mathrm{pix}}(x,\hat{x}) + b_N
\qquad \forall x,\hat{x}\in\mathcal X,
\]
then \(d_N \preceq_{c_N,b_N} d_{\mathrm{pix}}\), and Theorem~\ref{thm:rd-affine-domination} gives
\[
R_{d_N}(\Delta)
\le
R_{d_{\mathrm{pix}}}\!\left(\frac{\Delta-b_N}{c_N}\right),
\qquad \Delta \ge b_N .
\]

This condition is architecture-agnostic. It holds for the usual feature-space losses whenever the admissible image domain \(\mathcal X\) is bounded, the feature maps used by the loss are Lipschitz on \(\mathcal X\), and the loss combines them with nonnegative weights. In particular, any loss of the form
\[
d_N(x,\hat{x})
=
\sum_j w_j \| f_j(x)-f_j(\hat{x}) \|_2^2,
\qquad w_j\ge 0,
\]
satisfies the condition with \(b_N=0\) whenever each \(f_j\) is \(L_j\)-Lipschitz, since then
\[
d_N(x,\hat{x})
\le
\left(\sum_j w_j L_j^2\right)d_{\mathrm{pix}}(x,\hat{x}).
\]
This covers standard perceptual-feature losses, discriminator feature matching, and representation losses based on fixed neural encoders.

Score-based adversarial losses fit the same framework after rewriting the single-sample generator objective as a pairwise score difference and adding a constant shift, as in Section~\ref{app:claim1:disc-hinge}. The required assumptions are again only boundedness of \(\mathcal X\), finite score range, and Lipschitz continuity of the score function on \(\mathcal X\).

Finally, the class is closed under nonnegative mixtures: if \(d_j\preceq_{c_j,b_j}d_{\mathrm{pix}}\) and \(a_j\ge 0\), then
\[
\sum_j a_j d_j
\preceq_{\sum_j a_j c_j,\ \sum_j a_j b_j}
d_{\mathrm{pix}} .
\]
Thus the RD ordering applies to the mixed reconstruction objectives used in practice, independently of the particular neural architecture, as long as these regularity and nonnegativity assumptions hold. As throughout this appendix, this is a statement about the ideal Shannon RD function; the achieved KL of a finite VAE also depends on amortization, optimization, and loss scaling.

\newpage

\FloatBarrier
% ============================================================
% CLAIM 2
% ============================================================

\section{Distortion shapes latent uncertainty}
\label{app:claim2-distortion-shapes-uncertainty}

This section supports the claim in Section~\ref{sec:claim2}: a fixed achieved rate does \emph{not} uniquely determine
how uncertainty is distributed across latent dimensions, and the choice of distortion can directly shape the posterior's
\emph{uncertainty profile}. The formal results below are proved under simplified assumptions; in particular,
\ref{app:toymodel} analyzes an exactly solvable linear-Gaussian toy model.

\subsection{Rate does not determine anisotropy of the posterior}
\label{app:claim2:rate-not-determine}

We use the notation of Section~\ref{sec:background}. Our goal is to show that the \emph{rate} (in particular, the variance contribution to the KL) does not uniquely determine the shape of an individual diagonal posterior. Concretely, the variance term is a single scalar summary of a $D$-dimensional variance vector, leaving room for substantial redistribution across coordinates. We begin by stating a standard equality in mathematical statistics without proving it again:

\begin{lemma}[KL decomposition for diagonal Gaussian vs.\ isotropic prior]
\label{lem:claim2:kl-decomp}
For $q_\phi(z\mid x)=\mathcal N(\mu_\phi(x),\mathrm{diag}(\sigma_\phi^2(x)))$ and $p(z)=\mathcal N(0,I_D)$,
\begin{equation}
  \label{eq:claim2:kl-decomp}
\mathrm{KL}\big(q_\phi(z\mid x)\,\|\,p(z)\big)
=\underbrace{\frac12\|\mu_\phi(x)\|_2^2}_{:=\mathrm{KL}_\mu(x)}
+
\underbrace{\frac12\sum_{i=1}^D\big(\sigma_{\phi,i}^2(x)-1-\log \sigma_{\phi,i}^2(x)\big)}_{:=\mathrm{KL}_{\sigma^2}(x)}.
\end{equation}
\end{lemma}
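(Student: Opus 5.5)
The plan is a direct computation: use the closed-form KL between two multivariate Gaussians, specialize it to a diagonal posterior covariance and the identity prior covariance, and then collect terms. Recall that for $q=\mathcal N(\mu_q,\Sigma_q)$ and $p=\mathcal N(\mu_p,\Sigma_p)$ on $\mathbb R^D$,
\[
\mathrm{KL}(q\,\|\,p)=\tfrac12\Big(\operatorname{tr}(\Sigma_p^{-1}\Sigma_q)+(\mu_p-\mu_q)^\top\Sigma_p^{-1}(\mu_p-\mu_q)-D+\log\tfrac{\det\Sigma_p}{\det\Sigma_q}\Big).
\]
If we do not want to quote this formula, we can derive it quickly. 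Write $\mathrm{KL}=\mathbb E_q[\log q(z)-\log p(z)]$, expand both Gaussian log-densities, and note that the normalizing constants $(2\pi)^{-D/2}$ cancel. The expected quadratic forms follow from $\mathbb E_q[(z-m)^\top A(z-m)]=\operatorname{tr}(A\Sigma_q)+(\mu_q-m)^\top A(\mu_q-m)$. With $m=\mu_q$ and $A=\Sigma_q^{-1}$ this quadratic term equals $D$.

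Next we substitute $\mu_q=\mu_\phi(x)$, $\Sigma_q=\mathrm{diag}(\sigma^2_\phi(x))$, $\mu_p=0$ and $\Sigma_p=I_D$. Each term then simplifies:
\begin{itemize}
\item the trace becomes $\sum_i\sigma^2_{\phi,i}(x)$;
\item the quadratic form becomes $\|\mu_\phi(x)\|_2^2$;
\item the log-determinant ratio becomes $-\sum_i\log\sigma^2_{\phi,i}(x)$.
\end{itemize}
Finally, we write $-D$ as $-\sum_{i=1}^D 1$ and regroup everything into one sum over $i$. This gives exactly $\tfrac12\|\mu_\phi(x)\|_2^2+\tfrac12\sum_i(\sigma^2_{\phi,i}-1-\log\sigma^2_{\phi,i})$, i.e.\ $\mathrm{KL}_\mu(x)+\mathrm{KL}_{\sigma^2}(x)$.

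A cleaner alternative uses independence. Both $q$ and $p$ factorize over coordinates, so the KL is a sum of $D$ univariate KLs. Each one is $\tfrac12(\sigma_i^2+\mu_i^2-1-\log\sigma_i^2)$, and summing these gives the same decomposition.

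There is no real obstacle here; the only thing to check is the bookkeeping. We must assign the mean term to $\mathrm{KL}_\mu$ and keep the $-1$ inside the variance term. It is worth noting, as a sanity check, that each summand $\sigma^2-1-\log\sigma^2$ is nonnegative and vanishes at $\sigma^2=1$. This confirms that the decomposition splits the KL into a nonnegative mean part and a nonnegative variance part.
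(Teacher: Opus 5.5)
Your proposal is correct and takes the same route the paper indicates. The paper gives no explicit proof; it only remarks that the lemma follows from the closed-form KL between Gaussians, which is exactly the computation you carry out, and your coordinatewise factorization into univariate KLs is an equally valid shortcut.
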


\begin{remark}
Lemma~\ref{lem:claim2:kl-decomp} follows from the closed-form KL between Gaussians and is included for notation.
\end{remark}

We isolate the variance-dependent part of the KL by introducing a scalar penalty applied coordinate-wise. This lets us view $\mathrm{KL}_{\sigma^2}(x)$ as an additive cost over the per-dimension variances.

\begin{definition}[Scalar variance penalty]
\label{def:claim2:g}
Define $g:(0,\infty)\to\mathbb R$ by $$g(s):=\frac12(s-1-\log s) $$ then
$\mathrm{KL}_{\sigma^2}(x)=\sum_{i=1}^D g(\sigma_{\phi,i}^2(x))$.
\end{definition}

Averaging this variance penalty over the data distribution yields the variance component of the overall rate. We will use it as the scalar quantity that is often implicitly treated as controlling posterior “spread.’’

\begin{definition}[Variance part of the rate]
Define the variance-part rate:
\begin{equation}
  \label{eq:claim2:variance-rate}
  R_{\sigma^2} := \mathbb E_{x\sim p_X}\big[\mathrm{KL}_{\sigma^2}(x)\big].
\end{equation}    
\end{definition}

To capture how the variance is distributed across latent coordinates, we introduce a simple anisotropy statistic for a single posterior covariance. Since the encoder is diagonal, anisotropy reduces to variation in the log-variances across coordinates.

\begin{definition}[Per-sample posterior-variance anisotropy]
  \label{def:claim2:post-aniso}
  For $s\in(0,\infty)^D$, define
  \begin{equation}
    \label{eq:claim2:post-aniso}
    A_{\mathrm{post}}(s)
    :=
    \mathrm{Var}_{i\in\{1,\dots,D\}}\big(\log s_i\big).
  \end{equation}
  For a data point $x$, we write $A_{\mathrm{post}}(x):=A_{\mathrm{post}}(\sigma_\phi^2(x))$.
  For $D=2$, $A_{\mathrm{post}}(s)=\tfrac14(\log s_1-\log s_2)^2$.
\end{definition}

\begin{remark}[Single-posterior vs aggregated-posterior anisotropy]
  \label{rem:claim2:single-vs-agg}
  The statistic $A_{\mathrm{post}}(x)$ measures anisotropy of a \emph{single} diagonal posterior covariance.
  This is distinct from the aggregated-posterior anisotropy $A_{\mathrm{shape}}(\Sigma)$,
  which measures eigenvalue non-uniformity of the dataset-level covariance $\Sigma=\mathrm{Cov}_{q_\phi(z)}[Z]$.
\end{remark}

With these definitions in place, the key point is that $\mathrm{KL}_{\sigma^2}(x)$ constrains only the \emph{sum} of coordinate-wise penalties. Consequently, many distinct variance vectors can achieve the same variance-KL while exhibiting different dispersion patterns across dimensions. The quantity $\mathrm{KL}_{\sigma^2}(x)=\sum_i g(\sigma_{\phi,i}^2(x))$ is a single scalar constraint on a $D$-vector of variances.
For $D\ge 2$, one can redistribute this ``variance-KL budget'' across coordinates while changing the dispersion of
$\{\log \sigma_{\phi,i}^2(x)\}_i$. The next lemma formalizes this degrees-of-freedom argument: even at fixed variance-KL budget, the per-sample posterior can be made more or less anisotropic by reallocating variance across coordinates.

\begin{remark}[Fixed variance-KL does not fix posterior anisotropy]
  \label{lem:claim2:levelset}
  Fix $D\ge 2$ and any constant $C>0$. There exist $s,s'\in(0,\infty)^D$ such that
  \[
    \sum_{i=1}^D g(s_i)=\sum_{i=1}^D g(s'_i)=C,
\qquad\text{but}\qquad
    A_{\mathrm{post}}(s)\neq A_{\mathrm{post}}(s').
\]
\end{remark}

\subsection{Toy Model of Anisotropy}
\label{app:toymodel}

In this section we instantiate the intuition of \Cref{sec:claim2} in a fully solvable linear--Gaussian model.
We show that the reconstruction distortion can directly fix the posterior variance profile, and hence anisotropy, even when the variance part of the KL is matched.

\paragraph{Setup.}
Fix a data point $x\in\mathbb{R}^n$ and a posterior mean $\mu\in\mathbb{R}^D$.
We consider diagonal-Gaussian posteriors
\[
  q(z\mid x) = \mathcal N\!\big(\mu,\operatorname{diag}(s)\big),
  \qquad
  s=(s_1,\dots,s_D)\in(0,\infty)^D.
\]
The decoder is linear,
\[
  \hat x(z) = Wz,\qquad W\in\mathbb{R}^{n\times D},
\]
and reconstructions are evaluated through a squared metric
\[
  d_M(x,z) := \big\|M(x-Wz)\big\|_2^2,
  \qquad M\in\mathbb{R}^{m\times n}.
\]
For fixed $x,\mu,W,M$ and KL weight $\beta>0$ we optimize the ``variance-only'' objective
\begin{equation}
  \label{eq:claim2:variance-only-obj-final}
  \mathcal L_{M,\beta}(s)
  := \mathbb E_{Z\sim\mathcal N(\mu,\operatorname{diag}(s))}[d_M(x,Z)]
     + \beta\sum_{i=1}^D g(s_i),
\end{equation}
where $g(s)=\tfrac12(s-1-\log s)$ is the variance contribution to the KL (see \eqref{eq:claim2:kl-decomp}).

\begin{proposition}[Expected distortion and closed-form optimizer]
  \label{prop:claim2:opt}
  Let $G:=M^\top M\succeq 0$ and $B:=W^\top G W\succeq 0$, and denote $c_i := B_{ii}\ge 0$.
  Then:
  \begin{enumerate}
    \item For $Z\sim\mathcal N(\mu,\operatorname{diag}(s))$,
    \begin{equation}
      \label{eq:claim2:expect-final}
      \mathbb E[d_M(x,Z)]
      = (x-W\mu)^\top G (x-W\mu)
        + \sum_{i=1}^D c_i s_i
      = \mathrm{const}(x,\mu) + \sum_{i=1}^D c_i s_i .
    \end{equation}
    \item The objective can be written as
    \begin{equation}
      \label{eq:claim2:separable-final}
      \mathcal L_{M,\beta}(s)
      = \mathrm{const}(x,\mu) + \sum_{i=1}^D \ell_i(s_i),
      \qquad
      \ell_i(s) := c_i s + \beta g(s).
    \end{equation}
    Each $\ell_i$ is strictly convex on $(0,\infty)$, hence $\mathcal L_{M,\beta}$ has a unique minimizer $s^*(M,\beta)\in(0,\infty)^D$.
    \item The minimizer is given in closed form by
    \begin{equation}
      \label{eq:claim2:s-star-final}
      s_i^*(M,\beta)
      = \frac{1}{1+2c_i/\beta}
      = \frac{\beta}{\beta+2c_i},
      \qquad i=1,\dots,D.
    \end{equation}
  \end{enumerate}
\end{proposition}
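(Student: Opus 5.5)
The plan is to prove the three items in order, since each builds on the previous one. For item 1, I would write $Z=\mu+\varepsilon$ with $\varepsilon\sim\mathcal N(0,\operatorname{diag}(s))$. Then $x-WZ=(x-W\mu)-W\varepsilon$ and
\[
d_M(x,Z)=(x-W\mu)^\top G(x-W\mu)-2(x-W\mu)^\top G W\varepsilon+\varepsilon^\top B\varepsilon .
\]
Taking expectations, the cross term vanishes because $\mathbb E[\varepsilon]=0$. The quadratic term equals $\operatorname{tr}(B\,\mathbb E[\varepsilon\varepsilon^\top])=\operatorname{tr}(B\operatorname{diag}(s))=\sum_i B_{ii}s_i=\sum_i c_i s_i$. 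This gives \eqref{eq:claim2:expect-final}. Along the way I would record that $G=M^\top M\succeq0$ implies $B=W^\top GW\succeq0$, so that $c_i=e_i^\top Be_i=\|MWe_i\|_2^2\ge0$.

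For item 2, I would substitute item 1 into \eqref{eq:claim2:variance-only-obj-final} and use $\sum_i g(s_i)$ from Definition~\ref{def:claim2:g}. This immediately gives the separable form $\mathrm{const}(x,\mu)+\sum_i\ell_i(s_i)$ with $\ell_i(s)=c_is+\tfrac\beta2(s-1-\log s)$. For strict convexity, I would compute $\ell_i''(s)=\beta/(2s^2)>0$ on $(0,\infty)$, which uses $\beta>0$. For existence of a minimizer, I would note the boundary behaviour $\ell_i(s)\to+\infty$ both as $s\to0^+$ (from the $-\log s$ term) and as $s\to\infty$ (the linear term has coefficient $c_i+\beta/2>0$). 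A continuous, strictly convex, coercive function on $(0,\infty)$ has exactly one minimizer. Since the sum is separable, the joint minimizer is the vector of coordinatewise minimizers and is unique.

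For item 3, I would solve the first-order condition $\ell_i'(s)=c_i+\tfrac\beta2(1-1/s)=0$. This gives $1/s=1+2c_i/\beta$, i.e.\ $s_i^*=\beta/(\beta+2c_i)\in(0,1]$. Strict convexity makes this stationary point the global minimizer.

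There is no real obstacle here. The only points needing care are the trace identity in item 1, which relies on the posterior covariance being diagonal so that only the $B_{ii}$ survive, and stating coercivity explicitly so that uniqueness does not rest on convexity alone.
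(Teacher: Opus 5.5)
Your proposal is correct and follows the paper's proof essentially step for step: you write $Z=\mu+\varepsilon$, note that the cross term vanishes, and use the trace identity $\operatorname{tr}(B\operatorname{diag}(s))=\sum_i c_i s_i$; you get strict convexity from $\ell_i''(s)=\beta/(2s^2)>0$ together with divergence at both ends of $(0,\infty)$; and you solve the first-order condition to obtain $s_i^*=\beta/(\beta+2c_i)$. Your explicit coercivity argument and the identity $c_i=\|MWe_i\|_2^2\ge 0$ spell out the existence and nonnegativity claims more fully than the paper does, but they do not change the route.
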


\begin{proof}[Proof]
  Write $Z=\mu+\varepsilon$ with $\varepsilon\sim\mathcal N\!\big(0,\operatorname{diag}(s)\big)$ and expand
  \[
    d_M(x,Z) = (x-W\mu-W\varepsilon)^\top G (x-W\mu-W\varepsilon).
  \]
  The cross term vanishes in expectation, and
  $\mathbb E[\varepsilon^\top B\varepsilon]
   = \operatorname{tr}\big(B\,\operatorname{diag}(s)\big)
   = \sum_i c_i s_i$, which yields \eqref{eq:claim2:expect-final} and the separable form \eqref{eq:claim2:separable-final}.

  Since $g''(s)=1/(2s^2)>0$ for $s>0$, each $\ell_i$ is strictly convex, diverges at $0^+$ and $+\infty$, and therefore admits a unique minimizer on $(0,\infty)$; this gives existence and uniqueness of $s^*$.
  Differentiating
  $\ell_i'(s)=c_i+\frac{\beta}{2}(1-1/s)$,
  setting $\ell_i'(s)=0$, and solving for $s$ yields \eqref{eq:claim2:s-star-final}.
  Strict convexity ensures that this critical point is the global minimizer.
\end{proof}

We use the following anisotropy functional on variance profiles:
\begin{equation}
  \label{eq:claim2:Apost-def}
  A_{\mathrm{post}}(s)
  := \operatorname{Var}_{i=1,\dots,D}\big(\log s_i\big),
\end{equation}
which vanishes iff all $s_i$ are equal and grows with the spread of the log-variances.

\begin{corollary}[Distortion controls posterior-variance anisotropy]
  \label{cor:claim2:anis}
  Let $s^*(M,\beta)$ be as in Proposition~\ref{prop:claim2:opt}.
  \begin{enumerate}
    \item If all $c_i$ are equal, then all $s_i^*(M,\beta)$ coincide and $A_{\mathrm{post}}\big(s^*(M,\beta)\big)=0$ (isotropic posterior).
    \item If the coefficients $c_i$ are not all equal, then the entries of $s^*(M,\beta)$ are not all equal and $A_{\mathrm{post}}\big(s^*(M,\beta)\big)>0$ (anisotropic posterior).
  \end{enumerate}
\end{corollary}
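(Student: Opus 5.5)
The plan is to read both parts directly off the closed-form optimizer \eqref{eq:claim2:s-star-final} of Proposition~\ref{prop:claim2:opt}. The key observation is that the map $c\mapsto \beta/(\beta+2c)$ is strictly decreasing on $[0,\infty)$ for fixed $\beta>0$, hence injective. Its derivative is $-2\beta/(\beta+2c)^2<0$. Everything else is elementary bookkeeping about the variance of a finite list of numbers.

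For part 1, assume $c_1=\dots=c_D=c$. Then by \eqref{eq:claim2:s-star-final} every coordinate satisfies $s_i^*=\beta/(\beta+2c)$. The log-variances $\log s_i^*$ are therefore all equal to one value, and the empirical variance in \eqref{eq:claim2:Apost-def} is zero.

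For part 2, assume there exist indices $j\neq k$ with $c_j\neq c_k$. Injectivity of $c\mapsto\beta/(\beta+2c)$ gives $s_j^*\neq s_k^*$. Since $\log$ is injective on $(0,\infty)$, it also gives $\log s_j^*\neq\log s_k^*$. It remains to check that the empirical variance of a finite list of reals, $\frac1D\sum_i(a_i-\bar a)^2$, vanishes if and only if all $a_i$ equal $\bar a$. A list with two distinct entries must contain some $a_i\neq\bar a$, so one summand is strictly positive and $A_{\mathrm{post}}(s^*)>0$.

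There is no real obstacle here. The only point requiring care is to use that each $c_i\ge 0$ (because $B=W^\top GW\succeq0$) and that $\beta>0$. Together these keep the denominators $\beta+2c_i$ strictly positive, so the monotonicity argument is valid on the whole relevant domain. I would state this explicitly before invoking injectivity.
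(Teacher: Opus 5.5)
Your proposal is correct and is the argument the paper intends. The paper states the corollary without a written proof, as an immediate consequence of the closed form $s_i^*=\beta/(\beta+2c_i)$ in Proposition~\ref{prop:claim2:opt}; your injectivity-of-$c\mapsto\beta/(\beta+2c)$ and zero-variance-iff-constant reasoning makes that step explicit, including the check that $c_i\ge 0$ and $\beta>0$ keep the denominators positive.
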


We now show that, for a fixed distortion $M$, every positive target value of the variance part of the KL can be obtained by a suitable choice of $\beta$.

\begin{theorem}[Surjectivity of the variance-KL map]
  \label{thm:claim2:surjectivity}
  Fix $W,M$ and assume at least one $c_i>0$.
  Define, for $\beta>0$,
  \begin{equation}
    \label{eq:claim2:C-M-def}
    C_M(\beta)
    := \sum_{i=1}^D g\big(s_i^*(M,\beta)\big),
    \qquad
    s_i^*(M,\beta) = \frac{\beta}{\beta+2c_i}.
  \end{equation}
  Then $C_M:(0,\infty)\to(0,\infty)$ is continuous and strictly decreasing, with
  \[
    \lim_{\beta\to\infty} C_M(\beta) = 0,
    \qquad
    \lim_{\beta\to 0^+} C_M(\beta) = +\infty.
  \]
  Consequently, for every target variance-KL level $C_0>0$ there exists a unique $\beta>0$ such that $C_M(\beta)=C_0$.
\end{theorem}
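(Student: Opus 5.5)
The plan is to reduce everything to one-dimensional statements about the composition $h_i(\beta):=g(s_i^*(M,\beta))$, because $C_M(\beta)=\sum_i h_i(\beta)$. Coordinates with $c_i=0$ have $s_i^*\equiv 1$. Since $g(1)=0$, these contribute nothing and can be discarded. For $c_i>0$, the map $\beta\mapsto s_i^*(M,\beta)=\beta/(\beta+2c_i)$ is continuous and strictly increasing on $(0,\infty)$, with range $(0,1)$, tending to $0$ as $\beta\to0^+$ and to $1$ as $\beta\to\infty$.

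Next I will use the scalar penalty $g$ from Definition~\ref{def:claim2:g}. It is continuous on $(0,\infty)$ with $g'(s)=\tfrac12(1-1/s)<0$ on $(0,1)$, so $g$ is strictly decreasing there. It also satisfies $g(s)\to+\infty$ as $s\to0^+$ (from the $-\log s$ term) and $g(s)\to g(1)=0$ as $s\to1^-$. Composing, each $h_i$ with $c_i>0$ is continuous and strictly decreasing (strictly increasing followed by strictly decreasing), positive, and has limits $h_i(0^+)=+\infty$ and $h_i(\infty)=0$.

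Summing over $i$, $C_M$ is a finite sum of continuous nonincreasing functions, at least one of which is strictly decreasing by the assumption that some $c_i>0$. Hence $C_M$ is continuous and strictly decreasing. It is strictly positive because each summand is nonnegative and the one with $c_i>0$ is positive, so $C_M$ indeed maps into $(0,\infty)$. For the limits, every summand tends to $0$ as $\beta\to\infty$, so the finite sum does too. As $\beta\to0^+$, the sum is bounded below by the summand with $c_i>0$, which diverges, so $C_M(\beta)\to+\infty$.

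Finally, existence of $\beta$ with $C_M(\beta)=C_0$ for any $C_0>0$ follows from the intermediate value theorem. Pick $\beta_1$ small with $C_M(\beta_1)>C_0$ and $\beta_2$ large with $C_M(\beta_2)<C_0$, using the two limits. Uniqueness follows from strict monotonicity. I expect no real obstacle here; the only point needing care is the bookkeeping for coordinates with $c_i=0$. These give constant zero terms, so strictness must come from a coordinate with $c_i>0$, which the hypothesis guarantees.
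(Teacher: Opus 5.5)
Your proposal is correct and follows essentially the same argument as the paper. Both use the monotonicity and limits of $s_i^*(M,\beta)$ in $\beta$, composition with $g$ (strictly decreasing on $(0,1]$, $g(1)=0$, $g(s)\to+\infty$ as $s\to 0^+$), summation over coordinates, and continuity plus strict monotonicity to get the unique $\beta$; your explicit treatment of the $c_i=0$ coordinates and of the positivity of $C_M$ is slightly more careful bookkeeping than the paper provides, not a different route.
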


\begin{proof}[Proof]
  From \eqref{eq:claim2:s-star-final},
  $s_i^*(M,\beta)$ is continuous in $\beta$ and satisfies
  $0<s_i^*(M,\beta)\le 1$ for all $\beta>0$, with
  $s_i^*(M,\beta)\uparrow 1$ as $\beta\to\infty$ and
  $s_i^*(M,\beta)\downarrow 0$ as $\beta\to 0^+$ whenever $c_i>0$.
  On $(0,1]$ the function $g$ is continuous and strictly decreasing, hence
  $g\big(s_i^*(M,\beta)\big)$ is continuous and strictly decreasing in $\beta$ for any $i$ with $c_i>0$.
  Summing over $i$ yields continuity and strict monotonicity of $C_M(\beta)$.

  The limits follow from $g(1)=0$ and $g(s)\to+\infty$ as $s\to 0^+$:
  as $\beta\to\infty$, all $s_i^*(M,\beta)\to 1$ and $C_M(\beta)\to 0$; as $\beta\to 0^+$, any coordinate with $c_i>0$ has $s_i^*(M,\beta)\to 0^+$, forcing $C_M(\beta)\to+\infty$.
  Continuity and strict monotonicity imply that $C_M$ is a bijection from $(0,\infty)$ onto $(0,\infty)$.
\end{proof}

Finally, we combine Corollary~\ref{cor:claim2:anis} and Theorem~\ref{thm:claim2:surjectivity} to construct two models that achieve the same variance-KL but have different anisotropy.

\begin{theorem}[Same variance-KL, different anisotropy in the toy model]
  \label{thm:claim2:toy-main}
  Fix $D\ge 2$ and let $W=I_D$.
  There exist two distortion matrices $M_{\mathrm{iso}}$ and $M_{\mathrm{aniso}}$ and two KL weights $\beta_{\mathrm{iso}},\beta_{\mathrm{aniso}}>0$ such that
  \begin{align}
    \sum_{i=1}^D g\big(s_i^*(M_{\mathrm{iso}},\beta_{\mathrm{iso}})\big)
    &=
    \sum_{i=1}^D g\big(s_i^*(M_{\mathrm{aniso}},\beta_{\mathrm{aniso}})\big),
    \label{eq:claim2:equal-rate}
    \\
    A_{\mathrm{post}}\big(s^*(M_{\mathrm{iso}},\beta_{\mathrm{iso}})\big)
    &\neq
    A_{\mathrm{post}}\big(s^*(M_{\mathrm{aniso}},\beta_{\mathrm{aniso}})\big).
    \label{eq:claim2:different-anisotropy}
  \end{align}
\end{theorem}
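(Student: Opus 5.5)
We prove the statement by an explicit construction that combines Corollary~\ref{cor:claim2:anis} with Theorem~\ref{thm:claim2:surjectivity}. With $W=I_D$ we have $B=M^\top M$, so $c_i=\|M e_i\|_2^2$. Take $M_{\mathrm{iso}}=I_D$, which gives $c_i=1$ for every $i$. Take $M_{\mathrm{aniso}}=\operatorname{diag}(a_1,\dots,a_D)$ with positive entries that are not all equal, for instance $a_i=2^{i}$ (the exponential ramp used in Section~\ref{sec:toy}). Then $c_i=a_i^2>0$, and these coefficients are not all equal. Both matrices satisfy the hypothesis of Theorem~\ref{thm:claim2:surjectivity}, since at least one $c_i>0$.

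Next, fix any target level $C_0>0$, say $C_0=1$. By Theorem~\ref{thm:claim2:surjectivity}, each map $C_{M_{\mathrm{iso}}}$ and $C_{M_{\mathrm{aniso}}}$ is a bijection from $(0,\infty)$ onto $(0,\infty)$. Hence there exist unique $\beta_{\mathrm{iso}},\beta_{\mathrm{aniso}}>0$ with $C_{M_{\mathrm{iso}}}(\beta_{\mathrm{iso}})=C_0=C_{M_{\mathrm{aniso}}}(\beta_{\mathrm{aniso}})$. This is exactly \eqref{eq:claim2:equal-rate}.

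For \eqref{eq:claim2:different-anisotropy}, we apply Corollary~\ref{cor:claim2:anis}. Part 1 gives $A_{\mathrm{post}}(s^*(M_{\mathrm{iso}},\beta_{\mathrm{iso}}))=0$. Part 2 gives $A_{\mathrm{post}}(s^*(M_{\mathrm{aniso}},\beta_{\mathrm{aniso}}))>0$. The latter can also be checked directly: the map $c\mapsto \beta/(\beta+2c)$ is strictly decreasing in $c$ for fixed $\beta>0$. So unequal $c_i$ produce unequal $s_i^*$, the log-variances are not all equal, and their variance is strictly positive. The two anisotropies therefore differ.

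No step is a serious obstacle; the argument is bookkeeping. The only point needing care is that both models are required to hit the \emph{same} variance-KL value. This is why we invoke the surjectivity onto all of $(0,\infty)$ for each $M$ separately, rather than trying to use a common $\beta$. A common $\beta$ would generally give different rates.
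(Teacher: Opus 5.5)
Your proposal is correct and follows the paper's proof essentially step for step: take $M_{\mathrm{iso}}=I_D$ and a diagonal $M_{\mathrm{aniso}}$ with unequal entries, invoke Theorem~\ref{thm:claim2:surjectivity} separately for each metric to reach a common target $C_0$, and use Corollary~\ref{cor:claim2:anis} to get zero versus strictly positive anisotropy. The differences are minor. The paper perturbs a single entry, $M_{\mathrm{aniso}}=\operatorname{diag}(1,\alpha,1,\dots,1)$ with $\alpha\neq 1$, where you use a full exponential ramp. Your direct check that $c\mapsto\beta/(\beta+2c)$ is strictly decreasing also supplies the justification for part 2 of the corollary, which the paper states without proof.
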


\begin{proof}
  Take $M_{\mathrm{iso}} = I_D$.
  Then $G=I_D$ and $B=W^\top G W = I_D$, so $c_i^{\mathrm{iso}} = 1$ for all $i$.
  By Corollary~\ref{cor:claim2:anis}, the corresponding optimizer $s^*(M_{\mathrm{iso}},\beta)$ is isotropic and satisfies
  $A_{\mathrm{post}}\big(s^*(M_{\mathrm{iso}},\beta)\big)=0$ for every $\beta>0$.

  Next choose any $\alpha>0$ with $\alpha\neq 1$ and set
  \[
    M_{\mathrm{aniso}} = \operatorname{diag}(1,\alpha,1,\dots,1).
  \]
  With $W=I_D$, the associated coefficients are
  $c_1^{\mathrm{aniso}} = 1$, $c_2^{\mathrm{aniso}} = \alpha^2$, and $c_i^{\mathrm{aniso}} = 1$ for $i\ge 3$, so they are not all equal.
  By Corollary~\ref{cor:claim2:anis}, $A_{\mathrm{post}}\big(s^*(M_{\mathrm{aniso}},\beta)\big)>0$ for every $\beta>0$.

  Let $C_0>0$ be arbitrary.
  Applying \Cref{thm:claim2:surjectivity} to each distortion separately yields unique
  $\beta_{\mathrm{iso}},\beta_{\mathrm{aniso}}>0$ such that
  \[
    C_{M_{\mathrm{iso}}}(\beta_{\mathrm{iso}})
    =
    C_{M_{\mathrm{aniso}}}(\beta_{\mathrm{aniso}})
    = C_0,
  \]
  which is exactly \eqref{eq:claim2:equal-rate}.
  At these weights, the variance-KL levels match by construction, while
  $A_{\mathrm{post}}$ is zero for $M_{\mathrm{iso}}$ and strictly positive for $M_{\mathrm{aniso}}$, giving \eqref{eq:claim2:different-anisotropy}.
\end{proof}

This toy construction provides a clean analytic example of the phenomenon illustrated empirically in \Cref{sec:claim2}: even when the scalar rate (variance-KL) is held fixed, the distortion function can, through the coefficients $c_i=(W^\top M^\top M W)_{ii}$, reshape the uncertainty profile and anisotropy of the learned posterior.

\begin{figure}[t]
  \centering
  \includegraphics[width=0.8\linewidth]{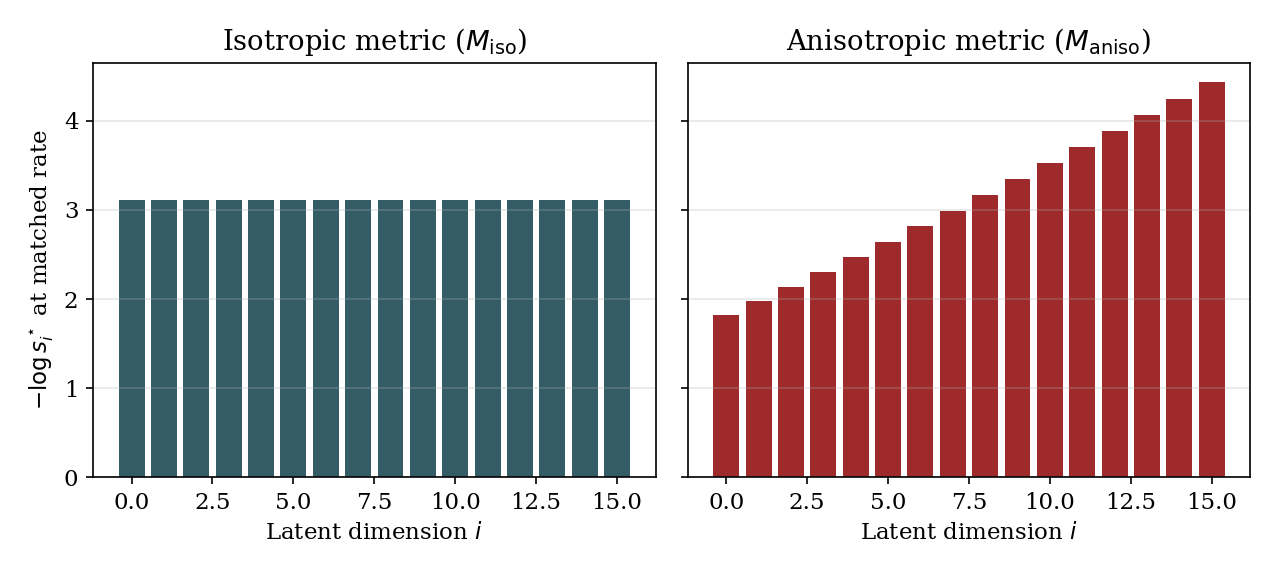}
  \caption{\textbf{Toy linear-Gaussian model: per-dimension certainty at matched rate.}
We instantiate the construction of \Cref{app:toymodel} with $D=16$, $W=I_D$ and compare the optimal posterior variances $s_i^\star(M,\beta)$ obtained for the isotropic metric $M_{\mathrm{iso}}$ and the anisotropic metric $M_{\mathrm{aniso}}$ at a common variance-KL operating point, i.e., for $\beta_{\mathrm{iso}}$ and $\beta_{\mathrm{aniso}}$ such that $\sum_i g(s_i^\star)$ matches the same target level.
For the isotropic metric (left) all bars coincide, confirming that the optimal posterior is isotropic whenever $c_i$ is constant across dimensions (Corollary~\ref{cor:claim2:anis}).
For the anisotropic metric (right), the precision increases smoothly with the latent index, illustrating that changing only the distortion (and not the rate) is sufficient to reshape the uncertainty profile and induce a highly anisotropic posterior, as formalized in \Cref{thm:claim2:toy-main}.}
  \label{fig:barplot}
\end{figure}

\paragraph{Numerical instantiation and summary table.}
The construction of \Cref{thm:claim2:toy-main} is implemented numerically in
\Cref{tab:claim2:toy-results}.
We fix $D=16$ and $W = I_D$, and consider two distortions:
an \emph{isotropic} metric $M_{\mathrm{iso}} = I_D$, which yields
$c_i^{\mathrm{iso}} = 1$ for all $i$, and an \emph{anisotropic} metric
$M_{\mathrm{aniso}} = \mathrm{diag}(w_1,\dots,w_D)$ with
\[
  w_i = \alpha^{(i-1)/(D-1)},\qquad \alpha = 4.0,
\]
which induces a smoothly varying coefficient profile
$c_i^{\mathrm{aniso}} = w_i^2$.
For each metric we evaluate the closed-form solution
$s_i^*(M,\beta) = \beta/(\beta+2c_i)$ on a log-spaced grid
$\beta\in[10^{-2},10^{2}]$ and, for each $\beta$, compute
\[
  R_{\sigma^2}(\beta)
  := \sum_{i=1}^D g\big(s_i^*(M,\beta)\big)
  = C_M(\beta),
  \qquad
  A_{\mathrm{post}}(\beta)
  := \operatorname{Var}_i\big(\log s_i^*(M,\beta)\big).
\]

We then choose a target rate $C_0$ in the overlap of the ranges of
$R_{\sigma^2}(\beta)$ for the two metrics, and select
$\beta_{\mathrm{iso}}$ and $\beta_{\mathrm{aniso}}$ on the grid that minimize
$|R_{\sigma^2}(\beta) - C_0|$ for $M_{\mathrm{iso}}$ and $M_{\mathrm{aniso}}$,
respectively.
\Cref{tab:claim2:toy-results} reports this target $C_0$
(``Target $\sum_i g(s_i^*)$''), the corresponding achieved values
$R_{\sigma^2}(\beta_{\mathrm{iso}})$ and
$R_{\sigma^2}(\beta_{\mathrm{aniso}})$
(``Achieved $\sum_i g(s_i^*)$''), and the posterior anisotropy
$A_{\mathrm{post}}$ at those operating points.
The small discrepancy between target and achieved rate is purely due to the
finite resolution of the $\beta$ grid.
As predicted by Corollary~\ref{cor:claim2:anis} and Theorem~\ref{thm:claim2:toy-main}, the isotropic
metric attains zero anisotropy (up to numerical precision), whereas the
anisotropic metric attains a strictly positive $A_{\mathrm{post}}$ at
essentially the same variance-KL level.

\begin{table}[t]
  \centering
  \caption{Quantitative toy-model summary }
  \label{tab:claim2:toy-results}
  \begin{tabular}{lccc}
    \toprule
    Distortion & Target $\sum_i g(s_i^*)$ & Achieved $\sum_i g(s_i^*)$ & Mean $A_{\mathrm{post}}$ \\
    \midrule
    Pixel MSE ($M=I$) & 17.26 & 17.24 & 0 \\
    Perceptual metric ($M\neq I$) & 17.26 & 17.25 & 0.6486 \\
    \bottomrule
  \end{tabular}
\end{table}

\FloatBarrier

%%%%%%%%%%%%%%%%%%%%%%%%%%%%%%%%%%%%%%%%%%%%%%%%%%%%%%%%%%%%

% \clearpage 
% \input{checklist.tex}

\end{document}